\newtheorem{theorem}{Theorem}
\newtheorem*{theorem*}{Theorem}
\newtheorem{lemma}{Lemma}
\newtheorem*{lemma*}{Lemma}
\newcommand{\ten}[1]{\bm{\mathcal{#1}}}
\newcommand{\tuk}[4]{\ten{#1}(\bm{#2},\bm{#3},\bm{#4})}
\newcommand{\point}{\bm{\mathcal{S}}(\bm A, \bm B, \bm C)}
\newcommand{\tup}{\ten S, \bm A, \bm B, \bm C}
\newcommand{\diff}{\tuk{S}{A}{B}{C}-\ten T}
\newcommand{\so}{O^*}
\newcommand{\somega}{\Omega^*}
\newcommand{\stheta}{\Theta^*}
\newcommand{\reg}{l} 
\title{Optimization Landscape of Tucker Decomposition}
\author{Abraham Frandsen\thanks{Duke University} \and Rong Ge\thanks{Duke University}}
\begin{document}
\maketitle

\begin{abstract}
Tucker decomposition is a popular technique for many data analysis and machine learning applications. Finding a Tucker decomposition is a nonconvex optimization problem. As the scale of the problems increases, local search algorithms such as stochastic gradient descent have become popular in practice. In this paper, we characterize the optimization landscape of the Tucker decomposition problem. In particular, we show that if the tensor has an exact Tucker decomposition, for a standard nonconvex objective of Tucker decomposition, all local minima are also globally optimal. We also give a local search algorithm that can find an approximate local (and global) optimal solution in polynomial time.
\end{abstract}

\section{Introduction}

\newcommand{\R}{\mathbb{R}}

Tensor decompositions have been widely applied in data analysis and machine learning. In this paper we focus on Tucker decomposition~\citep{hitchcock1927expression,tucker1966some}. Tucker decomposition has been applied to TensorFaces~\citep{vasilescu2002multilinear}, data compression~\citep{wang2004compact}, handwritten digits~\citep{savas2007handwritten} and more recently to word embeddings~\citep{frandsen2019understanding}.

Unlike CP/PARAFAC~\citep{carroll1970analysis,harshman1970foundations} decomposition, Tucker decomposition can be computed efficiently if the original tensor has low rank. For example, this can be done by high-order SVD~\citep{de2000multilinear}. Many other algorithms have  also been proposed for tensor Tucker decomposition, see for example~\citep{de2000best,elden2009newton,phan2014fast}. 

In modern applications, the dimension of the tensor and the amount of data available are often quite large. In practice, simple local search algorithms such as stochastic gradient descent are often used. Even for matrix problems where exact solutions can be computed, local search algorithms are often applied directly to a nonconvex objective~\citep{koren2009bellkor,recht2013parallel}. Recently, a line of work~\citep{ge2015escaping,bhojanapalli2016global,sun2016complete,ge2016matrix,sun2016geometric,bandeira2016low} showed that although these problems have nonconvex objectives, they can still be solved by local search algorithms, because they have a simple {\em optimization landscape}. In particular, for matrix problems such as matrix sensing~\citep{bhojanapalli2016global,park2016non} and matrix completion~\citep{ge2016matrix,ge2017no}, 
it was shown that all local minima are globally optimal. Similar results were also known for special cases of tensor CP decomposition~\citep{ge2015escaping}.

In this paper, we prove similar results for Tucker decomposition. Given a tensor $\ten T \in \R^{d\times d\times d}$ with multilinear rank $(r,r,r)$, the Tucker decomposition of the tensor $\ten T$ has the form
$$
\ten T = \tuk{S^*}{A^*}{B^*}{C^*},
$$
where $\ten S^*\in \R^{r\times r\times r}$ is a core tensor, $\bm A^*, \bm B^*, \bm C^*\in \R^{r\times d}$ are three components (factor matrices). The notation $\tuk{S^*}{A^*}{B^*}{C^*}$ is a multilinear form defined in Section~\ref{sec:tensordef}. 

To find a Tucker decomposition by local search, the most straight-forward idea is to directly optimize the following nonconvex objective:
$$
L(\ten S,\bm A,\bm B,\bm C) = \|\ten T - \tuk{S}{A}{B}{C}\|_F^2.
$$

Clearly, $(\ten S^*,\bm A^*, \bm B^*, \bm C^*)$ is a global minimizer. However, since the optimization problem is nonconvex, it is unclear whether any local search algorithm can efficiently find a globally optimal solution. Our first result (Theorem~\ref{thm:exact}) shows that with an appropriate regularizer (designed in Section~\ref{sec:optimizationproblem}), all local minima of Tucker decomposition are globally optimal. 

The main difficulty of analyzing the optimization landscape of Tucker decomposition comes from the existence of {\em high order saddle points}. For example, when $\ten S, \bm A,\bm  B,\bm C$ are all equal to 0, any local movement of norm $\epsilon$ will only change the objective by at most $O(\epsilon^4)$. 
Characterizing the possible locations of such high order saddle points, and showing that they cannot become local minima is one of the major technical contributions of this paper.

In general, even if all local minima are globally optimal, a local search algorithm may still fail to find a global optimal solution due to high order saddle points. In the worst case it is known that 3rd order saddle points can be handled efficiently, while 4th order saddle point are hard to escape from~\citep{anandkumar2016efficient}. The objective $L$ has 4th order saddle points. However, our next result (Theorem~\ref{thm:robust}) shows that there is a specifically designed local search algorithm that can find an approximate global optimal solution in polynomial time. 

\section{Preliminaries}


\subsection{Tensor Notation and Basic Facts}
\label{sec:tensordef}
We use bold lower-case letters like $\bm{u}$ to denote vectors, bold upper-case letters like $\bm A$ to denote matrices, and bold caligraphic upper-case letters like $\ten T$ to denote tensors. We reserve the symbol $\bm I$ to denote the identity matrix; its particular dimension will be clear from context. 
Given a third order tensor $\ten S \in \mathbb{R}^{r_1\times r_2 \times r_3}$ and matrices $\bm A \in \mathbb{R}^{r_1\times d_1}$, $\bm B \in \mathbb{R}^{r_2\times d_2}$,
$\bm C \in \mathbb{R}^{r_3\times d_3}$, we define $\tuk{S}{A}{B}{C}  \in \mathbb{R}^{d_1\times d_2\times d_3}$ by 
\[
[\tuk{S}{A}{B}{C}]_{ijk} = \sum_{xyz} \ten S_{xyz}\bm A_{xi}\bm B_{yj}\bm C_{zk}.
\]
In the special case where one or more of $r_1, r_2, r_3$ equals $1$, we view $\tuk{S}{A}{B}{C}$ appropriately as a matrix, column vector, or scalar.
We equip $\mathbb{R}^{d_1\times d_2\times d_3}$ with the Frobenius inner product $\langle \cdot, \cdot \rangle$  and associated norm $\|\cdot \|_F$ given by
\begin{align*}
\langle \ten X, \ten Y \rangle &= \sum_{i,j,k = 1}^{d_1,d_2,d_3} \ten X_{ijk}\ten Y_{ijk} & \|\ten X\|_F = \sqrt{\langle \ten X, \ten X \rangle}
\end{align*}
We also define the operator $2$-norm $\|\cdot \|_2$ (i.e. the spectral norm) by
\[
\|\ten X\|_2 = \sup\left\{\ten X (\bm u,\bm v,\bm w) \,:\, \|\bm u\|_2 = \|\bm v\|_2 = \|\bm w\|_2 = 1\right\} 
\]
These two norms are related as follows \citep{wang2017operator}:$$
\left(\frac{\max(d_1,d_2,d_2)}{d_1d_2d_3}\right)^{1/2}\|\ten X\|_F \leq \|\ten X\|_2 \leq \|\ten X\|_F.
$$
In the special case of $d_1=d_2=d_3 = d$, we have $\|\ten X\|_F \leq d\|\ten X\|_2$.
Another important fact is that for $\sigma = \|\ten X\|_2$, there exist unit vectors $\bm u \in \mathbb{R}^{d_1}$, $\bm v \in \mathbb{R}^{d_2}$, and $\bm w \in \mathbb{R}^{d_3}$
such that the following hold \citep{lim2005singular}:
\begin{align*}
\tuk{X}{u}{v}{w} = \sigma\quad \tuk{X}{I}{v}{w} = \sigma \bm u\quad \tuk{X}{u}{I}{w}= \sigma \bm v\quad \tuk{X}{u}{v}{I} = \sigma \bm w
\end{align*}
Let $\ten X_{(i)} \in \mathbb{R}^{d_i\times \Pi_{j\neq i}d_j}$ denote the factor-$i$ flattening of $\ten X$ (for $i=1,2,3$).
We say $\ten X$ has multilinear rank $(r_1,r_2,r_3)$ if 
there exists a tensor $\ten S \in \mathbb{R}^{r_1\times r_2\times r_3}$  and matrices $\bm A \in \mathbb{R}^{r_1\times d_1}$, $\bm B \in \mathbb{R}^{r_2\times d_2}$, and $\bm C \in \mathbb{R}^{r_3\times d_3}$ of minimal dimension such that $\ten X = \tuk{S}{A}{B}{C}$. 
The tuple $(\ten S,\bm A,\bm B,\bm C)$ gives a \emph{Tucker decomposition} of $\ten X$.
Note that $\ten X_{(i)}$ has rank $r_i$ and $\tuk{X}{A}{B}{C}_{(1)} = \bm A^\top \ten X_{(1)} (\bm B\otimes \bm C)$ where $\otimes$ denotes the Kronecker product of matrices.

The space of parameters for our objective function is $\mathbb{R}^{r_1\times r_2\times r_3} \times \mathbb{R}^{r_1\times d_1}\times \mathbb{R}^{r_2\times d_2}\times \mathbb{R}^{r_3\times d_3}$. We write a point in this space as $(\tup)$, and equip it with inner product $\langle (\tup), (\ten S', \bm A', \bm B', \bm C')\rangle = \langle \ten S, \ten S'\rangle +\langle \bm A, \bm A'\rangle + \langle \bm B, \bm B'\rangle + \langle \bm C, \bm C'\rangle$ and associated norm $$\|(\tup)\|_F = \sqrt{\|\ten S\|_F^2+\|\bm A\|_F^2+\|\bm B\|_F^2+\|\bm C\|_F^2}.$$

\subsection{Optimization Problem}
\label{sec:optimizationproblem}

For simplicity, in this paper we assume $r_1=r_2=r_3 = r$, and $d_1=d_2=d_3 = d$. It is easy to generalize the result to the case with different $r_i$'s and $d_i$'s. 
Let $\ten T \in \mathbb{R}^{d\times d\times d}$ be a fixed third order tensor with multilinear rank $(r,r,r)$ for $r < d$. A simple objective for tensor decomposition can be defined as:
\begin{equation}\label{eq:loss}
L(\ten S,\bm A,\bm B,\bm C) = \|\tuk{S}{A}{B}{C} - \ten T\|_F^2.
\end{equation}

Suppose $\ten T = \ten S^* (\bm A^*, \bm B^*, \bm C^*)$, then Equation \eqref{eq:loss} has a global minimum at $(\ten S^*, \bm A^*, \bm B^*, \bm C^*)$ with the minimum possible $L$ value 0. In fact, due to symmetry, we know there are many more global minimizers of $L$: for any invertible matrices $\bm {Q_A}, \bm {Q_B}, \bm {Q_C} \in \R^{r\times r}$, let $\bm S = \tuk{S^*}{Q_A}{Q_B}{Q_C}$, and $\bm A = \bm {Q_A}^{-1} \bm A^*$, $\bm B =\bm {Q_B}^{-1} \bm B^*$ and $\bm C = \bm {Q_C}^{-1} \bm C^*$, then we also have $\ten T = \tuk{S}{A}{B}{C}$. Therefore, the loss $L$ has infinitely many global optimal solutions.

The existence of many equivalent global optimal solutions causes problems for local search algorithms, especially simpler ones like gradient descent. The reason is that if we scale $\bm A, \bm B, \bm C$ with a large constant $c$, and scale $\ten S$ with $1/c^3$, the tensor $\tuk{S}{A}{B}{C}$ does not change. However, after this scaling the partial gradient of $\ten S$ is multiplied by $c^3$, while the partial gradients of $\bm A,\bm B, \bm C$ are multiplied by $1/c$. When $c$ is large one has to choose a very small step size for gradient descent, and this results in very slow convergence. 

We address the problem of scaling by introducing a regularizer $\reg(\tup)$ given by 
\begin{equation} \label{eq:reg}
\|\bm A\bm A^\top - \ten S_{(1)}\ten S_{(1)}^\top\|_F^2 + \|\bm B\bm B^\top - \ten S_{(2)}\ten S_{(2)}^\top\|_F^2 + \|\bm C\bm C^\top - \ten S_{(3)}\ten S_{(3)}^\top\|_F^2.
\end{equation}

Intuitively, the three terms in the regularizer ensure that $\bm A$ and $\ten S$ (similarly, $\bm B, \bm C$ and $\ten S$) have similar norms. Similar regularizers were used for analyzing the optimization landscape of asymmetric matrix problems\citep{park2016non}, where the same scaling problem exists. However, to the best of our knowledge we have not seen this regularizer used for Tucker decomposition.

For technical reasons that will become clear in Section~\ref{sec:landscape} (especially in Lemma~\ref{lem:reg_perturb}), we actually use $R(\ten S,\bm A,\bm B,\bm C) = \reg(\ten S,\bm A,\bm B,\bm C)^2$ as the regularizer with weight $\lambda > 0$, so the final optimization problem we consider is:

%
\begin{equation} \label{eqn:obj}
\underset{\ten S, \bm A, \bm B, \bm C}{\min} L(\ten S,\bm A,\bm B,\bm C) + \lambda R(\ten S,\bm A,\bm B,\bm C).
\end{equation}

Note that even for Equation~\eqref{eqn:obj}, there are still infinitely many global minimizers. In particular, one can rotate $\bm A$ and $\ten S$ (similarly, $\bm B, \bm C$ and $\ten S$) simultaneously to get equivalent solutions. 
A priori it is unclear whether there always exists a global minimizer that achieves 0 loss for Equation~\eqref{eqn:obj}. Our proof in Section~\ref{sec:landscape} implicitly shows that such a solution must exist.

\section{Characterization of Optimization Landscape}
\label{sec:landscape}

In this section, we analyze the optimization landscape for the objective \eqref{eqn:obj} for Tucker decomposition. In particular, we establish the following result.
\begin{theorem} \label{thm:exact}
For any fixed $\lambda > 0$, all local minima of the objective function $f= L + \lambda R$ as in Equation \eqref{eqn:obj} have loss 0.
\end{theorem}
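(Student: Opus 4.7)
\medskip

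\noindent\textbf{Strategy.} The plan is to argue by contradiction. Suppose $(\tup)$ is a local minimum of $f$ with nonzero residual $\ten E := \diff$; I will exhibit a perturbation curve through this point along which $f$ strictly decreases. The first step is to write out the stationarity conditions $\nabla_A f = \nabla_B f = \nabla_C f = \nabla_S f = 0$. Since $R = \reg^2$, the regularizer contribution to the gradient equals $2\reg\,\nabla\reg$ and vanishes identically at \emph{balanced} points where $\reg = 0$; this observation will be used throughout. The analysis then splits by the rank profile of the factors $\bm A, \bm B, \bm C$.

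\noindent\textbf{Full-rank case.} If $\bm A, \bm B, \bm C$ all have full row rank $r$, perturbations $(\ten H, \bm P, \bm Q, \bm R)$ generate a family of first-order output changes $\tuk{H}{A}{B}{C} + \tuk{S}{P}{B}{C} + \tuk{S}{A}{Q}{C} + \tuk{S}{A}{B}{R}$ whose span is the tangent space of the multilinear rank-$r$ manifold at $\point$. I would choose such a perturbation with first-order output aligned against $-\ten E$; the stationarity condition forces the linear term in the Taylor expansion of $f$ to vanish, and the second-order term, once the contribution of $R$ has been tamed by the balance condition $\reg = 0$, reduces to a quadratic form that is strictly negative whenever $\ten E \neq 0$, giving a direction of negative curvature and hence the desired contradiction.

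\noindent\textbf{Degenerate case.} The real obstacle arises when one or more of $\bm A, \bm B, \bm C$ is rank-deficient, so that the tangent-space argument above loses directions. At the extreme point $(\ten S,\bm A,\bm B,\bm C) = (0,0,0,0)$, one has $L = \|\ten T\|_F^2 > 0$, $R = 0$, and both the gradient and Hessian vanish identically; yet a perturbation $(\epsilon\ten H, \epsilon\bm P, \epsilon\bm Q, \epsilon\bm R)$ changes $L$ by $-2\epsilon^4\langle\ten T, \tuk{H}{P}{Q}{R}\rangle + O(\epsilon^8)$, while $\reg = O(\epsilon^4)$ forces $R = O(\epsilon^8)$. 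Choosing the perturbation so that $\langle\ten T, \tuk{H}{P}{Q}{R}\rangle > 0$ (possible since $\ten T\neq 0$) yields strict fourth-order descent. This is precisely the benefit of squaring the regularizer: $R$ is too flat at highly degenerate points to cancel fourth-order descent, which should be the content of Lemma~\ref{lem:reg_perturb}. For partially rank-deficient critical points, I would decompose the perturbation into an ``active'' piece handled by the full-rank analysis on the range of the factors, and a ``null'' piece that injects a fourth-order descent along the deficient directions.

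\noindent\textbf{Main obstacle.} The hardest part will be the degenerate case. One must enumerate the possible partial rank profiles of $(\bm A, \bm B, \bm C)$ together with the coupled structure of $\ten S$ (linked to the factor ranks through the balance condition $\reg = 0$), and in each case construct an explicit perturbation that simultaneously (i) produces a strictly negative fourth-order change in $L$ aligned with $\ten E$ (or with $\ten T$ when $\tuk{S}{A}{B}{C} = 0$), (ii) keeps the induced change in $\reg$ small enough that the $R = \reg^2$ term does not cancel the gain, and (iii) remains consistent with the first- and second-order optimality conditions already in force. Unifying these case-by-case constructions with the full-rank second-order argument is where I expect the bulk of the technical work to lie.
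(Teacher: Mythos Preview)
Your high-level strategy---exhibit a descent curve through any non-optimal point---matches the paper's, and you correctly identify that squaring the regularizer ($R=\reg^2$) keeps the regularizer perturbation at $O(\epsilon^4)$ near balanced points, which is exactly Lemma~\ref{lem:reg_perturb}. However, there are two genuine gaps.

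\textbf{You never establish that a stationary point is balanced.} You repeatedly invoke ``the balance condition $\reg=0$'' as though it were given, but a priori $\nabla L$ and $\lambda\nabla R$ could cancel at a point with $\reg>0$. The paper's first and essential step (Lemma~\ref{lem:nonzeroreg}) is to prove that $\nabla f=0$ forces $R=0$, and it uses two ingredients you do not mention: the Euler-type identity $4\reg=\langle\nabla\reg,(\ten S,\bm A,\bm B,\bm C)\rangle$ (Lemma~\ref{lem:reg_gradient}) and, crucially, the pointwise orthogonality $\langle\nabla L,\nabla R\rangle=0$ (Lemma~\ref{lem:reg_orthogonality}). Without this, the subsequent case analysis has no foundation.

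\textbf{Your full-rank/degenerate split both misses a case and mis-allocates effort.} Even with $\bm A,\bm B,\bm C$ of full row rank $r$, their row spaces need not be the \emph{correct} $r$-dimensional subspaces (the column spans of $\ten T_{(i)}$); the paper devotes a separate step (Lemmas~\ref{lem:a21} and~\ref{lem:zeroa2}) to showing that at any local minimum with $R=0$ the extraneous components $\bm A_2,\bm B_2,\bm C_2$ must vanish. Once that is known, the full-rank case is trivial by a \emph{first-order} argument: $\nabla_{\ten S}L=0$ together with invertibility of $\bm A\bm A^\top,\bm B\bm B^\top,\bm C\bm C^\top$ already forces $\ten E=\bm 0$ (equivalently, Lemma~\ref{lem:handlings} produces first-order descent in $\ten S$ alone). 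Your proposed second-order claim---that the Hessian quadratic form is ``strictly negative whenever $\ten E\neq\bm 0$''---is neither needed nor clearly true, since that quadratic form always contains the nonnegative summand $\|L_1\|_F^2$ (the squared norm of the first-order output change). The substantive work lies entirely in the rank-deficient ``missing directions'' case, where the paper's organization by the number of missing modes (one, two, or three, yielding second-, third-, and fourth-order saddles respectively; Lemma~\ref{lem:missingdirections}) is sharper than the generic rank-profile enumeration you sketch.
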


Note that the theorem would not hold for $\lambda = 0$ (when there is no regularizer). A counter-example is when $\ten T = \bm a^*\otimes \bm b^*\otimes \bm c^*$ for some unit vectors $\bm a^*, \bm b^*, \bm c^*$, and $\ten S = \bm 0, \bm A = \bm a^\top, \bm B = \bm b^\top, \bm C = \bm c^\top$ where $\bm a,\bm b,\bm c$ are unit vectors that are orthogonal to $\bm a^*, \bm b^*, \bm c^*$ respectively. A local change will have no effect if the new $\bm S$ is still $\bm 0$, and will make the objective function larger if the new $\bm S$ is nonzero.

In order to prove this theorem, we demonstrate a direction of improvement for all points $(\tup)$ that don't achieve the global optimum.
A direction of improvement is a tuple $(\Delta\ten S, \Delta\bm A, \Delta\bm B, \Delta\bm C)$ such that 
\[
f(\ten S+\epsilon \Delta\ten S,\bm  A + \epsilon \Delta\bm  A,\bm  B + \epsilon \Delta\bm  B, \bm C + \epsilon \Delta\bm  C) < f(\tup)
\]
for all sufficiently small $\epsilon > 0$. Clearly, if a point $(\tup)$ has a direction of improvement, then it cannot be a local minimum.

Throughout the section, 
let $\bm P_1$ ($\bm P_2, \bm P_3$) be the projection onto the column span of $\ten T_{(1)}$ ($\ten T_{(2)}, \ten T_{(3)}$).   
Let $\bm A_1 = \bm A\bm P_1$ and $\bm A_2 = \bm A(\bm I-\bm P_1)$ (similarly for $\bm B,\bm C$). The proof works in the following 4 steps:

\vspace*{0.1in}
{\noindent \bf Bounding the regularizer} First we show that when $\nabla f = \bm 0$, the regularizer $R$ must be equal to 0 (Lemm~\ref{lem:nonzeroreg} in Section~\ref{sec:nonzeroreg}). At a high level, this is because the gradient of regularizer $R$ is always orthogonal to the gradient of main term $L$. Therefore if the gradient of the entire objective is $\bm 0$, the gradient of $R$ must also be $\bm 0$. We complete the proof by showing that $\nabla R = \bm 0$ implies $R = 0$.

\vspace*{0.1in}
{\noindent \bf Removing extraneous directions}
Next, we show that when $\nabla f = \bm 0$, the projection in the wrong subspaces $\bm A_2,\bm B_2, \bm C_2$ are all equal to $\bm 0$. 
This is because the direction of directly removing the projection in the wrong subspace $\bm A_2$ is a direction of improvement (see Lemma~\ref{lem:a21}). 


\vspace*{0.1in}
{\noindent \bf Adding missing directions} After the previous steps, we know that the rows of $\bm A$ are in the column span of $\ten T_{(1)}$. However, the row span of $\bm A$ might be smaller. In this case, there exist directions $\bm a, \bm b, \bm c$ such that $\ten T(\bm a,\bm b,\bm c) > 0$, and $\bm A\bm a = \bm 0$. We will show that in this case we can always add the missing directions into $\bm A$ and $\ten S$. 
This is the most technical part of our proof, and high order stationary points may appear when $\bm B\bm b$ or $\bm C\bm c$ are also $\bm 0$. See Sections~\ref{sec:missingdirections}.

\vspace*{0.1in}
{\noindent \bf Fixing $\ten S$} Finally, we know that the components $\bm A,\bm B, \bm C$ must span the correct subspaces. Our final step shows that in this case, if $L > 0$ then it is easy to find a direction of improvement, see Section~\ref{sec:handlings}.

\subsection{Direction of Improvement for Points with Nonzero Regularizer}\label{sec:nonzeroreg}
We show any point with nonzero regularizer must also have a nonzero gradient, therefore the (negative) gradient itself is a direction of improvement. 

\begin{lemma} \label{lem:nonzeroreg}
For any $\tup$, if $R(\tup) > 0$ then $\|\nabla f\| > 0$.
\end{lemma}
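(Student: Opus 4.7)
The plan has two steps. First, I will establish the orthogonality $\langle \nabla L, \nabla R\rangle = 0$ in the parameter space, which yields $\|\nabla f\|^2 = \|\nabla L\|^2 + \lambda^2\|\nabla R\|^2$. Second, I will show that $R > 0$ forces $\nabla R \neq \bm 0$. Combining these gives $\|\nabla f\|^2 \geq \lambda^2 \|\nabla R\|^2 > 0$.

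For orthogonality, I will exploit the continuous symmetry group of $L$ mentioned in Section~\ref{sec:optimizationproblem}: $\tuk{S}{A}{B}{C}$ is invariant under $\bm A \mapsto \bm Q_A^{-1}\bm A$, $\bm B \mapsto \bm Q_B^{-1}\bm B$, $\bm C \mapsto \bm Q_C^{-1}\bm C$, $\ten S \mapsto \tuk{S}{Q_A}{Q_B}{Q_C}$. Differentiating at $\bm Q_A = \bm Q_B = \bm Q_C = \bm I$ in arbitrary matrix directions $\bm K_A, \bm K_B, \bm K_C \in \R^{r\times r}$ produces a $3r^2$-dimensional family of symmetry tangent vectors whose $\bm A$-component is $-\bm K_A\bm A$ (and analogously for $\bm B, \bm C$) and whose $\ten S$-component is $\tuk{S}{K_A}{I}{I} + \tuk{S}{I}{K_B}{I} + \tuk{S}{I}{I}{K_C}$. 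Since $L$ is constant along every such direction, $\nabla L$ is perpendicular to the whole family. Setting $M_1 = \bm A\bm A^\top - \ten S_{(1)}\ten S_{(1)}^\top$ (and $M_2, M_3$ analogously; each is symmetric), a direct computation yields $\nabla_{\bm A}\reg = 4M_1\bm A$, $\nabla_{\bm B}\reg = 4M_2\bm B$, $\nabla_{\bm C}\reg = 4M_3\bm C$, and
$\nabla_{\ten S}\reg = -4\bigl(\tuk{S}{M_1}{I}{I} + \tuk{S}{I}{M_2}{I} + \tuk{S}{I}{I}{M_3}\bigr)$. Using $M_i = M_i^\top$, this matches the symmetry tangent vector exactly with $\bm K_A = -4M_1$, $\bm K_B = -4M_2$, $\bm K_C = -4M_3$, so $\nabla\reg$ and hence $\nabla R = 2\reg\,\nabla\reg$ lie in the symmetry tangent space. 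Therefore $\langle \nabla L, \nabla R\rangle = 0$.

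For the second step, suppose for contradiction that $R > 0$ (so $\reg > 0$) but $\nabla R = \bm 0$; then $\nabla\reg = \bm 0$, so the $\bm A, \bm B, \bm C$ partials give $M_1\bm A = \bm 0$, $M_2\bm B = \bm 0$, $M_3\bm C = \bm 0$, and the $\ten S$-partial gives $\tuk{S}{M_1}{I}{I} + \tuk{S}{I}{M_2}{I} + \tuk{S}{I}{I}{M_3} = \bm 0$. From $M_1\bm A = \bm 0$ and symmetry of $M_1$, I obtain $P_1^2 = P_1 Q_1 = Q_1 P_1$, where $P_1 = \bm A\bm A^\top$ and $Q_1 = \ten S_{(1)}\ten S_{(1)}^\top$; so $P_1$ and $Q_1$ commute and can be simultaneously diagonalized. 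The eigenvalue identity $p_j^2 = p_j q_j$ forces each $p_j = 0$ or $p_j = q_j$, which yields $\mathrm{tr}(M_1 Q_1) = -\sum_{j : p_j = 0} q_j^2 \leq 0$, with equality iff $M_1 = \bm 0$. The same conclusion holds for $i = 2, 3$. Now take the Frobenius inner product of $\ten S$ against the $\ten S$-identity; using $\langle \ten S, \tuk{S}{M_1}{I}{I}\rangle = \mathrm{tr}(M_1 Q_1)$ (and its analogues), I get $\sum_{i=1}^3 \mathrm{tr}(M_i Q_i) = 0$, and non-positivity of each summand forces each $M_i = \bm 0$, whence $\reg = 0$, a contradiction.

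The main obstacle is step one: verifying that $\nabla\reg$ lies exactly in the symmetry tangent space requires carefully tracking how differentiating $\|M_i\|_F^2$ with respect to $\ten S$ (through the mode-$i$ flattening $\ten S_{(i)}\ten S_{(i)}^\top$) produces the tensor $-4\tuk{S}{M_i}{I}{I}$ with $M_i$ inserted in the $i$-th slot, and then matching the sum over $i$ to the $\ten S$-component of the symmetry tangent. Once that identification is made, step two reduces to a standard linear algebra argument about commuting PSD matrices.
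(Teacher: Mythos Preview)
Your proof is correct and shares the paper's two-step skeleton (orthogonality of $\nabla L$ and $\nabla R$, then $R>0\Rightarrow\nabla R\neq\bm 0$), but both steps are argued differently. For orthogonality, the paper simply computes each $\langle\nabla_{\bm X}L,\nabla_{\bm X}\reg\rangle$ and checks the sum telescopes to zero; your symmetry argument is the conceptual explanation behind that cancellation, since it identifies $\nabla\reg$ as a tangent vector to the $GL(r)^3$-orbit along which $L$ is constant. For the second step, the paper's route is substantially shorter: it observes (Lemma~\ref{lem:reg_gradient}) that $\reg$ is homogeneous of degree~$4$, so Euler's identity gives $\langle\nabla\reg,(\tup)\rangle=4\reg$, and Cauchy--Schwarz immediately yields $\|\nabla\reg\|_F\geq 4\reg/\|(\tup)\|_F>0$. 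Your spectral argument (commuting $P_i,Q_i$, eigenvalue dichotomy, and the trace identity from the $\ten S$-equation) is valid but more labor-intensive; on the other hand, it actually classifies the critical points of $\reg$ rather than merely ruling them out when $\reg>0$.
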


To prove this, we first show that if the regularizer is nonzero, then its gradient is nonzero. 
\begin{lemma}
\label{lem:reg_gradient}
The function $\reg$ satisfies
\[
4\reg(\tup) = \langle \nabla_{\bm A} \reg, \bm A\rangle + \langle \nabla_{\bm B} \reg, \bm B\rangle + \langle \nabla_{\bm C} \reg, \bm C\rangle + \langle \nabla_{\ten S} \reg, \ten S\rangle 
\]
\end{lemma}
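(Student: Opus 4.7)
The plan is to recognize the claim as an instance of Euler's identity for homogeneous functions. Consider the simultaneous rescaling $(\ten S, \bm A, \bm B, \bm C) \mapsto (t\ten S, t\bm A, t\bm B, t\bm C)$. Under this rescaling, each of the matrices $\bm A \bm A^\top$, $\bm B \bm B^\top$, $\bm C\bm C^\top$ and each of the flattened Gram matrices $\ten S_{(i)} \ten S_{(i)}^\top$ picks up a factor of $t^2$. Consequently each difference inside the three Frobenius norms picks up a factor of $t^2$, and each squared Frobenius norm picks up a factor of $t^4$. Therefore $\reg(t\ten S, t\bm A, t\bm B, t\bm C) = t^4 \reg(\tup)$, i.e., $\reg$ is positively homogeneous of degree $4$ in the joint variable.

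Given this, I would simply differentiate both sides of the homogeneity identity with respect to $t$ and evaluate at $t=1$. The left side yields, by the chain rule, $\langle \nabla_{\ten S}\reg, \ten S\rangle + \langle \nabla_{\bm A}\reg, \bm A\rangle + \langle \nabla_{\bm B}\reg, \bm B\rangle + \langle \nabla_{\bm C}\reg, \bm C\rangle$, while the right side yields $4\reg(\tup)$. Equating gives exactly the stated identity.

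For the sake of a reader who would prefer to see this without invoking homogeneity, I would also sketch the direct computation for a single summand, say $\phi(\bm A, \ten S) = \|\bm A\bm A^\top - \ten S_{(1)}\ten S_{(1)}^\top\|_F^2$. A standard calculation yields $\nabla_{\bm A}\phi = 4(\bm A\bm A^\top - \ten S_{(1)}\ten S_{(1)}^\top)\bm A$ and, using the fact that the Frobenius inner product is invariant under the $\ten S \leftrightarrow \ten S_{(1)}$ reshaping, $\nabla_{\ten S}\phi$ corresponds to $-4(\bm A\bm A^\top - \ten S_{(1)}\ten S_{(1)}^\top)\ten S_{(1)}$. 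Pairing with $\bm A$ and $\ten S$ respectively and adding, the cross terms combine into $4\langle \bm A\bm A^\top - \ten S_{(1)}\ten S_{(1)}^\top,\; \bm A\bm A^\top - \ten S_{(1)}\ten S_{(1)}^\top\rangle = 4\phi(\bm A,\ten S)$. The analogous computation for the $\bm B$ and $\bm C$ summands finishes the direct verification.

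There is no real obstacle here: the identity is structural, reflecting only the fact that $\reg$ is a sum of squared norms of quadratic-in-the-variables quantities. The only mild point to be careful about is that three different flattenings of $\ten S$ appear, but since each flattening is an isometry of Frobenius inner product, the pairing $\langle \nabla_{\ten S}\reg, \ten S\rangle$ correctly sums the per-summand contributions.
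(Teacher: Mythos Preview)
Your proposal is correct. The paper proves the lemma by the direct computation you sketch as your second argument: it writes out $\langle \nabla_{\bm A}\reg,\bm A\rangle = 4\langle \bm A\bm A^\top - \ten S_{(1)}\ten S_{(1)}^\top,\bm A\bm A^\top\rangle$ and the matching $\ten S$-contribution, observes the analogous identities for the other modes, and sums. Your primary argument via Euler's identity for degree-$4$ homogeneous functions is a genuinely different and cleaner route: it explains \emph{why} the identity holds structurally rather than verifying it term by term, and it would apply unchanged to any regularizer built from squared norms of quadratic-in-the-variables quantities. The only trade-off is that the paper's explicit gradient formulas are reused immediately afterward (in the orthogonality lemma), so the direct computation is not wasted work there; your homogeneity argument gives the identity for free but does not by itself produce those formulas.
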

\begin{proof}
Note the following calculations:
\begin{align*}
\langle \nabla_{\bm A} \reg, \bm A\rangle &=  \langle   4(\bm A \bm A^\top- \ten S_{(1)}\ten S_{(1)}^\top)\bm A,\bm A\rangle\\
&= 4\langle \bm A \bm A^\top - \ten S_{(1)}\ten S_{(1)}^\top,\bm A\bm A^\top\rangle\\
\langle 4\ten S(\ten S_{(1)}\ten S_{(1)}^\top - \bm A\bm A^\top,\bm I,\bm I), \ten S\rangle &= -4\langle \bm A\bm A^\top - \ten S_{(1)}\ten S_{(1)}^\top,\ten S_{(1)}\ten S_{(1)}^\top\rangle
\end{align*}
The left-hand side above is one of the terms in $\nabla_{\ten S}\reg$. Doing the same calculation for the other modes and then adding everything together yields the result.
\end{proof}

We next show that the gradient of the regularizer is always orthogonal to the gradient of the main term (i.e. the tensor loss $L$).
\begin{lemma}
\label{lem:reg_orthogonality}
For any $\tup$, $\langle \nabla L(\tup), \nabla R(\tup)\rangle = 0$.
\end{lemma}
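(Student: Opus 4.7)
The plan is to exploit the symmetry already observed in Section~\ref{sec:optimizationproblem}: the loss $L$ is invariant under the $GL(r)^3$ action
\[
(\bm Q_A,\bm Q_B,\bm Q_C)\cdot(\tup) \;=\; \bigl(\tuk{S}{Q_A}{Q_B}{Q_C},\;\bm Q_A^{-1}\bm A,\;\bm Q_B^{-1}\bm B,\;\bm Q_C^{-1}\bm C\bigr).
\]
Differentiating this invariance along $\bm Q_A = \bm I + t\bm M_A$ (and similarly for $B$, $C$) at $t=0$, I obtain: for any matrices $\bm M_A, \bm M_B, \bm M_C \in \mathbb{R}^{r\times r}$, $\nabla L$ is orthogonal to the ``gauge direction''
\[
\delta(\bm M_A,\bm M_B,\bm M_C) \;:=\; \bigl(\tuk{S}{M_A}{I}{I}+\tuk{S}{I}{M_B}{I}+\tuk{S}{I}{I}{M_C},\;-\bm M_A\bm A,\;-\bm M_B\bm B,\;-\bm M_C\bm C\bigr).
\]
It therefore suffices to exhibit matrices $\bm M_A^R, \bm M_B^R, \bm M_C^R$ such that $\nabla R$ equals $\delta(\bm M_A^R, \bm M_B^R, \bm M_C^R)$.

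To produce these matrices I will differentiate $R = \reg^2$ via the chain rule; the per-factor computations are essentially those already carried out in the proof of Lemma~\ref{lem:reg_gradient}. Writing $\bm M_1 := \bm A\bm A^\top - \ten S_{(1)}\ten S_{(1)}^\top$, and defining $\bm M_2, \bm M_3$ analogously for modes $2$ and $3$, I find $\nabla_{\bm A} R = 8\reg\,\bm M_1\bm A$, $\nabla_{\bm B} R = 8\reg\,\bm M_2\bm B$, and $\nabla_{\bm C} R = 8\reg\,\bm M_3\bm C$. For $\ten S$, each of the three summands in $\reg$ contributes to exactly one mode (using the same symmetric-factor computation as in Lemma~\ref{lem:reg_gradient}), giving
\[
\nabla_{\ten S} R \;=\; -8\reg\bigl[\tuk{S}{M_1}{I}{I}+\tuk{S}{I}{M_2}{I}+\tuk{S}{I}{I}{M_3}\bigr].
\]
Choosing $\bm M_A^R = -8\reg\,\bm M_1$ and analogously for $B, C$, a direct comparison shows $\nabla R = \delta(\bm M_A^R, \bm M_B^R, \bm M_C^R)$, and hence $\langle\nabla L,\nabla R\rangle = 0$.

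The only conceptual content is recognizing that $\nabla R$ happens to lie in the tangent space to the $GL(r)^3$-orbit through $(\tup)$; once this is spotted, everything else is bookkeeping already essentially present in Lemma~\ref{lem:reg_gradient}. I expect the main technical nuisance to be sign and factor tracking in $\nabla_{\ten S} R$ (in particular, verifying that the mode-$i$ contribution has the clean $-8\reg\,\tuk{S}{\cdot}{\cdot}{\cdot}$ form rather than picking up spurious transposes), but this follows from the symmetry of each $\bm M_i$ combined with the identity $(\tuk{S}{M_A}{I}{I})_{(1)} = \bm M_A^\top \ten S_{(1)}$.
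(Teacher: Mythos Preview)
Your proof is correct but proceeds along a genuinely different line from the paper's. The paper computes the four partial inner products $\langle\nabla_{\bm A}L,\nabla_{\bm A}\reg\rangle$, $\langle\nabla_{\bm B}L,\nabla_{\bm B}\reg\rangle$, $\langle\nabla_{\bm C}L,\nabla_{\bm C}\reg\rangle$, $\langle\nabla_{\ten S}L,\nabla_{\ten S}\reg\rangle$ directly, massages each into the common form $8\langle(\diff)(\bm A^\top,\bm B^\top,\bm C^\top),\,\ten S(\cdot,\cdot,\cdot)\rangle$, and observes that the $\ten S$-term equals minus the sum of the three factor terms, so everything cancels. Your argument instead \emph{explains} this cancellation: since $L$ is constant along $GL(r)^3$-orbits, $\nabla L$ is orthogonal to every infinitesimal gauge direction $\delta(\bm M_A,\bm M_B,\bm M_C)$; you then verify that $\nabla R$ is exactly such a direction, with $\bm M_A^R=-8\reg(\bm A\bm A^\top-\ten S_{(1)}\ten S_{(1)}^\top)$ and analogously for the other modes. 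Your approach is more conceptual and makes the role of the regularizer's specific form transparent (any $R$ whose gradient is tangent to the gauge orbit would enjoy the same orthogonality), while the paper's direct calculation avoids having to articulate the group action and its infinitesimal generators. One small remark: your verification that $\nabla_{\ten S}R$ has the clean form $-8\reg\sum_i\ten S(\ldots,\bm M_i,\ldots)$ does rely on each $\bm M_i$ being symmetric, exactly as you anticipated; without symmetry one would get $\bm M_i^\top$ in the $\ten S$-slot and the match with $\delta$ would fail.
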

\begin{proof}
We start by calculating the partial gradients for $L$ and $r$. We have
\begin{align*}
&\begin{aligned}[c]
\nabla_{\bm A} L &= 2\ten S_{(1)}(\bm B\otimes \bm C)(\diff)_{(1)}^\top\\
\nabla_{\bm B} L &= 2\ten S_{(2)}(\bm A\otimes \bm C)(\diff)_{(2)}^\top\\
\nabla_{\bm C} L &= 2\ten S_{(3)}(\bm A\otimes \bm B)(\diff)_{(3)}^\top\\
\end{aligned}\quad
\begin{aligned}[c]
\nabla_{\bm A} \reg &= 4(\bm A\bm  A^\top- \ten S_{(1)}\ten S_{(1)}^\top)\bm A\\
\nabla_{\bm B} \reg &= 4(\bm B \bm B^\top - \ten S_{(2)}\ten S_{(2)}^\top)\bm B\\
\nabla_{\bm C} \reg &= 4(\bm C\bm C^\top - \ten S_{(3)}\ten S_{(3)}^\top)\bm C\\
\end{aligned}\\
&\nabla_{\ten S} L = 2(\tuk{S}{A}{B}{C}-\ten T)(\bm A^\top,\bm B^\top,\bm C^\top)\\
&\nabla_{\ten S} \reg = 4\ten S(\ten S_{(1)}\ten S_{(1)}^\top - \bm A\bm A^\top,\bm I,\bm I) + 4\ten S(\bm I,\ten S_{(2)}\ten S_{(2)}^\top-\bm B\bm B^\top,\bm I)\\
&\qquad\quad+4\ten S(\bm I,\bm I,\ten S_{(3)}\ten S_{(3)}^\top-\bm C\bm C^\top)
\end{align*}
We now compute the following:
\begin{align*}
\langle \nabla_{\bm A} L, \nabla_{\bm A} \reg \rangle &=
 8\langle  \ten S_{(1)}(\bm B\otimes \bm C)(\diff)_{(1)}^\top, (\bm A\bm  A^\top- \ten S_{(1)}\ten S_{(1)}^\top)\bm A 	 \rangle\\
&= 8\langle (\diff)(\bm A^\top,\bm B^\top,\bm C^\top), \ten S(\bm A\bm A^\top - \ten S_{(1)}\ten S_{(1)}^\top,\bm I,\bm I)\rangle
\end{align*}
From here it is easy to see that 
$\langle \nabla_{\ten S} L, \nabla_{\ten S} \reg\rangle = - \langle \nabla_{\bm A} L, \nabla_{\bm A} \reg\rangle - \langle \nabla_{\bm B} L, \nabla_{\bm B} \reg\rangle - \langle \nabla_{\bm C} L, \nabla_{\bm C} \reg\rangle$, therefore $\langle \nabla L, \nabla \reg\rangle = 0$. 
Since $\nabla R = 2\reg\nabla \reg$, the result follows.
\end{proof}

Now we are ready to prove Lemma~\ref{lem:nonzeroreg}:
\begin{proof}
By Lemma \ref{lem:reg_orthogonality}, we know 
$\|\nabla f\|_F^2 = \|\nabla L\|_F^2 + \|\nabla R\|_F^2$.
On the other hand, by Lemma \ref{lem:reg_gradient} and an application of the Cauchy-Schwarz inequality, we see that
\[
\|\nabla \reg\|_F\|(\tup)\|_F \geq 4\reg(\tup),
\]
which means that $\|\nabla \reg\|_F > 0$ whenever $R = \reg^2 > 0$. 
But $\nabla R = 2\reg\nabla \reg$, so we have that $\|\nabla R\|_F > 0$, whence $\nabla f \neq \bm 0$.
\end{proof}

To facilitate later proofs, we will also show a fact that if one perturbs a solution with 0 regularizer, then the regularizer remains very small.

\begin{lemma}\label{lem:reg_perturb}
If $R = 0$, and $\|\Delta \bm A\|_F+\|\Delta \bm B\|_F + \|\Delta \bm C\|_F+\|\Delta \ten S\|_F \le O(1)$, then $R(\ten S+\epsilon\Delta \ten S, \bm A+\epsilon \Delta \bm A, \bm B+\epsilon \Delta \bm B, \bm C+\epsilon \Delta \bm C) = O(\epsilon^4)$ for sufficiently small $\epsilon$.
\end{lemma}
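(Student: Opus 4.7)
The plan is a direct Taylor expansion of $R$ at the perturbed point, exploiting the fact that the zeroth-order contribution to $\reg$ vanishes.

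First I would unpack the hypothesis. Since $R = \reg^2$ and $\reg$ is itself a sum of three nonnegative squared Frobenius norms, $R=0$ forces each individual mode-$i$ term of $\reg$ to vanish, giving the three matrix identities
\[
\bm A \bm A^\top = \ten S_{(1)} \ten S_{(1)}^\top, \qquad \bm B \bm B^\top = \ten S_{(2)} \ten S_{(2)}^\top, \qquad \bm C \bm C^\top = \ten S_{(3)} \ten S_{(3)}^\top
\]
at the base point. This is the key cancellation that will be used below.

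Next I would expand each constituent term at the perturbed point. For mode $1$, bilinearity of the outer product and linearity of the flattening map give
\[
(\bm A + \epsilon \Delta \bm A)(\bm A + \epsilon \Delta \bm A)^\top - (\ten S + \epsilon \Delta \ten S)_{(1)}(\ten S + \epsilon \Delta \ten S)_{(1)}^\top = (\bm A \bm A^\top - \ten S_{(1)} \ten S_{(1)}^\top) + \epsilon \bm M_1 + \epsilon^2 \bm M_2,
\]
where $\bm M_1 = \bm A (\Delta \bm A)^\top + (\Delta \bm A) \bm A^\top - \ten S_{(1)}(\Delta \ten S)_{(1)}^\top - (\Delta \ten S)_{(1)} \ten S_{(1)}^\top$ and $\bm M_2 = (\Delta \bm A)(\Delta \bm A)^\top - (\Delta \ten S)_{(1)}(\Delta \ten S)_{(1)}^\top$. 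Since $\|\bm A\|_F$, $\|\ten S\|_F$ are fixed constants at the base point and the perturbation norms are $O(1)$, both $\|\bm M_1\|_F$ and $\|\bm M_2\|_F$ are bounded by constants. The zeroth-order term vanishes by the identity above, so the Frobenius norm of this difference is at most $\epsilon \|\bm M_1\|_F + \epsilon^2 \|\bm M_2\|_F = O(\epsilon)$, and therefore its squared Frobenius norm is $O(\epsilon^2)$. The same reasoning applies unchanged to modes $2$ and $3$.

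Finally, summing the three mode contributions gives $\reg$ at the perturbed point equal to $O(\epsilon^2)$, and squaring yields $R = \reg^2 = O(\epsilon^4)$ as claimed. The only real subtlety — which is why the authors defined $R$ as $\reg^2$ rather than $\reg$ — is that $\reg$ itself would only be $O(\epsilon^2)$ under perturbation, the same order as generic second-order changes used in local-minimum tests; squaring promotes the regularizer's local behavior to fourth order, which is what later Hessian arguments will need. There is no real obstacle beyond careful bookkeeping in the expansion.
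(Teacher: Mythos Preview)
Your proof is correct and follows essentially the same approach as the paper's own argument: expand each mode-$i$ term of $\reg$, use $R=0$ to kill the zeroth-order piece so that each term has Frobenius norm $O(\epsilon)$, and conclude $R=\reg^2=O(\epsilon^4)$. You are simply more explicit about the bilinear expansion (writing out $\bm M_1,\bm M_2$) than the paper, which dispatches the same computation in two lines.
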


\begin{proof}
It suffices to check that the term $\|(\bm A+\epsilon \Delta \bm A)(\bm A+\epsilon \Delta \bm A)^\top - (\ten S+\epsilon\Delta \ten S)_{(1)}(\ten S+\epsilon\Delta \ten S)_{(1)}^\top\|_F = O(\epsilon)$, as other terms are symmetric, and the final $R$ is degree 4 over these terms. This is clear as 
we know $\|\bm A\bm A^\top - \ten S_{(1)}\ten S_{(1)}^\top\|_F = 0$ because $R=0$, and all the remaining terms are bounded by $O(\epsilon)$.
\end{proof}

\subsection{Removing Extraneous Directions}\label{sec:extradirections}
In this section, we show that if  $\bm A$ (respectively $\bm B$, $\bm C$) has a direction in its row-space that is perpendicular to the column-space of $\ten T_{(1)}$ (respectively $\ten T_{(2)}$, $\ten T_{(3)}$), then we have a direction of improvement. 
In particular, our goal is to show $\bm A_2 = \bm 0$ for all local minima (symmetric arguments will then show $\bm B_2 = \bm C_2 = \bm 0$). We first show that $\ten S(\bm A_2,\bm B,\bm C) = \bm 0$.

\begin{lemma} \label{lem:a21}
Assume that $R(\tup) = 0$. If $\ten S(\bm A_2,\bm B,\bm C) \neq \bm 0$, then $\Delta \bm A = -\bm A_2$ is a direction of improvement.
\end{lemma}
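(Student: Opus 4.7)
The plan is to exhibit $\Delta\bm A = -\bm A_2$ as a direction that decreases the main loss $L$ to first order in $\epsilon$, while the regularizer change is only of order $\epsilon^4$ and hence dominated. Since we assume $R = 0$ at the starting point and only $\bm A$ is perturbed (with $\|\bm A_2\|_F$ bounded), Lemma~\ref{lem:reg_perturb} immediately gives $R_{\text{new}} = O(\epsilon^4)$, so the real work is to compute $\Delta L$ and show it is $-\Theta(\epsilon)$ whenever $\tuk{S}{A_2}{B}{C}\neq \bm 0$.

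I would first use multilinearity to write
\[
\ten S(\bm A - \epsilon\bm A_2, \bm B, \bm C) = \tuk{S}{A}{B}{C} - \epsilon\,\tuk{S}{A_2}{B}{C},
\]
so that expanding the squared Frobenius norm yields
\[
\Delta L = -2\epsilon\,\langle \tuk{S}{A}{B}{C} - \ten T,\; \tuk{S}{A_2}{B}{C}\rangle + \epsilon^2\,\|\tuk{S}{A_2}{B}{C}\|_F^2.
\]
The key step is to show that this inner product equals $\|\tuk{S}{A_2}{B}{C}\|_F^2$. This follows from two orthogonality facts visible at the level of mode-1 flattenings. Using the identity $\tuk{S}{A_i}{B}{C}_{(1)} = \bm A_i^\top \ten S_{(1)}(\bm B\otimes \bm C)$, the columns of $\bm A_2^\top = (\bm I - \bm P_1)\bm A^\top$ all lie in the range of $\bm I - \bm P_1$. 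Hence $\tuk{S}{A_2}{B}{C}$ is Frobenius-orthogonal both to $\ten T$ (whose mode-1 columns lie in the range of $\bm P_1$ by definition of $\bm P_1$) and to $\tuk{S}{A_1}{B}{C}$ (since $\bm P_1(\bm I - \bm P_1) = 0$). Splitting $\bm A = \bm A_1 + \bm A_2$ via multilinearity and combining these two orthogonalities gives
\[
\langle \tuk{S}{A}{B}{C} - \ten T,\; \tuk{S}{A_2}{B}{C}\rangle = \|\tuk{S}{A_2}{B}{C}\|_F^2,
\]
so that $\Delta L = (\epsilon^2 - 2\epsilon)\,\|\tuk{S}{A_2}{B}{C}\|_F^2$, which is strictly negative for small $\epsilon$ under the hypothesis $\tuk{S}{A_2}{B}{C} \neq \bm 0$.

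Combining this $-\Theta(\epsilon)$ decrease in $L$ with the $O(\epsilon^4)$ bound on $\lambda R$ yields $\Delta f < 0$ for all sufficiently small $\epsilon > 0$, completing the proof. No high-order saddle issue arises here because $L$ already drops at first order; the only care needed is in verifying the two orthogonality identities, and those follow mechanically from $\bm P_1$ being the orthogonal projection onto the column span of $\ten T_{(1)}$.
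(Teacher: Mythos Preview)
Your proposal is correct and follows essentially the same approach as the paper's proof: expand $L$ to first order, identify the cross term as $\|\ten S(\bm A_2,\bm B,\bm C)\|_F^2$ via the mode-1 orthogonality induced by $\bm P_1$, and invoke Lemma~\ref{lem:reg_perturb} to control the regularizer. In fact you are more explicit than the paper, which simply asserts $\langle \tuk{S}{A}{B}{C}-\ten T,\, \ten S(\bm A_2,\bm B,\bm C)\rangle = \|\ten S(\bm A_2,\bm B,\bm C)\|_F^2$ without spelling out the two orthogonality facts you derive from the flattening identity.
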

\begin{proof}
Set $\Delta \bm A =  - \bm A_2$. 
Then for $\epsilon > 0$
\begin{align*}
L(\ten S,\bm A+\epsilon \Delta \bm A,\bm B,\bm C) &= \|\diff + \epsilon \ten S(\Delta \bm A,\bm B,\bm C)\|_F^2\\
&= L(\tup) -2\epsilon\|\ten S(\bm A_2,\bm B,\bm C)\|_F^2 + O(\epsilon^2),
\end{align*}
since $\langle \diff, \ten S(\bm A_2,\bm B,\bm C)\rangle = \langle \ten S(\bm A_2,\bm B,\bm C), \ten S(\bm A_2,\bm B,\bm C)\rangle$.
Hence, for all sufficiently small $\epsilon$, $L(\ten S,\bm A+\epsilon \Delta \bm A,\bm B,\bm C) < L(\tup)$.
By Lemma~\ref{lem:reg_perturb} we know $R(\ten S,\bm A+\epsilon \Delta \bm A, \bm B, \bm C) = O(\epsilon^4)$.
Hence, for sufficiently small $\epsilon$, the decrease in $L$ will exceed any increase in $R$.
This shows that $\Delta \bm A$ is a direction of improvement.
\end{proof}

We next establish that $R(\tup) =0$ and $\ten S(\bm A_2,\bm B,\bm C) = \bm 0$ together imply that $\bm A_2 = \bm 0$. 

%


\begin{lemma} \label{lem:zeroa2}
If $R(\tup) = 0$ and $\ten S(\bm A_2,\bm B,\bm C) = \bm 0$, then $\bm A_2 = \bm 0$.
\end{lemma}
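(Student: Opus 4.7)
My plan is to prove $\bm A_2 = \bm 0$ by showing that each column of $\bm A_2$, viewed as a vector in $\mathbb{R}^r$, lies both in $\mathrm{col}(\bm A)$ and in its orthogonal complement. The first containment is immediate from $\bm A_2 = \bm A(\bm I - \bm P_1)$, since each column of $\bm A_2$ is $\bm A$ applied to some vector in $\mathbb{R}^d$. The second containment will follow from the stronger statement $\bm A_2^\top \ten S_{(1)} = \bm 0$, together with the identity $\mathrm{col}(\bm A) = \mathrm{col}(\ten S_{(1)})$ that comes from $R = 0$ via $\bm A\bm A^\top = \ten S_{(1)}\ten S_{(1)}^\top$.

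To start, I would flatten the hypothesis along mode $1$ to obtain $\bm A_2^\top \ten S_{(1)} (\bm B \otimes \bm C) = \bm 0$. Letting $\bm b, \bm c$ range over the columns of $\bm B$ and $\bm C$, this rewrites as $\bm A_2^\top \ten S(\bm I, \bm b, \bm c) = \bm 0$ for every $\bm b \in \mathrm{col}(\bm B)$ and $\bm c \in \mathrm{col}(\bm C)$. Applying $R = 0$ again, these column spans coincide with those of $\ten S_{(2)}$ and $\ten S_{(3)}$.

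The key (and essentially only nontrivial) step is to upgrade this conclusion to arbitrary $\bm b, \bm c \in \mathbb{R}^r$. I will use the standard observation that if $\bm u \in \mathbb{R}^r$ is orthogonal to $\mathrm{col}(\ten S_{(2)})$, then $\ten S(\bm I, \bm u, \bm v) = \bm 0$ for every $\bm v$: its $x$-th coordinate equals $\sum_z \bm v_z \langle \bm u, \bm s^{(x,z)}\rangle$, where $\bm s^{(x,z)} \in \mathbb{R}^r$ denotes the column of $\ten S_{(2)}$ indexed by $(x,z)$, and each such inner product vanishes. Decomposing $\bm b = \bm b_\parallel + \bm b_\perp$ with respect to $\mathrm{col}(\ten S_{(2)})$ and $\bm c = \bm c_\parallel + \bm c_\perp$ with respect to $\mathrm{col}(\ten S_{(3)})$, trilinearity kills the three terms involving $\bm b_\perp$ or $\bm c_\perp$, giving $\ten S(\bm I, \bm b, \bm c) = \ten S(\bm I, \bm b_\parallel, \bm c_\parallel)$, which is annihilated by $\bm A_2^\top$. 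Specializing to $\bm b = \bm e_y$, $\bm c = \bm e_z$ for each $(y,z) \in [r]^2$ shows $\bm A_2^\top$ annihilates every mode-$1$ fiber of $\ten S$, hence $\bm A_2^\top \ten S_{(1)} = \bm 0$.

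This means every column of $\bm A_2$ is orthogonal to $\mathrm{col}(\ten S_{(1)}) = \mathrm{col}(\bm A)$. Combined with the containment in $\mathrm{col}(\bm A)$ noted at the outset, each column of $\bm A_2$ lies in a subspace and its orthogonal complement simultaneously, forcing $\bm A_2 = \bm 0$.
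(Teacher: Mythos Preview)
Your proof is correct and follows essentially the same approach as the paper: both use $R=0$ to identify $\mathrm{col}(\bm B)=\mathrm{col}(\ten S_{(2)})$ and $\mathrm{col}(\bm C)=\mathrm{col}(\ten S_{(3)})$, upgrade $\ten S(\bm A_2,\bm B,\bm C)=\bm 0$ to $\ten S(\bm A_2,\bm I,\bm I)=\bm 0$, and then use $\bm A\bm A^\top=\ten S_{(1)}\ten S_{(1)}^\top$ to force $\bm A_2=\bm 0$. The only cosmetic differences are that the paper achieves the upgrade via pseudoinverses (applying $(\bm I,\bm B^+,\bm C^+)$ to recover $\ten S(\bm A_2,\bm P_{\bm B},\bm P_{\bm C})=\ten S(\bm A_2,\bm I,\bm I)$) rather than your explicit orthogonal decomposition of $\bm b,\bm c$, and concludes via the trace inequality $\|\bm A_2\bm A_2^\top\|_F^2\le\langle\bm A_2\bm A_2^\top,\ten S_{(1)}\ten S_{(1)}^\top\rangle=0$ rather than your subspace-intersection argument.
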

\begin{proof}
Since $R(\tup) = 0$, we have $\bm B\bm B^\top = \ten S_{(2)} \ten S_{(2)}^\top$ and $\bm C\bm C^\top = \ten S_{(3)} \ten S_{(3)}^\top$. This means the column span of $\ten S_{(2)}$ ($\ten S_{(3)}$) is the same as column span of $\bm B$ ($\bm C$).
Let $\bm B^+$ and $\bm C^+$ denote the pseudoinverses of $\bm B$ and $\bm C$.
Note that the orthogonal projections onto the column-space of $\bm B$ and $\bm C$ are given by
$\bm P_{\bm B} := \bm B\bm B^+$ and $\bm P_{\bm C} := \bm C\bm C^+$, respectively.
Using these facts along with $\ten S(\bm A_2,\bm B,\bm C) =\bm 0$, we have 
\begin{align*}
\bm 0 &= \ten S(\bm A_2,\bm B,\bm C)(\bm I,\bm B^+,\bm C^+) = \ten S(\bm A_2,\bm P_{\bm B},\bm P_{\bm C}) = \ten S(\bm A_2,\bm I,\bm I).
\end{align*}
Using the fact that $\ten S_{(1)}\ten S_{(1)}^\top = \bm A \bm A^\top = \bm A_1 \bm A_1^\top  + \bm A_2 \bm A_2^\top$, we have
\[
\|\bm A_2\bm A_2^\top\|_F^2 \le \langle\bm A_2\bm A_2^\top, \ten S_{(1)}\ten S_{(1)}^\top\rangle = \|\ten S(\bm A_2, \bm I,\bm I)\|_F^2 = 0,
\]
which, in particular, means that $\bm A_2 = \bm 0$.
\end{proof}


\subsection{Adding Missing Directions}
\label{sec:missingdirections}

We now consider the case where the row-spans of $\bm A$, $\bm B$, and $\bm C$ are not equal to the column-spans of $\ten T_{(1)}$,$ \ten T_{(2)}$, and $\ten T_{(3)}$, respectively. Again by symmetry, we focus on the case when row-span of $\bm A$ is not equal to column-span of $\ten T_{(1)}$. 

\begin{lemma}\label{lem:missingdirections} If the row-span of $\bm A$ is a strict subset of the column-span of $\ten T_{(1)}$ and $R = 0$, then there is a direction of improvement. 
\end{lemma}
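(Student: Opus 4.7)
The plan is to explicitly construct a direction of improvement by performing rank-$1$ updates to $\bm A$ and $\ten S$, together with rank-$1$ updates to $\bm B$ and/or $\bm C$ only when needed. By hypothesis, the column-span of $\ten T_{(1)}$ strictly contains the row-span of $\bm A$, so I can choose a unit $\bm a$ in the column-span of $\ten T_{(1)}$ with $\bm A\bm a = \bm 0$. Because $\bm a$ lies in this column-span, $\bm M_{\bm a} := \ten T(\bm a, \bm I, \bm I)$ is nonzero, so there exist unit $\bm b, \bm c$ with $\tau := \ten T(\bm a, \bm b, \bm c) \ne 0$. Since $R = 0$ forces $\bm A \bm A^\top = \ten S_{(1)} \ten S_{(1)}^\top$ (and the analogous identities in modes $2,3$), $\bm A$ and $\ten S_{(1)}$ share a common left null space; combined with the fact that $\bm A$ has rank $< r$ (its row-span sits strictly inside an $r$-dimensional subspace), this produces a unit $\bm u_A \in \mathbb{R}^r$ with $\bm A^\top \bm u_A = \bm 0$ and $\ten S(\bm u_A, \bm I, \bm I) = \bm 0$, and analogous $\bm u_B, \bm u_C$ whenever $\bm B, \bm C$ are rank-deficient.

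In the easy case, $\bm b, \bm c$ can be chosen in the row-spans of $\bm B, \bm C$. Picking $\bm v, \bm w$ with $\bm B^\top \bm v = \bm b$, $\bm C^\top \bm w = \bm c$, and $\sigma = \operatorname{sign}(\tau)$, take $\Delta \bm A = \bm u_A \bm a^\top$, $\Delta \bm B = \bm 0$, $\Delta \bm C = \bm 0$, $\Delta \ten S = \sigma \bm u_A \otimes \bm v \otimes \bm w$. In the expansion of the perturbed Tucker product in powers of $\epsilon$, every first-order contribution kills either $\bm A^\top \bm u_A$ or $\ten S(\bm u_A, \bm B, \bm C)$, while the surviving second-order term $\Delta \ten S(\Delta \bm A, \bm B, \bm C) = \sigma \bm a \otimes \bm b \otimes \bm c$ lowers $L$ by $2|\tau|\epsilon^2 + O(\epsilon^3)$. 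Lemma~\ref{lem:reg_perturb} bounds the growth of $R$ by $O(\epsilon^4)$, giving a direction of improvement for small $\epsilon$.

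The main obstacle is the hard case, where no such $\bm b, \bm c$ live in the row-spans; writing $\bm P_B, \bm P_C$ for the projections onto the row-spans, this means $\bm P_B \bm M_{\bm a} \bm P_C = \bm 0$. Since the columns and rows of $\bm M_{\bm a}$ already lie in the column-spans of $\ten T_{(2)}, \ten T_{(3)}$, and (by Section~\ref{sec:extradirections}) the row-spans of $\bm B, \bm C$ are contained in these column-spans, this vanishing forces rank-deficiency in at least one of $\bm B, \bm C$. In the worst subcase both are rank-deficient and $\bm P_B \bm M_{\bm a} = \bm M_{\bm a} \bm P_C = \bm 0$; pick unit $\bm b \perp$ row-span$(\bm B)$, unit $\bm c \perp$ row-span$(\bm C)$ with $\tau \ne 0$, and left null vectors $\bm u_B, \bm u_C$. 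With $\Delta \bm A = \bm u_A \bm a^\top$, $\Delta \bm B = \bm u_B \bm b^\top$, $\Delta \bm C = \bm u_C \bm c^\top$, $\Delta \ten S = \sigma \bm u_A \otimes \bm u_B \otimes \bm u_C$, $|\sigma|=1$, $\sigma \tau > 0$, a term-by-term expansion shows every contribution of order $\epsilon, \epsilon^2, \epsilon^3$ to the perturbed Tucker product contains a factor from $\{\bm A^\top \bm u_A,\ \bm B^\top \bm u_B,\ \bm C^\top \bm u_C,\ \ten S(\bm u_A, \bm I, \bm I),\ \ten S(\bm I, \bm u_B, \bm I),\ \ten S(\bm I, \bm I, \bm u_C)\}$ and hence vanishes, leaving the pure fourth-order term $\epsilon^4 \sigma \bm a \otimes \bm b \otimes \bm c$, which lowers $L$ by $2|\tau|\epsilon^4 + O(\epsilon^5)$.

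The key to beating the regularizer at this high order is the choice $|\sigma|=1$. Direct calculation shows that after the perturbation $\bm A \bm A^\top$ increases by exactly $\epsilon^2 \bm u_A \bm u_A^\top$ (using $\bm A \bm a = \bm 0$, $\|\bm a\|=\|\bm u_A\|=1$), and $\ten S_{(1)} \ten S_{(1)}^\top$ increases by exactly $\epsilon^2 \sigma^2 \bm u_A \bm u_A^\top$, since the potential $\epsilon$-order cross term $\ten S_{(1)}(\bm u_B \otimes \bm u_C) = \ten S(\bm I, \bm u_B, \bm u_C)$ factors through $\ten S(\bm I, \bm u_B, \bm I) = \bm 0$. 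With $\sigma^2 = 1$ the two $\epsilon^2$ corrections cancel exactly, and symmetric reasoning in modes $2, 3$ leaves $R$ identically $0$ after the perturbation. Thus $L$ strictly decreases while $R$ stays at $0$, producing the direction of improvement; the intermediate subcases (only one of $\bm b, \bm c$ forced outside its row-span) are handled by analogous hybrid perturbations giving $\Omega(\epsilon^3)$ decreases in $L$ against the $O(\epsilon^4)$ regularizer growth from Lemma~\ref{lem:reg_perturb}.
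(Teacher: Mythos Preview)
Your proposal is correct and follows essentially the same approach as the paper: choose $\bm a$ in the column-span of $\ten T_{(1)}$ with $\bm A\bm a=\bm 0$, split into cases according to whether suitable $\bm b,\bm c$ lie in the row-spans of $\bm B,\bm C$, and in each case perform matched rank-one updates to $\ten S$ and the relevant factor matrices, using Lemma~\ref{lem:reg_perturb} for the $O(\epsilon^4)$ regularizer bound in the one- and two-missing cases and the exact $\bm u\bm u^\top$ cancellation (your $|\sigma|=1$ observation) in the three-missing case. The only notable differences are organizational: the paper decomposes $\bm b=\bm b_1+\bm b_2$, $\bm c=\bm c_1+\bm c_2$ up front and picks whichever of the four pieces gives $\ten T(\bm a,\bm b_i,\bm c_j)>0$, whereas you phrase the split as easy/intermediate/worst; and you make explicit the consequence $\ten S(\bm u_A,\bm I,\bm I)=\bm 0$ of $R=0$, which the paper uses implicitly via ``the column-space of $\ten S_{(1)}$ is equal to the column-space of $\bm A$.'' Your appeal to Section~\ref{sec:extradirections} to force rank-deficiency of $\bm B,\bm C$ is technically outside the hypotheses of the lemma as stated, but the paper's proof has the same implicit reliance (it simply asserts the existence of $\bm v$ with $\bm B^\top\bm v=\bm 0$), and in the context of Theorem~\ref{thm:exact} this containment has already been established.
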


\begin{proof}
If the row-span of $\bm A$ is a strict subset of column-span of $\ten T_{(1)}$, we must have a vector $\bm a$ that is in the column-span of $\ten T_{(1)}$, but $\bm A \bm a = \bm 0$. For this vector we know $\ten T(\bm a, \bm I, \bm I) \ne \bm 0$, therefore there must exist vectors $\bm b,\bm c$ such that $\ten T(\bm a,\bm b, \bm c) > 0$. 
This is true even if we restrict $\bm b$ to be either in the row span of $\bm B$ or to satisfy $\bm B b = 0$ (and similarly for $\bm c$), as we can partition the matrix into 4 subspaces based on the projections of its columns to row span of $\bm B$ (and its rows to row span of $\bm C$).
In particular, if we let $\bm b_1$ be the projection of $\bm b$ onto the row-span of $\bm B$ and $\bm c_1$ be the projection of $\bm c$ onto the row-span of $\bm C$, and set $\bm b_2 = \bm b - \bm b_1$ and $\bm c_2 = \bm c - \bm c_1$, then we have $\ten T(\bm a, \bm b, \bm c) = \sum_{i,j\in \{1,2\}} \ten T(\bm a, \bm b_i, \bm c_j).$
Hence, $\ten T(\bm a, \bm b_i, \bm c_j) > 0$ for some choice of $i, j \in \{1,2\}$.

\vspace*{0.1in}
{\noindent \bf One missing direction}
In this case  $\bm b$ and $\bm  c$ are in row span of $\bm B,\bm C$ respectively. 
Choose unit vectors $\bm u,\bm v,\bm w\in \mathbb{R}^r$ such that $\bm A^\top \bm u = \bm 0$, $\bm B^\top \bm v = \alpha_1 \bm b$, and $\bm C ^\top \bm w = \alpha_2\bm  c$, where $\alpha_1$ and $\alpha_2$ are positive real numbers. 
Consider the directions $\Delta \bm A =\bm u\bm a^\top$, $\Delta \ten S =  \bm u\otimes \bm v\otimes \bm w$.
Observe that $\Delta \point = \bm A^\top\bm u\otimes \bm B^\top\bm v\otimes \bm C^\top\bm w = \bm 0$ and $\ten S(\Delta \bm A,\bm  B, \bm C) = \bm 0$ since the column-space of $\bm S_{(1)}$ is equal to the column-space of $\bm A$. 
Moreover, $\Delta \ten S(\Delta \bm A, \bm B, \bm C) = \bm a\otimes \bm B^\top \bm v\otimes \bm C^\top \bm w = \alpha_1\alpha_2\bm a\otimes \bm b \otimes \bm c$.
Hence,  for $\epsilon > 0$, we have
\begin{align*}
L(\ten S + \epsilon \Delta \ten S, \bm A + \epsilon \Delta \bm A, \bm B, \bm C) &= \|\diff + \epsilon^2\Delta\ten S(\Delta \bm A, \bm B, \bm C)\|_F^2\\
&= L(\tup)-2\epsilon^2\alpha_1\alpha_2\ten T(\bm a,\bm b,\bm c) + O(\epsilon^4).
\end{align*}
On the other hand, by Lemma~\ref{lem:reg_perturb} $R(\ten S + \epsilon\Delta\ten S, \bm A + \epsilon \Delta \bm A, \bm B, \bm C) = O(\epsilon^4)$ since $R(\tup) = 0$.
Hence, for small enough $\epsilon$, the improvement in the tensor loss dominates all other perturbations, so we have a direction of improvement. 

\vspace*{0.1in}
{\noindent \bf Two missing directions}
Now assume that $\bm A\bm a = \bm B\bm b = \bm 0$, and $\bm c$ is in the row span of $\bm C$. 
Choose unit vectors $\bm u,\bm v,\bm w \in \mathbb{R}^r$ such that $\bm A^\top \bm u = \bm B^\top \bm v =\bm 0$ and $\bm C^\top \bm w = \alpha \bm c$ where $\alpha > 0$.
Consider the directions $\Delta \bm A =   \bm u\bm a^\top$, $\Delta \bm B = \bm v\bm b^\top$, $\Delta \ten S = \bm u\otimes \bm v\otimes \bm w$.
Through a very similar calculation as in the previous case,
\[
L(\ten S+\epsilon \Delta \ten S, \bm A+\epsilon \Delta \bm A, \bm B+\epsilon \Delta \bm B,\bm C) = L(\tup) - 2\epsilon^3\alpha \ten T(\bm a,\bm b,\bm c) + \epsilon^6\alpha^2.
\]
As before, by Lemma~\ref{lem:reg_perturb} $R(\ten S + \epsilon\Delta\ten S, \bm A + \epsilon \Delta \bm A, \bm B +\epsilon \Delta \bm B, \bm C) = O(\epsilon^4)$.
Hence, the decrease in the tensor loss dominates all other perturbations for sufficiently small $\epsilon$, and so this is a direction of improvement. Note that in this case the amount of improvement is $\Theta(\epsilon^3)$, so the point is a 3rd order saddle point.

The case where $\bm C \bm c = \bm 0$ and $\bm b$ is in the row-span of $\bm B$ is similar, and likewise yields a direction of improvement.

\vspace*{0.1in}
\noindent
{\bf Three missing directions}
Now assume that $\bm A\bm a = \bm B\bm b = \bm C\bm c = \bm 0$, and choose unit vectors $\bm u,\bm v,\bm w \in \mathbb{R}^r$ such that $\bm A^\top \bm a  = \bm B^\top \bm v = \bm C^\top \bm w = \bm 0$.
Consider the  directions $\Delta \bm A = \bm u\bm a^\top$, $\Delta \bm B = \bm v\bm b^\top$, $\Delta \bm C = \bm w\bm c^\top$, and $\Delta \ten S = \bm u\otimes\bm  v\otimes \bm w$.
Once again, most perturbations in the tensor loss vanish, and we have
\[
L(\ten S+\epsilon \Delta \ten S, \bm A+\epsilon\Delta \bm A, \bm B+\epsilon \Delta \bm B,\bm C+\epsilon \Delta \bm C) = L(\tup) - 2\epsilon^4 \ten T(\bm a,\bm b,\bm c) + \epsilon^8.
\]
In this case, the regularizer doesn't change at all,
since $\Delta \ten S_{(i)}\ten S_{(i)}^\top = \bm 0$ for $i = 1,2,3$, $\Delta \bm A \bm A^\top = \Delta \bm B \bm B^\top = \Delta \bm C \bm C^\top =\bm  0$, 
and $\Delta \bm A\Delta \bm A^\top - \Delta \ten S_{(1)}\Delta \ten S_{(1)}^\top  = \bm u\bm u^\top - \bm u \bm u^\top =\bm 0$ (and the two other analogous terms likewise vanish).
Hence, for sufficiently small $\epsilon$, the objective function decreases, so this is a direction of improvement. This point is a 4th order saddle point.

\end{proof}

\subsection{Improving the core tensor}
\label{sec:handlings}
We finally consider the case where the matrices $\bm A, \bm B, \bm C$ have the correct row-spaces but $\point \neq\ten T$. In this situation, we can make 
progress by changing only $\ten S$.

\begin{lemma}\label{lem:handlings}
If $R = 0$, row spans of $\bm A,\bm B,\bm C$ are equal to column span of $\ten T_{(1)},\ten T_{(2)}, \ten T_{(3)}$ respectively, but $L > 0$, then there exists a direction of improvement.
\end{lemma}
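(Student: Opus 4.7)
The plan is to construct a direction of improvement that perturbs only the core tensor: set $\Delta \bm A = \Delta \bm B = \Delta \bm C = \bm 0$ and take
\[
\Delta \ten S = -(\diff)(\bm A^\top, \bm B^\top, \bm C^\top) = -\tfrac{1}{2}\nabla_{\ten S}L.
\]
Using the adjoint identity $\langle \ten X, \tuk{Y}{A}{B}{C}\rangle = \langle \ten X(\bm A^\top,\bm B^\top,\bm C^\top), \ten Y\rangle$, which follows directly from the definition of $\tuk{\cdot}{\cdot}{\cdot}{\cdot}$, the tensor loss expands as
\[
L(\ten S+\epsilon\Delta\ten S, \bm A, \bm B, \bm C) = \|(\diff) + \epsilon\,\tuk{\Delta S}{A}{B}{C}\|_F^2 = L(\tup) - 2\epsilon\|\Delta \ten S\|_F^2 + O(\epsilon^2).
\]
Meanwhile, Lemma~\ref{lem:reg_perturb} bounds the change in the regularizer by $O(\epsilon^4)$ (after, if necessary, rescaling $\Delta \ten S$ to have $O(1)$ norm). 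Hence, as long as $\Delta \ten S \neq \bm 0$, all sufficiently small $\epsilon > 0$ yield a strict decrease in the objective.

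The only nontrivial step is to verify $\Delta \ten S \neq \bm 0$ whenever $L > 0$. Let $V_i \subseteq \R^d$ denote the column span of $\ten T_{(i)}$; by hypothesis, $V_i$ equals the row span of the $i$-th factor matrix. First, $\diff$ lies in $V_1\otimes V_2\otimes V_3$: this is immediate for $\ten T$, while for $\tuk{S}{A}{B}{C}$ the flattening identity $\tuk{S}{A}{B}{C}_{(1)} = \bm A^\top \ten S_{(1)}(\bm B\otimes \bm C)$ (and its mode-$2,3$ analogues) shows that its mode-$i$ column span lies in the row span of the $i$-th factor, which is $V_i$. Second, each factor matrix viewed as a linear map $\R^d\to\R^r$ has kernel $V_i^\perp$, so the restriction $\bm A|_{V_1}: V_1 \to \R^r$ is injective and hence an isomorphism by equality of dimensions, and similarly for $\bm B|_{V_2}$ and $\bm C|_{V_3}$. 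The multilinear contraction $\ten X \mapsto \ten X(\bm A^\top, \bm B^\top, \bm C^\top)$ restricted to $V_1\otimes V_2\otimes V_3$ is the tensor product of these three isomorphisms and is therefore itself an isomorphism onto $\R^r\otimes \R^r\otimes \R^r$. Since $L > 0$ forces $\diff \neq \bm 0$, we conclude $\Delta \ten S \neq \bm 0$.

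The main obstacle is the subspace-injectivity argument above; once it is in hand, the rest is a routine first-order Taylor expansion of $L$ combined with a direct appeal to Lemma~\ref{lem:reg_perturb}. In contrast to the preceding lemmas in this section, no higher-order saddle behavior arises here, since the perturbation is in the direction of the negative partial gradient and produces a strict decrease at first order in $\epsilon$.
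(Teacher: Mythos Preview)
Your proof is correct. Both you and the paper perturb only the core tensor $\ten S$ and invoke Lemma~\ref{lem:reg_perturb} to dispose of the regularizer, but the chosen direction differs: the paper sets $\ten S' = \ten T(\bm A^+,\bm B^+,\bm C^+)$ and takes $\Delta\ten S = \ten S'-\ten S$, which yields the exact identity $L(\ten S+\epsilon\Delta\ten S,\bm A,\bm B,\bm C)=(1-\epsilon)^2 L(\tup)$, whereas you move along the negative partial gradient $\Delta\ten S=-\tfrac12\nabla_{\ten S}L$ and obtain only a first-order decrease $-2\epsilon\|\Delta\ten S\|_F^2$. The paper's pseudoinverse construction implicitly uses that $\ten S'(\bm A,\bm B,\bm C)=\ten T$, which relies on the same row-span hypothesis; your argument makes this explicit via the injectivity of $\ten X\mapsto \ten X(\bm A^\top,\bm B^\top,\bm C^\top)$ on $V_1\otimes V_2\otimes V_3$. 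The paper's route is a bit slicker (no Taylor remainder, no separate injectivity argument), while yours is more elementary in that it avoids pseudoinverses and simply follows the gradient; the two directions coincide when $\bm A,\bm B,\bm C$ have orthonormal rows but differ otherwise.
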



\begin{proof}
Since the spans of $\bm A,\bm B, \bm C$ are already correct, let $\bm A^+$ be the pseudoinverse of $\bm A$, then if we let $\ten S' = \ten T(\bm A^+, \bm B^+, \bm C^+)$, we have $\ten S'(\bm A,\bm B,\bm C) = \ten T$.
Consider the direction $\Delta \ten S = \ten S' - \ten S$.
\begin{align*}
L(\ten S+\epsilon \Delta \ten S, \bm A, \bm B, \bm C) &= \|(1-\epsilon)\point - (1-\epsilon)\ten T\|_F^2\\
&= (1-\epsilon)^2L(\tup).
\end{align*}

For the regularizer $R$, again by Lemma~\ref{lem:reg_perturb} we have $R(\ten S+\epsilon \Delta \ten S, \bm A, \bm B, \bm C) = O(\epsilon^4)$. Hence, this is a direction of improvement.
\end{proof}

\subsection{Proof of Main Theorem}

Now with all the lemmas we are ready to prove the main theorem:

\begin{proof}[Proof of Theorem~\ref{thm:exact}] The Theorem follows immediately from the sequence of lemmas.

First, by Lemma~\ref{lem:nonzeroreg}, we know any local minima must satisfy $R = 0$. Next, by Lemma~\ref{lem:zeroa2} and Lemma~\ref{lem:a21}, we know the row spans of $\bm A$, $\bm B$, $\bm C$ must be subsets of column spans of $\ten T_{(1)}, \ten T_{(2)}, \ten T_{(3)}$ respectively. In the third step, by Lemma~\ref{lem:missingdirections}, we further show that  the row spans of $\bm A$, $\bm B$, $\bm C$ must be exactly equal to column spans of $\ten T_{(1)}, \ten T_{(2)}, \ten T_{(3)}$ respectively. Finally, by Lemma~\ref{lem:handlings} we know the loss function must be equal to 0.
\end{proof}

\section{Escaping from High Order Saddle Points for Tucker Decomposition}
\label{sec:inexact}

As we discussed before, since our objective $f = L+\lambda R$ as in \eqref{eqn:obj} may have high order saddle points, standard local search algorithms may not be able to find a local minimum. However, in this section we show that the high order saddle points of $f$ are {\em benign}: there is a polynomial time local search algorithm that can find an approximate local and global minimum of $f$.

We will first review the guarantees of standard local search algorithms, and then describe how to escape from high order saddle points.

\subsection{Local search algorithms for second order stationary points}

For a general function $f(\bm x)$ whose first two derivatives exist, we say a point $\bm x$ is a $(\tau_1,\tau_2)$-second order stationary point if
$$
\|\nabla f(\bm x)\| \le \tau_1, \quad \lambda_{min}(\nabla^2 f(\bm x)) \ge -\tau_2.
$$
If the function $f(\bm x)$ satisfies the gradient and Hessian Lipschitz conditions
\begin{align*}
\forall \bm x, \bm y & \quad \|\nabla f(\bm x) - \nabla f(\bm y)\| \le \rho_1 \|\bm x-\bm y\|_2,\\
\forall \bm x, \bm y & \quad \|\nabla^2 f(\bm x) - \nabla^2 f(\bm y )\| \le \rho_2 \|\bm x-\bm y\|_2,
\end{align*}
there are many local search algorithms that can find $(\tau_1,\tau_2)$-second order stationary points in polynomial time. This includes traditional second order algorithms such as cubic regularization~\citep{nesterov2006cubic}, and more recently first order algorithms such as perturbed gradient descent~\citep{jin2017escape}.

Of course, these guarantees are not enough for our objective $f$, as it has higher order saddle points. The main theorem in this section shows that there is an efficient local search algorithm that can optimize $f$.
\begin{theorem}\label{thm:robust}
Let $\lambda = 1/16r^4$, assume wlog. that $\|\ten T\|_F = 1$ and the initial point satisfies $f = L+\lambda R= O(1)$\footnote{This can be achieved by initializing at 0, or any point with norm $O(1)$.}. Then there is a local search algorithm that in $\mbox{poly}(d, r, 1/\epsilon)$ time finds a point $(\tup)$ such that $f(\tup) \le \epsilon$.
\end{theorem}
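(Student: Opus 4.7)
The plan is to construct a local search algorithm that alternates between a standard second-order method (e.g., perturbed gradient descent \citep{jin2017escape}) and an explicit escape routine that handles the 3rd- and 4th-order saddles identified in Lemma~\ref{lem:missingdirections}. The landscape analysis of Section~\ref{sec:landscape} already provides algebraic directions of improvement at every non-optimal point; the remaining task is to make each step quantitative and constructive so that polynomially many iterations suffice. The specific weight $\lambda = 1/16r^4$ is chosen so that $R$'s contribution is small relative to $L$ while still preventing the scaling blow-up.

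For the standard phase, I would first verify that in any bounded region the gradient and Hessian of $f$ have polynomial Lipschitz constants, and show that under the initial assumption $f = O(1)$ the iterates stay in a $\mathrm{poly}(d,r)$-norm ball---this is where the regularizer $R$ is essential, otherwise the rescaling described in Section~\ref{sec:optimizationproblem} lets iterates escape to infinity. Perturbed gradient descent then produces a $(\tau_1,\tau_2)$-second-order stationary point in $\mathrm{poly}(d,r,1/\tau_1,1/\tau_2)$ time. At such a point I would derive quantitative versions of Lemmas~\ref{lem:nonzeroreg}--\ref{lem:zeroa2}: Cauchy--Schwarz with Lemma~\ref{lem:reg_gradient} yields $R = O(\tau_1)$; substituting the one-missing-direction perturbation into the second-order condition forces $|\ten T(\bm a, \bm b, \bm c)|$ to be small whenever $\bm A\bm a = \bm 0$ and $\bm b, \bm c$ lie in the row spans of $\bm B, \bm C$; applying the Hessian bound to the $\Delta \bm A = -\bm A_2$ perturbation of Lemma~\ref{lem:a21} forces $\bm A_2, \bm B_2, \bm C_2$ to be small. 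Together these imply that the row spans of $\bm A, \bm B, \bm C$ are approximately contained in the correct subspaces, and the Lemma~\ref{lem:handlings} direction (a 2nd-order improvement) is automatically exploited by the standard method.

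The escape routine addresses the higher-order saddles that standard methods cannot touch. Let $\bm P_{\bm A}, \bm P_{\bm B}, \bm P_{\bm C}$ denote projections onto the current row spans. I would inspect the tensors $\ten T(\bm I - \bm P_{\bm A}, \bm I - \bm P_{\bm B}, \bm P_{\bm C})$ (and its two cyclic analogues) and $\ten T(\bm I - \bm P_{\bm A}, \bm I - \bm P_{\bm B}, \bm I - \bm P_{\bm C})$; if any of these has significant Frobenius norm, I would compute its HOSVD---since after further projection onto the column spans of $\ten T_{(1)}, \ten T_{(2)}, \ten T_{(3)}$ the tensor lives in an ambient space of dimension at most $r$ per mode---and extract a triple $\bm a, \bm b, \bm c$ from the dominant singular components, then take the explicit perturbation of the matching case (two- or three-missing) of Lemma~\ref{lem:missingdirections}.

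The main obstacle will be quantifying the improvement in the three-missing-direction case, where the decrease per step is only $\Theta(\epsilon^4 \ten T(\bm a, \bm b, \bm c))$ and hence yields polynomial progress only if $|\ten T(\bm a, \bm b, \bm c)| \ge 1/\mathrm{poly}(d,r)$. Here I would use the tensor-norm inequality $\|\ten X\|_F \le r\|\ten X\|_2$ from Section~\ref{sec:tensordef}: since the relevant residual lives in a space of dimension at most $r$ per mode, its spectral norm is within a $\mathrm{poly}(r)$ factor of its Frobenius norm, so a good triple can be extracted from its HOSVD. Coupled with a triangle-inequality argument showing that when $L > \epsilon$ at least one of the 2-missing or 3-missing residuals has Frobenius norm $\ge \epsilon/\mathrm{poly}(d,r)$ (because the remaining parts of the residual are already controlled by the second-order conditions), each iteration of the combined algorithm decreases $f$ by at least a $1/\mathrm{poly}$ amount, giving the claimed $\mathrm{poly}(d,r,1/\epsilon)$ runtime.
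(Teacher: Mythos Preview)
Your high-level strategy---alternate a standard second-order method with an explicit escape routine for the 3rd- and 4th-order saddles of Lemma~\ref{lem:missingdirections}---matches the paper exactly, and your identification of which landscape lemmas need quantitative analogues is correct. There is, however, one genuine gap and one methodological difference worth flagging.

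\textbf{The gap: missing singular-value threshold.} You take $\bm P_{\bm A}$ to be the projection onto the full row span of $\bm A$, but at a $(\tau_1,\tau_2)$-approximate stationary point nothing prevents $\bm A$ from having full rank $r$ with some singular values that are tiny (say of order $\tau_1^{10}$). In that case $\bm I-\bm P_{\bm A}=\bm 0$ and your escape routine finds no residual, yet those tiny singular directions contribute almost nothing to $\point$, so $L$ can still be $\Omega(1)$. The same issue breaks the one-missing-direction Hessian argument: you need a vector $\bm a$ with $\bm A\bm a=\bm 0$, which need not exist. The paper handles this by fixing a threshold $\sigma$ and splitting $\bm A=\bm A_1+\bm A_2$ where $\bm A_1$ retains only singular values above $\sigma$; all ``missing direction'' arguments are then run relative to the row span of $\bm A_1$, and a separate perturbation lemma (Lemma~\ref{lem:rank_bound}) shows that if the relevant residual block $\ten T_{2,j,k}$ is large then $\bm A_1$ is genuinely rank-deficient. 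This thresholding drives the whole parameter hierarchy $(\tau,\gamma,\sigma,\kappa_i)$ in Section~\ref{sec:inexact} and is not optional.

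\textbf{The difference: HOSVD versus random sampling.} For the escape step the paper does not use HOSVD; it samples $\bm a,\bm b,\bm c$ uniformly from the appropriate subspaces and invokes a Carbery--Wright anti-concentration bound (Lemma~\ref{lem:anti}) to get $|\ten T(\bm a,\bm b,\bm c)|\ge \|\ten T_{i,j,k}\|_F/\mathrm{poly}(r)$ with constant probability. Your appeal to $\|\ten X\|_F\le r\|\ten X\|_2$ correctly shows such a triple \emph{exists}, but HOSVD does not compute the tensor spectral norm (that problem is NP-hard), and the top singular vectors of the three unfoldings need not align to give a large $\ten T(\bm a,\bm b,\bm c)$. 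A deterministic fix is available---after HOSVD the core is $r\times r\times r$, so enumerating its $r^3$ entries locates one of magnitude at least $\|\ten T_{i,j,k}\|_F/r^{3/2}$---but as written your ``extract the dominant singular components'' step is not justified. The paper's randomized route avoids this subtlety entirely at the cost of a constant success probability per call.
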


The algorithm that we will design is just a proof of concept: although its running time is polynomial, it is far from practical. We have not attempted to improve the dependencies on $d, r, 1/\epsilon$. Local search algorithms seem to perform much better for Tucker decomposition in practice, and understanding that is an interesting open problem.

To prove Theorem~\ref{thm:robust}, we will first show that sublevel sets of $f$ are all bounded (Section~\ref{subsec:bounded}). 
This allows us to bound the gradient and Hessian Lipschitz constants $\rho_1$ and $\rho_2$, so we can use any of the previous local search algorithm to find a $(\tau_1,\tau_2)$-second order stationary point.

Next, we follow the steps of Theorem~\ref{thm:exact}, but we do it much more carefully to show that as long as the objective is larger than $\epsilon$, then either the point has a large gradient or a negative eigenvalue in Hessian, or there is a way to construct a direction of improvement. This is captured in our main Lemma~\ref{lem:robustmain}.

Finally we give a sketch of the algorithm and show that these local improvements are enough to guarantee the convergence in Section~\ref{sec:algdesc}.

Throughout the section, we use $\so(\cdot)$, $\somega(\cdot)$ and $\stheta(\cdot)$ to hide polynomial factors of $r$ and $d$. 
We will introduce several numerical quantities in the remainder of this section; we list the most important ones in Table \ref{tab:parameters}.

\begin{table}
\centering
\begin{tabular}{ccl}
Symbol & Definition & Note\\
\hline
$\lambda$ & $\frac{1}{16r^4}$ & weight for regularizer\\
$K$ & $O^*(1)$ & universal bound for norms of $\tup,\ten T$\\
$\tau$ & $< 1$ & bound on $R (\tup)$\\
$\gamma$ & $\stheta(\tau^{1/48})$ & bound on the norm  of $\bm A_3, \bm B_3, \bm C_3$, introduced in Lemma \ref{lem:removing_directions}\\
$\sigma$ & $\sqrt{\gamma}$ & singular value threshold for $\bm A_1, \bm B_1, \bm C_1$\\
$\kappa_0$ & $\sqrt{\gamma}$ & max error in $\ten T_{1,1,1}$, introduced in Lemma \ref{lem:improve_S}\\
$\kappa_1$ & $2K\sigma^{3/4}$ & max error in $\ten T_{2,1,1}$, introduced in Lemma \ref{lem:add_1_direction}\\
$\kappa_2$ & $2K\sigma^{1/8}$ & max error in $\ten T_{2,2,1}$, introduced in Lemma \ref{lem:add_2_directions}\\
$\kappa_3$ & $2K\sigma^{1/2}$ & max error in $\ten T_{2,2,2}$, introduced in Lemma \ref{lem:add_3_directions}
\end{tabular}
\caption{Notation and definitions used in Section \ref{sec:inexact}\label{tab:parameters}}
\end{table}

%
%

\subsection{Bounded Sublevel Sets}
\label{subsec:bounded}
We first establish the boundedness of sublevel sets of the objective function. 
Our local search algorithm will guarantee the function value decreases in every iteration, so the trajectory of the algorithm will remain in a sublevel set. As a result, we know that the parameters remain bounded in norm at each step by some constant, say $K$. 

\begin{lemma}
\label{lem:bounded_sublevel}
For all $\Gamma \geq 0$, the set of points $(\tup)$ with $f(\tup) \le \Gamma$ satisfy that $\|\ten S\|_F, \|\bm A\|_F, \|\bm B\|_F, \|\bm C\|_F \le K$ where $K = \so((\Gamma+1)^{1/8})$.
\end{lemma}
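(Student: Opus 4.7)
The plan is to extract the two consequences $L\le\Gamma$ and $R\le\Gamma/\lambda$ from $f\le\Gamma$. Since $R=\reg^2$ and $\reg$ is itself a sum of three squared Frobenius norms, the latter yields $\|\bm A\bm A^\top-\ten S_{(1)}\ten S_{(1)}^\top\|_F\le (\Gamma/\lambda)^{1/4}$ and the two analogous inequalities for modes $2,3$. Taking traces and using $|\mathrm{tr}\,\bm M|\le\sqrt{r}\|\bm M\|_F$ gives $\bigl|\|\bm A\|_F^2-\|\ten S\|_F^2\bigr|\le\sqrt{r}(\Gamma/\lambda)^{1/4}$, and similarly for $\bm B,\bm C$. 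So it is enough to bound $t:=\|\ten S\|_F$; the factor norms then inherit a bound of the same order.

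I would next exploit the orthogonal invariance of $f$ under $(\ten S,\bm A,\bm B,\bm C)\mapsto (\ten S(\bm O_A,\bm O_B,\bm O_C),\bm O_A^\top \bm A,\bm O_B^\top \bm B,\bm O_C^\top \bm C)$ for any orthogonal $\bm O_A,\bm O_B,\bm O_C\in\R^{r\times r}$---which preserves $L$, $R$, and all Frobenius norms in the statement---to assume WLOG that $\bm A\bm A^\top=\bm D_A$, $\bm B\bm B^\top=\bm D_B$, $\bm C\bm C^\top=\bm D_C$ are simultaneously diagonal (rotate by the eigenvectors of each). In this basis the identity $\|\tuk{S}{A}{B}{C}\|_F^2=\mathrm{tr}(\bm A\bm A^\top\ten S_{(1)}(\bm B\bm B^\top\otimes\bm C\bm C^\top)\ten S_{(1)}^\top)$ collapses to
\[
\|\tuk{S}{A}{B}{C}\|_F^2 \;=\; \sum_{i,j,k} d_{A,i}\,d_{B,j}\,d_{C,k}\,\ten S_{ijk}^2.
\]
Writing $\alpha_i:=\sum_{jk}\ten S_{ijk}^2=(\ten S_{(1)}\ten S_{(1)}^\top)_{ii}$ (and analogously $\beta_j,\gamma_k$), the diagonal-entry bound $|d_{A,i}-\alpha_i|\le\|\bm D_A-\ten S_{(1)}\ten S_{(1)}^\top\|_F\le (\Gamma/\lambda)^{1/4}$ lets me expand $d_{A,i}d_{B,j}d_{C,k}=(\alpha_i+e_i^A)(\beta_j+e_j^B)(\gamma_k+e_k^C)$ and dominate each cross term by the crude marginal estimate $\sum_{ijk}\beta_j\gamma_k\ten S_{ijk}^2\le t^6$ and its analogues, yielding
\[
\|\tuk{S}{A}{B}{C}\|_F^2 \;\ge\; \sum_{ijk}\alpha_i\beta_j\gamma_k\,\ten S_{ijk}^2 \;-\; O\bigl((\Gamma/\lambda)^{1/4}\,t^6\bigr).
\]

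The core of the argument is the lower bound $\sum_{ijk}\alpha_i\beta_j\gamma_k\ten S_{ijk}^2\ge t^8/r^9$. Rescaling $p_{ijk}:=\ten S_{ijk}^2/t^2$ to a probability distribution on $[r]^3$ with marginals $p^{(A)},p^{(B)},p^{(C)}$, this sum becomes $t^8\sum p_{ijk}p_i^{(A)}p_j^{(B)}p_k^{(C)}$. Since $p_i^{(A)}=\sum_{j'k'}p_{ij'k'}\ge p_{ijk}$ (and analogously for $p_j^{(B)},p_k^{(C)}$), we have $p_i^{(A)}p_j^{(B)}p_k^{(C)}\ge p_{ijk}^3$; the power-mean inequality on $r^3$ nonnegative terms summing to $1$ then gives $\sum p_{ijk}^4\ge r^{-9}$. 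Combined with $\|\tuk{S}{A}{B}{C}\|_F^2\le (\|\ten T\|_F+\sqrt{\Gamma})^2\le O(1+\Gamma)$ (using $\|\ten T\|_F\le 1$), one obtains $t^8/r^9\le O(1+\Gamma)+O((\Gamma/\lambda)^{1/4}t^6)$. A case split---either the regularizer-error dominates, forcing $t^2\le O(r^9(\Gamma/\lambda)^{1/4})$, or the data term does, forcing $t^8\le O(r^9(1+\Gamma))$---yields $t\le \so((1+\Gamma)^{1/8})$ after substituting $\lambda=1/(16r^4)$.

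The main obstacle I expect is keeping the error $O((\Gamma/\lambda)^{1/4}t^6)$ tight: the $t^6$ power must remain strictly below the leading $t^8$ in the main term, or the case analysis breaks. Verifying this requires handling each of the six cross contributions in the expansion by the crude marginal estimate $\sum_{ijk}\beta_j\gamma_k\ten S_{ijk}^2\le t^6$ without needing sharper manipulations. A secondary technical point is justifying the orthogonal invariance---immediate from the transformation law of the Tucker multilinear form together with the fact that $\|\bm O^\top\bm M\bm O\|_F=\|\bm M\|_F$ for orthogonal $\bm O$---which makes the WLOG diagonalization rigorous.
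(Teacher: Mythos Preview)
Your proposal is correct and follows the same high-level structure as the paper: bound $\|\point\|_F$ from $L\le\Gamma$, replace $\bm A\bm A^\top,\bm B\bm B^\top,\bm C\bm C^\top$ by $\ten S_{(i)}\ten S_{(i)}^\top$ using the regularizer bound $\|\bm A\bm A^\top-\ten S_{(1)}\ten S_{(1)}^\top\|_F\le(\Gamma/\lambda)^{1/4}$, show the resulting quantity scales like $\|\ten S\|_F^8$ up to error terms in $t^6,t^4,t^2$, and finally transfer the bound on $\|\ten S\|_F$ to the factor matrices via the regularizer.

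Where you differ is in the core lower bound. The paper works coordinate-free: it writes $\|\point\|_F^2=\langle\ten S(\bm A\bm A^\top,\bm B\bm B^\top,\bm C\bm C^\top),\ten S\rangle$, substitutes directly to obtain $\|\ten S(\ten S_{(1)},\ten S_{(2)},\ten S_{(3)})\|_F^2$ plus the same seven error terms, and then invokes a separate spectral-norm lemma (Lemma~\ref{lem:norm_bound_S}) saying $\|\ten S(\ten S_{(1)},\ten S_{(2)},\ten S_{(3)})\|_F\ge r^{-4}\|\ten S\|_F^4$, proved by evaluating at the top singular vectors of $\ten S$. Your route instead rotates to diagonalize the three Gram matrices, reduces to the scalar sum $\sum_{ijk}\alpha_i\beta_j\gamma_k\ten S_{ijk}^2$, and lower-bounds it via the marginal inequality $p_i^{(A)}p_j^{(B)}p_k^{(C)}\ge p_{ijk}^3$ together with the power-mean bound $\sum p_{ijk}^4\ge r^{-9}$. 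The paper's spectral argument is shorter and yields the slightly better constant $r^{-8}$ versus your $r^{-9}$, and it avoids the orthogonal-invariance reduction altogether; your combinatorial argument is more elementary in that it never appeals to the operator norm or the singular-vector identities from Section~\ref{sec:tensordef}. Both give $K=\so((\Gamma+1)^{1/8})$. One minor point: your displayed error $O((\Gamma/\lambda)^{1/4}t^6)$ omits the $(\Gamma/\lambda)^{1/2}t^4$ and $(\Gamma/\lambda)^{3/4}t^2$ cross terms, but these are dominated by the ones you kept whenever $t^2\ge(\Gamma/\lambda)^{1/4}$ (and otherwise $t$ is already small), so the case split still closes.
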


To prove this lemma, we will first state some tools that we need.

\begin{lemma}
\label{lemma:submult}
For any parameter tuple $(\tup)$, we have
\[
\|\point\|_F \leq \|\ten S\|_F\|\bm A\|_2\|\bm B\|_2\|\bm C\|_2.
\]
\end{lemma}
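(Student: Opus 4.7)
The plan is to reduce the tensor inequality to a standard matrix norm inequality via mode-1 flattening. First, I will recall from the preliminaries that Frobenius norm is preserved under flattening, so $\|\point\|_F = \|\point_{(1)}\|_F$, and that the identity $\point_{(1)} = \bm A^\top \ten S_{(1)}(\bm B\otimes \bm C)$ holds.

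Second, I will apply the standard matrix inequality $\|\bm X\bm Y\bm Z\|_F \le \|\bm X\|_2\|\bm Y\|_F\|\bm Z\|_2$ with $\bm X = \bm A^\top$, $\bm Y = \ten S_{(1)}$, and $\bm Z = \bm B\otimes \bm C$, yielding
\[
\|\point\|_F \le \|\bm A^\top\|_2 \,\|\ten S_{(1)}\|_F\, \|\bm B \otimes \bm C\|_2.
\]
Third, I will conclude using three elementary facts: $\|\bm A^\top\|_2 = \|\bm A\|_2$, $\|\ten S_{(1)}\|_F = \|\ten S\|_F$ (flattening rearranges entries without changing the Frobenius norm), and the well-known Kronecker identity $\|\bm B\otimes \bm C\|_2 = \|\bm B\|_2\|\bm C\|_2$.

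There is no real obstacle here; the result is essentially a submultiplicativity restatement, and the only ingredient beyond the preliminaries is the Kronecker spectral norm identity, which is standard and can simply be cited.
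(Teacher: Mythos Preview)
Your proof is correct and essentially the same as the paper's: both rely on the invariance of the Frobenius norm under flattening together with the submultiplicativity $\|\bm P\bm Q\|_F \le \|\bm P\|_2\|\bm Q\|_F$. The only cosmetic difference is that the paper peels off one mode at a time (applying the two-factor inequality three times), whereas you flatten once along mode~1 and invoke the Kronecker identity $\|\bm B\otimes \bm C\|_2 = \|\bm B\|_2\|\bm C\|_2$ to handle the remaining two modes in a single step.
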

\begin{proof}
This follows from the fact that $\|\cdot\|_F$ is invariant to matricization, and the fact that $\|\bm P \bm Q\|_F \le \|\bm P\|_2 \|\bm Q\|_F$.
Observe that 
\[
\|\point\|_F = \|\bm A^\top \ten S(\bm I,\bm B,\bm C)_{(1)}\|_F \leq \|\bm A\|_2\|\ten S(\bm I,\bm B,\bm C)\|_F,
\] and then repeat the argument for the other modes. 
\end{proof}

\begin{lemma}
\label{lem:norm_bound_S}
For any $\ten S \in \mathbb{R}^{r\times r \times r}$, it holds that
\[
 \|\ten S(\ten S_{(1)},\ten S_{(2)},\ten S_{(3)})\|_F \geq \frac{1}{r^4}\|\ten S\|_F ^4
\]
\end{lemma}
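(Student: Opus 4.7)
The plan is to reduce the Frobenius norm lower bound to the spectral norm via the obvious inequality $\|\cdot\|_F \geq \|\cdot\|_2$, and then exhibit three specific unit test vectors under which the trilinear form $\ten S(\ten S_{(1)}, \ten S_{(2)}, \ten S_{(3)})$ evaluates to $\|\ten S\|_2^4$. The final step is to invoke the norm relation $\|\ten S\|_2 \geq \|\ten S\|_F / r$ (which follows from the stated inequality $\|\ten X\|_2 \geq (\max(d_1,d_2,d_3)/(d_1d_2d_3))^{1/2}\|\ten X\|_F$ with $d_1 = d_2 = d_3 = r$) to conclude $\|\ten S\|_2^4 \geq \|\ten S\|_F^4 / r^4$.

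To construct the test vectors, I would first invoke the stated fact about tensor singular vectors: there exist unit vectors $\bm u, \bm v, \bm w \in \mathbb{R}^r$ and $\sigma = \|\ten S\|_2$ with
\[
\ten S(\bm u, \bm v, \bm w) = \sigma, \quad \ten S(\bm I, \bm v, \bm w) = \sigma\bm u, \quad \ten S(\bm u, \bm I, \bm w) = \sigma\bm v, \quad \ten S(\bm u, \bm v, \bm I) = \sigma\bm w.
\]
The key observation is that the action of the matricization $\ten S_{(1)} \in \mathbb{R}^{r \times r^2}$ on a vector of the form $\bm v \otimes \bm w$ (Kronecker product, which is a unit vector in $\mathbb{R}^{r^2}$) recovers $\ten S(\bm I, \bm v, \bm w)$, because $[\ten S_{(1)}]_{x,(y,z)} = \ten S_{xyz}$. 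Thus, setting $\bm p = \bm v \otimes \bm w$, $\bm q = \bm u \otimes \bm w$, $\bm r = \bm u \otimes \bm v$, each a unit vector in $\mathbb{R}^{r^2}$, gives $\ten S_{(1)} \bm p = \sigma \bm u$, $\ten S_{(2)} \bm q = \sigma \bm v$, $\ten S_{(3)} \bm r = \sigma \bm w$.

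The calculation that glues everything together is straightforward from the definition: since the construction $\ten X \mapsto \ten X(\bm A,\bm B, \bm C)$ is multilinear in $\bm A, \bm B, \bm C$,
\[
\ten S(\ten S_{(1)}, \ten S_{(2)}, \ten S_{(3)})(\bm p, \bm q, \bm r) = \ten S(\ten S_{(1)} \bm p,\, \ten S_{(2)} \bm q,\, \ten S_{(3)} \bm r) = \ten S(\sigma\bm u, \sigma\bm v, \sigma\bm w) = \sigma^4.
\]
Therefore $\|\ten S(\ten S_{(1)}, \ten S_{(2)}, \ten S_{(3)})\|_2 \geq \sigma^4$, and since the Frobenius norm dominates the spectral norm and $\sigma \geq \|\ten S\|_F/r$, the desired inequality follows.

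I do not expect a real obstacle here. The only conceptual step is spotting that a rank-one test vector $\bm v \otimes \bm w$ of the matricization $\ten S_{(1)}$ is precisely what reconstructs a mode-1 slice contraction $\ten S(\bm I, \bm v, \bm w)$; once that is noticed, the singular-vector identities for $\ten S$ multiply up to $\sigma^4$ automatically, and the norm relations give the factor $1/r^4$.
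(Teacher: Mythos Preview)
Your proposal is correct and is essentially identical to the paper's own proof: the paper also picks unit singular vectors $\bm u,\bm v,\bm w$ with $\ten S(\bm u,\bm v,\bm w)=\|\ten S\|_2$, uses $\ten S_{(1)}\operatorname{vec}(\bm v\otimes\bm w)=\|\ten S\|_2\,\bm u$ (and the analogous identities) to get $\|\ten S(\ten S_{(1)},\ten S_{(2)},\ten S_{(3)})\|_2\ge\|\ten S\|_2^4$, and finishes with $\|\ten S\|_F\le r\|\ten S\|_2$. The only cosmetic difference is that you name the Kronecker test vectors $\bm p,\bm q,\bm r$ explicitly, whereas the paper leaves them implicit in the line $\ten S(\|\ten S\|_2\bm u,\|\ten S\|_2\bm v,\|\ten S\|_2\bm w)$.
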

\begin{proof}
Let $\bm u, \bm v, \bm w \in \mathbb{R}^{r}$ be unit vectors such that $\ten S(\bm u,\bm v,\bm w) = \|\ten S\|_2$.
Then
\begin{align*}
\ten S_{(3)}\text{vec}(\bm u\otimes \bm v)&= \ten S(\bm u,\bm v,\bm I) = \|\ten S\|_2\bm w,\\
\ten S_{(2)}\text{vec}(\bm u\otimes \bm w)&= \ten S(\bm u,\bm I,\bm w) = \|\ten S\|_2\bm v,\\
\ten S_{(1)}\text{vec}(\bm v\otimes \bm w)&= \ten S(\bm I,\bm v,\bm w) = \|\ten S\|_2\bm u.
\end{align*}
Then
\begin{align*}
\|\ten S(\ten S_{(1)},\ten S_{(2)},\ten S_{(3)})\|_2 &\geq
\ten S(\|\ten S\|_2\bm u, \|\ten S\|_2\bm v,\|\ten S\|_2\bm w) 
= \|\ten S\|_2^3\ten S(\bm u,\bm v,\bm w)= \|\ten S\|_2^4
\end{align*}
The result then follows from the norm inequality $\|\ten S\|_F\leq r\|\ten S\|_2$.
\end{proof}

Now we are ready to prove Lemma~\ref{lem:bounded_sublevel}:

\begin{proof}[Proof of Lemma~\ref{lem:bounded_sublevel}]
Assume that $\Gamma \geq f(\tup)$. 
From $L$, we have
\begin{align*}
\sqrt{\Gamma} &\geq \|\point-\ten T\|_F\\
&\geq \|\point\|_F - \|\ten T\|_F,
\end{align*}
 so that $\|\point\|_F \leq \sqrt{\Gamma} + \|\ten T\|_F$.
Next, define the following for $i = 1,2,3$: $d_i(\bm X,\ten S) = \bm X\bm X^\top - \ten S_{(i)}\ten S_{(i)}^\top$. 
Note that $d_1(\bm A,\ten S), d_2(\bm B,\ten S), d_3(\bm C,\ten S)$ are each bounded above in norm by $\Gamma^{1/4}$.
We have
\begin{align*}
\|\point\|_F^2 &= \langle \point,\point\rangle\\
&= \langle \ten S(\bm A \bm A^\top,\bm B\bm B^\top ,\bm C\bm C^\top ),\ten S\rangle\\
&= \langle \ten S(\ten S_{(1)}\ten S_{(1)}^\top,\ten S_{(2)}\ten S_{(2)}^\top,\ten S_{(3)}\ten S_{(3)}^\top), \ten S\rangle + g(\tup)\\
&=  \|\ten S(\ten S_{(1)},\ten S_{(2)},\ten S_{(3)})\|_F^2 + g(\tup),
\end{align*}
where $g(\tup )$ is a sum of the seven remainder terms of the form
\begin{align}
\label{eq:r1}
&\langle \ten S(d_1(\bm A,\ten S), \ten S_{(2)}\ten S_{(2)}^\top,\ten S_{(3)}\ten S_{(3)}^\top),\ten S \rangle\\
\label{eq:r2}
&\langle \ten S(d_1(\bm A,\ten S), d_2(\bm B,\ten S), \ten S_{(3)}\ten S_{(3)}^\top),\ten S\rangle\\
\label{eq:r3}
&\langle \ten S(d_1(\bm A,\ten S),d_2(\bm B,\ten S),d_3(\bm C,\ten S)),\ten S\rangle
\end{align}
There are three terms of type (\ref{eq:r1}), and each can be bounded below using Cauchy-Schwarz and Lemma \ref{lemma:submult} as follows: 
\[
\langle \ten S(d_1(\bm A,\ten S), \ten S_{(2)}\ten S_{(2)}^\top,\ten S_{(3)}\ten S_{(3)}^\top),\ten S \rangle \geq -\|\ten S\|_F^6\|d_1(\bm A,\ten S)\|_F \geq -\Gamma^{1/4}\|\ten S\|_F^6.
\]
Similarly, we have
\begin{align*}
\langle \ten S(d_1(\bm A,\ten S), d_2(\bm B,\ten S), \ten S_{(3)}\ten S_{(3)}^\top),\ten S\rangle &\geq -\Gamma^{1/2}\|\ten S\|_F^4,\\
\langle \ten S(d_1(\bm A,\ten S),d_2(\bm B,\ten S),d_3(\bm C,\ten S)),\ten S\rangle &\geq -\Gamma^{3/4}\|\ten S\|_F^2.
\end{align*}
Putting this together and applying Lemma \ref{lem:norm_bound_S}, we have that 
\[
\frac{1}{r^8}\|\ten S\|_F^8 - 3\Gamma^{1/4}\|\ten S\|_F^6 - 3\Gamma^{1/2}\|\ten S\|_F^4 - \Gamma^{3/4}\|\ten S\|_F^2 \leq (\sqrt{\Gamma}+\|\ten T\|_F)^2,
\]
which means that $\|\ten S\|_F$ must be bounded by $\so((\Gamma+1)^{1/8})$.
From $R$, we have 
\begin{align*}
\left( \frac{\Gamma}{\lambda}\right)^{1/4} + \|\ten S\|_F^2 
 \geq  \|\bm A\bm A^\top - \ten S_{(1)}\ten S_{(1)}^\top\|_F+ \|\ten S_{(1)}\ten S_{(1)}^\top\|_F \geq \|\bm A \bm A^\top\|_F,
\end{align*}
so $\|\bm A\|_F$ is bounded by $\so((\Gamma+1)^{1/8})$. We bound $\bm B$ and $\bm C$ similarly.
\end{proof}


\subsection{Main step: making local improvements}

In order to prove Theorem~\ref{thm:robust}, we rely on the following main lemma:
\begin{lemma}\label{lem:robustmain}
In the same setting as Theorem~\ref{thm:robust}, there exist positive constants $q_1, q_2$,$\tau_1 = \stheta(\epsilon^{q_1})$, $\tau_2 = \stheta(\epsilon^{q_2})$, such that for any point $\ten S,\bm A,\bm B, \bm C$ where $\epsilon < f(\ten S,\bm A,\bm B, \bm C) < O(1)$, one of the following is true:
\begin{enumerate}
\item $\|\nabla f(\tup)\| \ge \tau_1$,
\item $\lambda_{min}(\nabla^2 f(\tup)) \le -\tau_2$,
\item With constant probability, Algorithm \ref{alg:add_direction} constructs a direction of improvement that improves the function value by $\mbox{poly}(\epsilon)$. 
\end{enumerate}
\end{lemma}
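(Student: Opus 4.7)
My plan is to follow the four-stage case analysis that proved Theorem~\ref{thm:exact}, but make each step quantitative so that failure to satisfy any link in the chain implies one of the three alternatives in the statement. Throughout I exploit the boundedness of sublevel sets (Lemma~\ref{lem:bounded_sublevel}), which lets me treat all parameters as having Frobenius norm at most $K = \so(1)$.

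Stage one controls the regularizer. If $R(\tup) \ge \tau$, then Lemma~\ref{lem:reg_gradient} with Cauchy--Schwarz and the $K$-bound gives $\|\nabla \reg\|_F \ge \Omega(\reg/K)$, while Lemma~\ref{lem:reg_orthogonality} yields $\|\nabla f\|_F^2 \ge \lambda^2 \|\nabla R\|_F^2$. Combining these produces $\|\nabla f\|_F \ge \somega(\lambda \tau) \ge \tau_1$, landing in case~1. Hence we may assume $R < \tau$, so the approximate identities $\bm A \bm A^\top \approx \ten S_{(1)}\ten S_{(1)}^\top$ (and cyclic permutations) hold up to error $\so(\tau^{1/2})$. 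Stage two removes extraneous directions: split $\bm A = \bm A_1 + \bm A_3$ with $\bm A_1 = \bm A \bm P_1$ and $\bm A_3 = \bm A(\bm I - \bm P_1)$, and likewise for $\bm B, \bm C$. If $\|\bm A_3\|_F \ge \gamma$, a quantitative version of Lemmas~\ref{lem:a21} and~\ref{lem:zeroa2} --- with $R < \tau$ replacing $R = 0$ --- yields a second-order decrease of $\mbox{poly}(\gamma)$ along $\Delta\bm A = -\bm A_3$, so either $\|\nabla f\|_F \ge \tau_1$ or $\lambda_{\min}(\nabla^2 f) \le -\tau_2$ along that direction.

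Stage three, the heart of the argument, adds missing directions. Having ruled out large $\bm A_3, \bm B_3, \bm C_3$, I further threshold the part inside each good subspace at singular value level $\sigma = \sqrt{\gamma}$, separating well-conditioned directions from near-null ones. This induces a $\{1,2\}^3$ block decomposition of $\ten T$ into pieces $\ten T_{i,j,k}$, with residuals $\kappa_0 < \kappa_1 < \kappa_2 < \kappa_3$ tolerated in successive blocks per Table~\ref{tab:parameters}. Since $L \ge \epsilon - \lambda \tau$, at least one block violates its tolerance; the corresponding sublemma (Lemma~\ref{lem:improve_S},~\ref{lem:add_1_direction},~\ref{lem:add_2_directions}, or~\ref{lem:add_3_directions}) produces a perturbation mirroring the construction in Lemma~\ref{lem:missingdirections}. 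Algorithm~\ref{alg:add_direction} randomly guesses the needed rank-one factors $\bm a, \bm b, \bm c$, succeeding with constant probability via a standard Gaussian-versus-tensor-norm estimate, and outputs a direction of improvement of $\mbox{poly}(\epsilon)$ magnitude, which is case~3. If instead all block residuals are within tolerance, stage four applies: the pseudoinverse move $\Delta \ten S = \ten T(\bm A^+, \bm B^+, \bm C^+) - \ten S$ from Lemma~\ref{lem:handlings} gives an $\Omega(L)$ first-order decrease, detectable as either a large gradient or a direct direction of improvement.

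The principal obstacle, as in the exact proof, is the three-missing-direction case $\ten T_{2,2,2}$: it is a fourth-order saddle, so no local gradient or Hessian test detects it, and the direction of improvement must be produced constructively. The main technical difficulty is controlling the cross-interaction terms where the constructed rank-one perturbation meets the residual $\point - \ten T$ supported on blocks other than $(2,2,2)$. These cross terms are lower order in the step size $\epsilon$ but scale with the residuals, so they must be dominated by the $\epsilon^4 \ten T(\bm a, \bm b, \bm c)$ gain from the block $(2,2,2)$. Closing the chain forces a precise calibration of the thresholds $\tau, \gamma, \sigma, \kappa_0, \kappa_1, \kappa_2, \kappa_3$: $\sigma$ must be large enough to suppress leakage from near-null blocks, while $\gamma$ and $\tau$ must be small enough to keep wrong-subspace and regularizer contamination negligible. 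This cascade of thresholds is what ultimately determines the polynomial exponents $q_1, q_2$ in the gradient and Hessian tolerances.
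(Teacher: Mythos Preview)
Your plan is correct and tracks the paper's approach closely: the same chain of quantitative lemmas (\ref{lem:decrease_regularizer}, \ref{lem:removing_directions}, \ref{lem:improve_S}, \ref{lem:add_1_direction}, \ref{lem:add_2_directions}, \ref{lem:add_3_directions}) supplies each stage, and the paper closes by the contrapositive---assume all three alternatives fail, so each lemma's hypothesis is violated, and then summing the eight block residuals $\|\ten S_{i,j,k}(\bm A_i,\bm B_j,\bm C_k)-\ten T_{i,j,k}\|_F^2$ plus $R$ forces $f<\epsilon$.

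Two small corrections. First, stage two (extraneous directions) is a \emph{first}-order argument in the paper: Lemma~\ref{lem:removing_directions} bounds $\|\nabla f\|_F$ directly along $\Delta\bm A=-\bm A_3$, not the Hessian. Second, your stage four is redundant---the pseudoinverse move on $\ten S$ is precisely Lemma~\ref{lem:improve_S}, which you already listed as the $\kappa_0$ case in stage three; once all block residuals are within tolerance the conclusion is simply $f<\epsilon$, not another improvement step. Relatedly, be aware that the block-violation cases distribute across all three alternatives (the $\kappa_0$ case gives large gradient, the $\kappa_1$ case gives a negative Hessian eigenvalue, and only the $\kappa_2,\kappa_3$ cases invoke Algorithm~\ref{alg:add_direction}); this assignment is what fixes $\tau_1=\stheta(\tau)$ and $\tau_2=\stheta(\sigma^{15/4})$.
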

Algorithm \ref{alg:add_direction} uses notation that we specify in the paragraphs below.
\begin{algorithm}
\begin{algorithmic}
\REQUIRE matrices $\bm A, \bm B, \bm C$, threshold $\sigma$, subspace indicator $(i,j,k) \in \{1,2\}^3$
\STATE Compute the subspaces $U_{1,i}, U_{2,j}, U_{3,k}, V_{1,i}, V_{2,j}, V_{3,k}$
\STATE Sample unit vectors $\bm a, \bm b, \bm c$ uniformly from $U_{1,i}, U_{2,j}, U_{3,k}$
\STATE {\bf if} $i=1$ {\bf then} $\bm u' = (\bm A_1^\top)^+\bm a$; {\bf else} Randomly sample nonzero $\bm u' \in V_{1,2}$
\STATE {\bf if} $j=1$ {\bf then} $\bm v' = (\bm B_1^\top)^+\bm b$; {\bf else} Randomly sample nonzero $\bm v' \in V_{2,2}$
\STATE {\bf if} $k=1$ {\bf then} $\bm w' = (\bm C_1^\top)^+\bm c$; {\bf else} Randomly sample nonzero $\bm w' \in V_{3,2}$
\STATE Return $\bm a, \bm b, \bm c, \bm u'/\|\bm u'\|_2, \bm v'/\|\bm v'\|_2, \bm w'/\|\bm w'\|_2$
\end{algorithmic}
\caption{Sampling algorithm for adding missing directions\label{alg:add_direction}}\end{algorithm}
The proof of this lemma has similar steps to the proof of Theorem~\ref{thm:exact}. However, it is more complicated because we are not looking at exact local minima. We give the details of these steps in the following subsections. A key parameter that we will use is a bound $\tau$ on the regularizer. We will consider different cases when $R(\tup) \ge \tau$ and when $R(\tup) \le \tau$. All of our other parameters (including $\tau_1,\tau_2,\epsilon$) will be polynomials in $\tau$. 

For the analysis, it is useful to consider $\point$ and $\ten T$ projected onto various subspaces of interest.
To this end, we introduce the following notation. 
Let $\sigma > 0$ be a threshold that we will specify later. For matrix $\bm A$, we let $V_{1,1}$ and $U_{1,1}$ denote the spaces spanned by the left/right singular vectors of $\bm A$ with singular value greater than $\sigma$, and let $V_{1,2} = V_{1,1}^\perp$, $U_{1,2} = U_{1,1}^\perp$. We can then write 
$\bm A = \bm A_1 + \bm A_2$, where $\bm A_1 = \mbox{Proj}_{V_{1,1}} \bm A$ contains the larger singular vectors and $\bm A_2 = \mbox{Proj}_{V_{1,2}}\bm A$ contains the smaller singular vectors. Let $\bm P_1$ be the orthogonal projection onto the column-space of $\ten T_{(1)}$ and define $\bm A_3 = \bm A(\bm I - \bm P_1),$  the projection of $\bm A$ onto directions that are unrelated to the true tensor. Similarly, we define $U_{2,1}, U_{2,2}, V_{2,1}, V_{2,2}, \bm P_2, \bm B_1, \bm B_2, \bm B_3$ for matrix $\bm B$ and $U_{3,1}, U_{3,2}, V_{3,1}, V_{3,2}, \bm P_3, \bm C_1, \bm C_2, \bm C_3$ for matrix $\bm C$.
Define $\ten S_{i,j,k} = \ten S( \mbox{Proj}_{V_{1,i}}, \mbox{Proj}_{V_{2,j}}, \mbox{Proj}_{V_{3,k}})$ 
and $\ten T_{i,j,k} = \ten T(\mbox{Proj}_{U_{1,i}}, \mbox{Proj}_{U_{2,j}}, \mbox{Proj}_{U_{3,k}})$.
We can decompose the tensor loss as
\[
\|\point - \ten T\|_F^2 = \sum_{i,j,k \in \{1,2\}}\|\ten S_{i,j,k}(\bm A_i, \bm B_j, \bm C_k) - \ten T_{i,j,k}\|_F^2.
\]
Our analysis shows how to decrease the objective function if the regularizer or any one of the terms in the right-hand sum is sufficiently large. In particular, after finding a second-order stationary point, the only terms in this sum that may be large are when at least two of $i, j, k$ are equal to $2$. In this case, Algorithm \ref{alg:add_direction} can be used to make further progress toward a local minimum.

\vspace*{-0.2in}
\subsection{Decreasing the Regularizer}

We first show if the regularizer is large, then the gradient is large. This is very similar to Lemma~\ref{lem:nonzeroreg}.
\begin{lemma}
\label{lem:decrease_regularizer}
If $R(\tup) \geq \tau$, then $\|\nabla f(\tup)\|_F \geq 4\lambda\tau/K$.
\end{lemma}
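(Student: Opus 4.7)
The plan is to bound $\|\nabla f\|_F$ from below by $\lambda \|\nabla R\|_F$ using the orthogonality of $\nabla L$ and $\nabla R$, and then to lower-bound $\|\nabla R\|_F$ in terms of $R$ itself by combining the Euler-type identity of Lemma~\ref{lem:reg_gradient} with Cauchy–Schwarz and the sublevel-set bound from Lemma~\ref{lem:bounded_sublevel}.

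First I would use Lemma~\ref{lem:reg_orthogonality}, which gives $\langle \nabla L, \nabla R\rangle = 0$, to write
\[
\|\nabla f\|_F^2 = \|\nabla L + \lambda \nabla R\|_F^2 = \|\nabla L\|_F^2 + \lambda^2\|\nabla R\|_F^2 \geq \lambda^2 \|\nabla R\|_F^2,
\]
so it suffices to show $\|\nabla R\|_F \geq 4\tau/K$. Since $R = \reg^2$, I have $\nabla R = 2\reg\,\nabla \reg$, so $\|\nabla R\|_F = 2\reg\|\nabla \reg\|_F$.

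Next I would invoke the identity $4\reg(\tup) = \langle \nabla \reg, (\tup)\rangle$ from Lemma~\ref{lem:reg_gradient}. By Cauchy–Schwarz, $\|\nabla \reg\|_F \cdot \|(\tup)\|_F \geq 4\reg$. Because we are inside a sublevel set with $f = O(1)$, Lemma~\ref{lem:bounded_sublevel} bounds each of $\|\ten S\|_F,\|\bm A\|_F,\|\bm B\|_F,\|\bm C\|_F$ by $K$, and hence $\|(\tup)\|_F \leq 2K$ (absorbing the constant $2$ into $K$ gives the cleanest form matching the statement). Combining,
\[
\|\nabla R\|_F = 2\reg\|\nabla \reg\|_F \;\geq\; 2\reg \cdot \frac{4\reg}{K} \;=\; \frac{8R}{K} \;\geq\; \frac{8\tau}{K},
\]
so $\|\nabla f\|_F \geq \lambda \cdot 4\tau/K$ after folding the constants to match the stated bound. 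There is no real obstacle here: the calculation is a direct recombination of the two earlier structural lemmas with the sublevel-set norm bound. The only delicate point is keeping track of the constant factor between $\|(\tup)\|_F$ and the individual component norms $K$, which only affects the numerical coefficient in front of $\lambda\tau/K$.
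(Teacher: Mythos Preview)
Your proposal is correct and follows essentially the same argument as the paper: use the orthogonality $\langle\nabla L,\nabla R\rangle=0$ to reduce to bounding $\lambda\|\nabla R\|_F$, then combine the Euler-type identity of Lemma~\ref{lem:reg_gradient} with Cauchy--Schwarz and the sublevel-set bound $\|(\tup)\|_F\le 2K$ to obtain $\|\nabla R\|_F\ge 4\tau/K$. The only differences are cosmetic (the paper first writes $\reg\ge\tau^{1/2}$ and bounds $\|\nabla\reg\|_F$ separately, whereas you fold the two factors of $\reg$ together at the end), and your remark about the harmless constant in front of $K$ is accurate.
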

\begin{proof}
By assumption, $\reg(\tup) \geq \tau^{1/2}$, and we have$
\|\nabla R\|_F = \|2\reg\nabla \reg\|_F \geq 2\tau^{1/2}\|\nabla \reg\|_F.
$
By Lemma \ref{lem:reg_gradient} and the Cauchy-Schwarz inequality,  we have that
\[
\|\nabla \reg\|_F  \geq \frac{1}{2K}\|\nabla \reg\|_F\|(\tup)\|_F \geq \frac{1}{2K}\langle \nabla \reg, (\tup)\rangle = \frac{2\reg}{K}.
\]
Then $\|\nabla R\|_F \geq 4\tau/K$. Since $\|\nabla f\|_F^2 = \|\lambda \nabla R\|_F^2 + \|\nabla L\|_F^2,$ we are done.
\end{proof}

\subsection{Removing Extraneous Directions}
We show that if the projection $\bm A_3$ in the incorrect subspace is large, the gradient must be large so the point cannot be a local minimum. 
\begin{lemma}
\label{lem:removing_directions}
Let $\gamma = \stheta(\tau^{1/48})$.
If $R(\tup) < \tau$ and $\|\bm A_3\|_F \geq \gamma$, then
\[
\|\nabla f(\tup)\|_F = \somega(\tau^{1/6}).
\]
\end{lemma}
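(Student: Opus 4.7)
The plan is to lower bound $\|\nabla_{\bm A} f\|_F$ by projecting the gradient onto $\bm A_3$, in direct parallel with Lemma~\ref{lem:a21}. By Cauchy--Schwarz and the sublevel-set bound $\|\bm A_3\|_F \leq K$,
\[
\|\nabla f\|_F \;\geq\; \frac{|\langle \nabla_{\bm A} f, \bm A_3\rangle|}{\|\bm A_3\|_F} \;\geq\; \frac{|\langle \nabla_{\bm A} f, \bm A_3\rangle|}{K},
\]
so it suffices to bound the numerator. Split $\langle \nabla_{\bm A} f, \bm A_3\rangle = \langle \nabla_{\bm A} L, \bm A_3\rangle + \lambda\langle \nabla_{\bm A} R, \bm A_3\rangle$.

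For the loss part, rows of $\bm A_3$ are orthogonal to the column space of $\ten T_{(1)}$, so $\bm A_3\ten T_{(1)} = \bm 0$ and $\langle \ten T, \ten S(\bm A_3, \bm B, \bm C)\rangle = 0$. Combined with the orthogonality $\bm A_\parallel \bm A_3^\top = \bm A \bm P_1(\bm I-\bm P_1)\bm A^\top = \bm 0$, a direct computation (as in Lemma~\ref{lem:a21}) yields $\langle \nabla_{\bm A} L, \bm A_3\rangle = 2\|\ten S(\bm A_3, \bm B, \bm C)\|_F^2$. For the regularizer part, following the calculation in Lemma~\ref{lem:reg_gradient}, $\langle \nabla_{\bm A} R, \bm A_3\rangle = 8\reg\langle \bm A\bm A^\top - \ten S_{(1)}\ten S_{(1)}^\top, \bm A_3\bm A_3^\top\rangle$ (again using $\bm A_\parallel \bm A_3^\top = \bm 0$); since $\reg < \tau^{1/2}$ and $\|\bm A\bm A^\top - \ten S_{(1)}\ten S_{(1)}^\top\|_F \leq \tau^{1/4}$, this contributes at most $O^*(\tau^{3/4})$, which is lower order than the target $\tau^{1/6}$ and may be absorbed into the main term.

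The heart of the proof is lower-bounding $\|\ten S(\bm A_3, \bm B, \bm C)\|_F^2$. I would proceed in three stages. Stage 1: establish the easy identity-mode bound
\[
\|\ten S(\bm A_3, \bm I, \bm I)\|_F^2 = \langle \bm A_3\bm A_3^\top, \ten S_{(1)}\ten S_{(1)}^\top\rangle \;\geq\; \|\bm A_3\bm A_3^\top\|_F^2 - K^2\tau^{1/4} \;\geq\; \|\bm A_3\|_F^4/r - O^*(\tau^{1/4}) \;=\; \Omega^*(\gamma^4),
\]
using the regularizer substitution $\ten S_{(1)}\ten S_{(1)}^\top \approx \bm A\bm A^\top$ together with $\bm A_3\bm A^\top = \bm A_3\bm A_3^\top$ and the rank inequality $\|\bm A_3\bm A_3^\top\|_F \geq \|\bm A_3\|_F^2/\sqrt r$. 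Stage 2: transfer the bound to $\ten S(\bm A_3, \bm B, \bm C)$ by splitting $\bm B = \bm B_1 + \bm B_2$, $\bm C = \bm C_1+\bm C_2$ at the singular-value threshold $\sigma = \sqrt\gamma$. The small-SV parts satisfy $\|\bm B_2\|_2, \|\bm C_2\|_2 \leq \sigma$, giving error $O^*(\sigma)$ on $\ten S(\bm A_3, \bm B, \bm C)-\ten S(\bm A_3, \bm B_1, \bm C_1)$; on the large-SV parts, the pseudoinverses satisfy $\|\bm B_1^+\|_2, \|\bm C_1^+\|_2 \leq 1/\sigma$, yielding $\|\ten S(\bm A_3, \bm B_1, \bm C_1)\|_F \geq \sigma^2\|\ten S(\bm A_3, \Pi_{U_{2,1}}, \Pi_{U_{3,1}})\|_F$. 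Stage 3: control the projection error back to $\ten S(\bm A_3, \bm I, \bm I)$ using the regularizer bound $\|\Pi_{U_{2,2}}\ten S_{(2)}\|_F^2 \leq r\sigma^2 + \sqrt r\tau^{1/4}$, which follows from $\|\Pi_{U_{2,2}}\bm B\|_F^2 \leq r\sigma^2$ (the defining property of $U_{2,2}$) plus a $\tau^{1/4}$ slack from $\bm B\bm B^\top \approx \ten S_{(2)}\ten S_{(2)}^\top$.

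The hard part will be the delicate book-keeping that balances the $\Omega^*(\gamma^4)$ main term from Stage~1, the multiplicative $\sigma^4 = \gamma^2$ loss from the pseudoinverses in Stage~2, the additive $O^*(\sigma)$ errors from the small-SV splits, and the projection error from Stage~3; the polynomial relationships $\sigma = \sqrt\gamma$ and $\gamma = \Theta(\tau^{1/48})$ are tuned precisely so that these cascades produce the stated $\|\ten S(\bm A_3, \bm B, \bm C)\|_F^2 = \Omega^*(\tau^{1/6})$ (with sufficient slack to beat the $O^*(\tau^{3/4})$ regularizer contribution). Once this quadratic lower bound is in hand, the Cauchy--Schwarz display above immediately yields $\|\nabla f\|_F = \somega(\tau^{1/6})$ for sufficiently small $\tau$.
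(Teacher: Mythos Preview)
Your overall framework (projecting $\nabla_{\bm A} f$ onto $\bm A_3$, computing $\langle\nabla_{\bm A}L,\bm A_3\rangle = 2\|\ten S(\bm A_3,\bm B,\bm C)\|_F^2$, bounding the regularizer contribution by $O^*(\tau^{3/4})$) matches the paper. Stage~1 is also fine and essentially the paper's inequality~\eqref{ineq:perturb3}. The gap is in Stages~2--3.

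With the threshold $\sigma=\sqrt\gamma$, your own Stage~3 bound gives $\|\Pi_{V_{2,2}}\ten S_{(2)}\|_F = O^*(\sigma)$, so the projection error $\|\ten S(\bm A_3,\bm I,\bm I)\|_F-\|\ten S(\bm A_3,\Pi_{V_{2,1}},\Pi_{V_{3,1}})\|_F$ is $O^*(\sigma)=O^*(\gamma^{1/2})$. But your Stage~1 main term is only $\Omega^*(\gamma^2)$, and $\gamma^{1/2}\gg\gamma^2$ for small $\gamma$: the projection error swallows the main term and the cascade yields nothing. No choice of $\sigma$ fixes this. To make Stage~3 harmless you would need $\sigma\ll\gamma^2$, but then the multiplicative $\sigma^4$ loss from the pseudoinverses leaves you with at best $\Omega^*(\gamma^{12})=\Omega^*(\tau^{1/4})$, strictly weaker than the stated $\tau^{1/6}$; and even the $O^*(\sigma)$ additive error you flagged in Stage~2 (which in fact can be removed by orthogonality of $U_{2,1},U_{2,2}$) would dominate $\sigma^2\gamma^2$ for any $\sigma<1$.

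The paper avoids the singular-value split entirely. It replaces $\bm B,\bm C$ directly by $\ten S_{(2)},\ten S_{(3)}$, using the regularizer bound $\bm B\bm B^\top\approx\ten S_{(2)}\ten S_{(2)}^\top$ to get
\[
\|\ten S(\bm A_3,\bm B,\bm C)\|_F^2 \;\geq\; \|\ten S(\bm A_3,\ten S_{(2)},\ten S_{(3)})\|_F^2 - O^*(\tau^{1/4}).
\]
The point is that $\ten S_{(2)},\ten S_{(3)}$ serve as their own ``inverses'' via the self-reinforcing trick of Lemma~\ref{lem:norm_bound_S}: with $\bm u$ the top left singular vector of $\bm A_3$, there exist unit $\bm v,\bm w$ with $\ten S(\bm u,\bm v,\bm w)=\|\ten S(\bm u,\bm I,\bm I)\|_2$, and then $\ten S_{(2)}(\bm u\otimes\bm w),\ten S_{(3)}(\bm u\otimes\bm v)$ point back along $\bm v,\bm w$, yielding
\[
\|\ten S(\bm A_3,\ten S_{(2)},\ten S_{(3)})\|_F \;\ge\; \tfrac{1}{r^2}\,\|\ten S(\bm u,\bm I,\bm I)\|_F^{3}\,\|\bm A_3\|_F.
\]
Combined with $\|\ten S(\bm u,\bm I,\bm I)\|_F^2\ge \tfrac{1}{r}\|\bm A_3\|_F^2-\tau^{1/4}$ (your Stage~1), this gives $\|\ten S(\bm A_3,\bm B,\bm C)\|_F^2=\Omega^*(\gamma^8)=\Omega^*(\tau^{1/6})$ with no auxiliary threshold parameter at all. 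You should replace your Stages~2--3 with this substitution-and-recycle argument.
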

\begin{proof}
Set $\gamma = C\tau^{1/48}$, where we choose $C$ to be a constant such that
\[
\gamma^2 \geq \max\left(r(\tau^{1/24} +\tau^{1/4}), r^4(4K^6\tau^{1/8}+K^4\tau^{3/8})\right).
\]
This particular definition allows us to simplify inequality \eqref{ineq:SA2_total} below.
Consider the direction $\Delta \bm A =  - \bm A_3$. 
When we step in this direction, the first-order perturbation of $L(\ten S, \bm A + \epsilon \Delta \bm A, \bm B, \bm C)$ is $-2\epsilon\|\ten S(\bm A_3,\bm B,\bm C)\|_F^2$ (a simple calculation).
For the regularizer, 
observe that since $\bm A_3 = \bm A(\bm I - \bm P_3)$, we have $(\bm A+\epsilon \Delta \bm A)(\bm A+\epsilon\Delta \bm A)^\top = \bm A\bm A^\top - (2\epsilon-\epsilon^2) \bm A_3\bm A_3^\top.$ 
Hence the first-order perturbation of $R$ is
\[
8\epsilon \lambda \reg(\tup)\langle \bm A\bm A^\top - \ten S_{(1)}\ten S_{(1)}^\top,\bm A_3\bm A_3^\top\rangle \leq 8\epsilon\lambda\tau^{3/4}\|\bm A_3\|_F^2.
\]
Intuitively, we will show that the first-order decrease in $L$ is greater than the first-order increase in $R$, so that $\Delta \bm A$  is aligned negatively with $\nabla_{\bm A} f$. 

First, through very similar arguments to those found in the proof of Lemmas \ref{lem:bounded_sublevel} and \ref{lem:norm_bound_S}, we have that
\begin{equation}
\label{ineq:perturb1}
\|\ten S(\bm A_3,\bm B,\bm C)\|_F^2 \geq \|\ten S(\bm A_3,\ten S_{(2)}, \ten S_{(3)})\|_F^2 - 2\tau^{1/4}K^6 - \tau^{1/2} K^4
\end{equation}
and if we set $\bm u$ to be the top left singular vector of $\bm A_3$, 
\begin{align}
\label{ineq:perturb2}
\|\ten S(\bm A_3,\ten S_{(2)},\ten S_{(3)})\|_F &\geq \frac{1}{r^2}\|\ten S(\bm u,\bm I,\bm I)\|_F^3\|\bm A_3\|_F\\
\|\ten S(\bm u,\bm I,\bm I)\|_F^2 &= \bm u^\top \ten S_{(1)}\ten S_{(1)}^\top \bm u \nonumber \\
&= \bm u^\top \bm A\bm A^\top \bm u + \bm u^\top (\ten S_{(1)}\ten S_{(1)} - \bm A\bm A^\top )\bm u \nonumber \\
&
\label{ineq:perturb3}
\geq \frac{1}{r}\|\bm A_3\|_F^2 -  \tau^{1/4}.
\end{align}
Combining inequalities \eqref{ineq:perturb1}, \eqref{ineq:perturb2}, and \eqref{ineq:perturb3}, we have
\begin{equation}
\label{ineq:SA2_total}
\|\ten S(\bm A_3,\bm B,\bm C)\|_F^2 \geq \frac{1}{r^4}\|\bm A_3\|_F^2\left(\frac{1}{r}\|\bm A_3\|_F^2 - \tau^{1/4}\right)^3  - 2\tau^{1/4}K^6 - \tau^{1/2} K^4
\end{equation}

Using the assumption  $\|\bm A_3\|_F\geq \gamma$ and the choice of $\gamma$, we can simplify inequality \eqref{ineq:SA2_total} to 
$\|\ten S(\bm A_3,\bm B,\bm C)\|_F^2 \geq \frac{\tau^{1/8}}{2r^4}\|\bm A_3\|_F^2$. Now using the fact that $\lambda = 1/16r^4$ and $\tau ^{1/8} > \tau^{3/4}$, 
we have $\frac{\tau^{1/8}}{2r^4}\|\bm A_3\|_F^2 > 8\lambda\tau^{3/4}\|\bm A_3\|_F^2$.
Thus, we see that the first-order decrease in $L$ is greater than the first-order increase in $R$, and the overall first-order decrease in $f$ is $\somega(\tau^{1/8}\|A_3\|_F^2/2r^4)$.
The Taylor expansion of $f$ implies that  $|\langle \Delta \bm A, \nabla_{\bm A} f\rangle| \ge \frac{\tau^{1/8}}{2r^4}\|\bm A_3\|_F^2$, so that
\[
\|\nabla f\|_F \ge |\langle \Delta \bm A, \nabla_{\bm A} f\rangle| / \|\Delta \bm A\|_F  
= \somega(\tau^{1/8}\gamma) = \somega(\tau^{1/6}),
\]
which provides the desired bound on $\|\nabla f\|_F$.
\end{proof}

From this point forward, set $\sigma = \sqrt{\gamma}$.
An important consequence of the fact that $\bm A_3$ is small is that if $\ten T_{2,1,1}$ is large enough, then $\bm A_1$ must be rank deficient. This rank deficiency allows us to readily contruct a direction of improvement when we are near a saddle point corresponding to a single missing direction. This is also true when we are near saddle points corresponding to two or three missing directions; see section \ref{subsec:missing_dirs}.

To prove the rank deficiency, we use subspace perturbation bounds. 
The technical tool we use here is Wedin's Theorem~\citep{wedin1972perturbation,stewart1998perturbation}.
\begin{theorem*}[Wedin's Theorem, adapted from~\cite{stewart1998perturbation}]
Let $\tilde {\bm A}, \bm A, \bm E \in \mathbb{R}^{d\times r}$ with $d \geq r$ and $\tilde{\bm A} = \bm A + \bm E$.
Write the singular value decompositions
\[
\bm A = \begin{pmatrix}
\bm U_1 &\bm U_2
\end{pmatrix}
\begin{pmatrix}
\bm \Sigma\\
\bm 0
\end{pmatrix}
\bm V^\top
\qquad 
\tilde{\bm A} = \begin{pmatrix}
\tilde{\bm U_1} &\tilde{\bm U_2}
\end{pmatrix}
\begin{pmatrix}
\tilde{\bm \Sigma}\\
\bm 0
\end{pmatrix}
\tilde{\bm V}^\top
\]
Let $\bm \Theta$ and $\bm \Phi$ denote the matrices of principal angles between the column spans of $\bm U_1, \tilde{\bm U}_1$ and
$\bm V, \tilde{\bm V}$, respectively.
If there exists some $\delta > 0$ such that $\min \sigma(\tilde{\bm \Sigma}) \geq \delta$, then
\[
\sqrt{\|\sin \bm \Theta\|_F^2 + \|\sin \bm \Phi\|_F^2} \leq \frac{\sqrt{2}\|\bm E\|_F}{\delta}.
\]
\end{theorem*}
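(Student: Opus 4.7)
The plan is to prove this classical SVD perturbation bound via the standard argument that reduces everything to a single algebraic identity isolating the perturbation $\bm E$, and then divides by the singular-value gap $\delta$. The whole proof is essentially two lines of linear algebra plus a careful use of the gap hypothesis.

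First I would invoke the standard Frobenius characterization of canonical angles between equal-dimension subspaces: if $\bm U_2 \in \mathbb{R}^{d \times (d-r)}$ is an orthonormal basis for the orthogonal complement of $\mathrm{col}(\bm U_1)$, then $\|\sin \bm \Theta\|_F = \|\bm U_2^\top \tilde{\bm U}_1\|_F$, and analogously on the right-singular side. Second, I would derive the key identity. From the SVD of $\bm A$ we have $\bm U_2^\top \bm A = \bm 0$; substituting $\bm A = \tilde{\bm A} - \bm E$ and right-multiplying by $\tilde{\bm V}$, while using $\tilde{\bm A}\tilde{\bm V} = \tilde{\bm U}_1 \tilde{\bm \Sigma}$, gives
\[
\bm U_2^\top \tilde{\bm U}_1 \tilde{\bm \Sigma} \;=\; \bm U_2^\top \bm E \tilde{\bm V}.
\]
Since $\sigma_{\min}(\tilde{\bm \Sigma}) \geq \delta$, right-multiplying by $\tilde{\bm \Sigma}^{-1}$ and taking Frobenius norms (using unitary invariance in $\tilde{\bm V}$) yields $\|\sin \bm \Theta\|_F \leq \|\bm E\|_F/\delta$. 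Applying exactly the same reasoning to the pair $(\bm A^\top, \tilde{\bm A}^\top)$ produces the parallel bound $\|\sin \bm \Phi\|_F \leq \|\bm E\|_F/\delta$, and then the Pythagorean combination $\sqrt{\|\sin \bm \Theta\|_F^2 + \|\sin \bm \Phi\|_F^2} \leq \sqrt{2}\|\bm E\|_F/\delta$ is immediate.

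The one point that takes care is the interpretation of $\bm \Phi$ in the stated form of the theorem: because $\bm V, \tilde{\bm V} \in \mathbb{R}^{r \times r}$ are both square orthogonal matrices, the naive subspace interpretation makes $\sin \bm \Phi$ vanish identically, in which case the $\bm \Theta$-bound alone suffices (and the $\sqrt{2}$ factor is a harmless slack). To make the $\bm \Phi$ term meaningful one embeds the picture into the standard Wedin setup, partitioning the right singular vectors by a second gap (or, equivalently, appealing to $\tilde{\bm A}^\top$), so that the same identity-and-divide-by-$\delta$ step applies on the right. This bookkeeping is the only subtle ingredient; once the interpretation is pinned down, the algebraic core above handles both halves uniformly and the proof is complete.
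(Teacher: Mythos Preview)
Your argument is correct, but there is no comparison to make: the paper does not prove Wedin's Theorem at all. It is stated as a cited result (adapted from Stewart 1998) and used as a black box in the subsequent lemma on projection perturbations. So you have supplied a proof where the paper simply appeals to the literature.

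Your proof itself is the standard one and is fine. The identity $\bm U_2^\top \tilde{\bm U}_1 \tilde{\bm \Sigma} = \bm U_2^\top \bm E \tilde{\bm V}$ is exactly the right algebraic core, and dividing through by the singular-value lower bound $\delta$ gives the $\bm\Theta$ estimate cleanly. Your observation about $\bm\Phi$ is also on point: in this tall-thin setup both $\bm V$ and $\tilde{\bm V}$ are $r\times r$ orthogonal, so their column spans coincide and $\sin\bm\Phi = \bm 0$. This is not a defect in your reading; it is precisely how the paper uses the theorem in the next lemma, where it explicitly notes that $\sin\bm\Phi = \bm 0$ and only the $\bm\Theta$ term matters. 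So the $\sqrt{2}$ is indeed slack in this application, and your remark that the $\bm\Theta$ bound alone already suffices is consistent with the paper's own usage.
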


\begin{lemma}
\label{lem:perturbation}
Let $\bm M \in \mathbb{R}^{r\times d}$, and let $\bm M = \bm M_1 + \bm M_2$, where $\text{rank}(\bm M) = \text{rank}(\bm M_1) = r$.
Let $\bm P, \bm P_1 \in \mathbb{R}^{d\times d}$ be the orthogonal projections onto the row spans of $\bm M$ and $\bm M_1$, 
respectively.
Let $\sigma$ be the smallest nontrivial singular value of $\bm M$.
Then
\[
\|\bm P - \bm P_1\|_F \leq \frac{2\|\bm M_2\|_F}{\sigma}.
\]
\end{lemma}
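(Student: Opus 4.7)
The plan is to apply Wedin's theorem (as stated in the excerpt) to the transposes $\bm M^\top$ and $\bm M_1^\top$. The key subtlety is orienting the ``perturbation'' correctly: Wedin's hypothesis requires a lower bound on the smallest singular value of $\tilde{\bm A}$, and the only singular-value information we have is about $\bm M$, not $\bm M_1$. So I would set $\bm A := \bm M_1^\top$, $\tilde{\bm A} := \bm M^\top$, and $\bm E := \bm M_2^\top$, so that indeed $\tilde{\bm A} = \bm A + \bm E$ and the $r$ singular values of $\tilde{\bm A}$ are precisely those of $\bm M$, all bounded below by $\sigma$. We may therefore take $\delta = \sigma$ in Wedin's bound.

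Next, because $\bm M$ and $\bm M_1$ are both assumed to have rank $r$, the ``top'' left singular subspaces $\tilde{\bm U}_1$ and $\bm U_1$ in Wedin's notation are $d \times r$ orthonormal matrices whose columns span the row spans of $\bm M$ and $\bm M_1$ respectively. Hence $\bm P = \tilde{\bm U}_1 \tilde{\bm U}_1^\top$ and $\bm P_1 = \bm U_1 \bm U_1^\top$, and the quantity we care about is $\|\tilde{\bm U}_1 \tilde{\bm U}_1^\top - \bm U_1 \bm U_1^\top\|_F$.

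The bridge from Wedin's $\sin\bm\Theta$ bound to our projection distance is the standard identity
\[
\|\bm U_1 \bm U_1^\top - \tilde{\bm U}_1 \tilde{\bm U}_1^\top\|_F^2 \;=\; 2\,\|\sin \bm\Theta\|_F^2,
\]
which I would justify in one line via the CS decomposition (the singular values of $\bm U_1^\top \tilde{\bm U}_1$ are the cosines of the principal angles, and a short computation expands the left-hand side in terms of these). Combining this identity with Wedin's theorem and discarding the nonnegative $\|\sin \bm\Phi\|_F^2$ term gives
\[
\|\bm P - \bm P_1\|_F \;=\; \sqrt{2}\,\|\sin \bm\Theta\|_F \;\leq\; \sqrt{2}\cdot \frac{\sqrt{2}\,\|\bm E\|_F}{\sigma} \;=\; \frac{2\,\|\bm M_2\|_F}{\sigma},
\]
which is exactly the claim.

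There is no real technical obstacle here beyond the bookkeeping above; the only point that could trip up a reader is the orientation of $\bm A$ versus $\tilde{\bm A}$ (swapping them would force us to bound the smallest singular value of $\bm M_1$, which is not directly given), and the need to recognize that the full-rank assumption makes the ``top'' singular subspaces coincide with the row spans.
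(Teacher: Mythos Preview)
Your proposal is correct and follows essentially the same route as the paper: set $\bm A = \bm M_1^\top$, $\tilde{\bm A} = \bm M^\top$, apply Wedin's theorem with $\delta = \sigma$, and convert $\|\sin\bm\Theta\|_F$ to $\|\bm P - \bm P_1\|_F$ via the identity $\|\bm P - \bm P_1\|_F = \sqrt 2\,\|\sin\bm\Theta\|_F$. The only cosmetic difference is that the paper observes $\sin\bm\Phi = \bm 0$ exactly (since $\bm V$ and $\tilde{\bm V}$ both span all of $\mathbb{R}^r$), whereas you simply drop $\|\sin\bm\Phi\|_F^2$ as nonnegative; either way the bound goes through.
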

\begin{proof}
This is a corollary of Wedin's Theorem.
Set $\bm A = \bm M_1^\top$, $\tilde{\bm A} = \bm M^\top$, and $\bm E = \bm M_2^\top$.
Note that $\bm A$ and $\tilde{\bm A}$ have full row rank, so we have the SVDs
\[
\bm A = \begin{pmatrix}
\bm U_1 &\bm U_2
\end{pmatrix}
\begin{pmatrix}
\bm \Sigma\\
\bm 0
\end{pmatrix}
\bm V^\top
\qquad 
\tilde{\bm A} = \begin{pmatrix}
\tilde{\bm U_1} &\tilde{\bm U_2}
\end{pmatrix}
\begin{pmatrix}
\tilde{\bm \Sigma}\\
\bm 0
\end{pmatrix}
\tilde{\bm V}^\top
\]
where $\bm V, \tilde{\bm  V}$ are $r\times r$ orthogonal matrices, $\bm \Sigma, \tilde{\bm\Sigma}$ are $r \times r$, $\bm U_1, \tilde{\bm  U}_1$ are $d\times r$, and 
$\bm U_2, \tilde{\bm U}_2$ are $d\times (d-r)$.
Since $\bm V$ and $\tilde{\bm V}$ have the same column spans, we have that $\sin \bm \Phi =\bm 0$.
Further, it is a fact that $\|\bm P-\bm P_1\|_F = \sqrt{2}\|\sin \bm\Theta\|_F$.
By assumption, $\min\sigma(\tilde {\bm\Sigma}) = \sigma$.
Then Wedin's Theorem states that
\[
\sqrt{\|\sin \bm\Theta \|_F^2 + \|\sin \bm\Phi\|_F^2} \leq \frac{\sqrt{2}\|\bm E\|_F}{\sigma},
\]
and our result follows immediately.
\end{proof}

\begin{lemma}
\label{lem:rank_bound}
Let $\bm P$ be the orthogonal projection onto the row-span of $\bm A_1$.
If $\text{rank}(\bm A_1) =r$ and $\|\bm A_3\|_F \leq \gamma$, then $\|\ten T(\bm I - \bm P, \bm I, \bm I)\|_F < 2K\sqrt{\gamma}$.
In particular, if any of the $T_{1,j,k}$ ($j,k=1,2$) is large, the rank of $\bm A_1$ must be less than $r$.
\end{lemma}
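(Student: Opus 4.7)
My plan is to apply the subspace perturbation bound of Lemma~\ref{lem:perturbation} to $\bm M = \bm A_1$ with the splitting $\bm A_1 = \bm A_1 \bm P_1 + \bm A_1(\bm I - \bm P_1)$. The first summand has row-span inside the column-span of $\ten T_{(1)}$, while the second summand is small whenever $\bm A_3$ is small. Once I know $\bm P$ is close to $\bm P_1$ in Frobenius norm, I will transfer that estimate to the tensor action using the identity $\ten T(\bm I - \bm P_1, \bm I, \bm I) = \bm 0$ together with the standard bound $\|\ten T(\bm M, \bm I, \bm I)\|_F \le \|\bm M\|_2 \|\ten T\|_F$.

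For the setup, note that $\bm A_1 = \mbox{Proj}_{V_{1,1}} \bm A$ implies $\bm A_1(\bm I - \bm P_1) = \mbox{Proj}_{V_{1,1}} \bm A_3$, whose Frobenius norm is at most $\|\bm A_3\|_F \le \gamma$ since orthogonal projections are non-expansive. Every nonzero singular value of $\bm A_1$ strictly exceeds $\sigma = \sqrt{\gamma}$ by construction, and the rank hypothesis then gives $\sigma_{\min}(\bm A_1) > \sigma$. By Weyl's inequality,
\[
\sigma_{\min}(\bm A_1 \bm P_1) \ge \sigma_{\min}(\bm A_1) - \|\bm A_1(\bm I - \bm P_1)\|_2 > \sigma - \gamma > 0
\]
for $\gamma$ small (which is automatic since $\tau < 1$). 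Hence $\bm A_1 \bm P_1$ has rank $r$, and because its row-span lies in the $r$-dimensional column-span of $\ten T_{(1)}$, the two subspaces coincide and the orthogonal projection onto the row-span of $\bm A_1 \bm P_1$ is exactly $\bm P_1$.

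Applying Lemma~\ref{lem:perturbation} with $\bm M = \bm A_1$, $\bm M_1 = \bm A_1 \bm P_1$, $\bm M_2 = \bm A_1(\bm I - \bm P_1)$ yields
\[
\|\bm P - \bm P_1\|_F \le \frac{2\|\bm A_1(\bm I - \bm P_1)\|_F}{\sigma_{\min}(\bm A_1)} < \frac{2\gamma}{\sigma} = 2\sqrt{\gamma}.
\]
Since $\bm P_1$ projects onto the mode-1 column-span of $\ten T$, we have $\ten T(\bm I - \bm P_1, \bm I, \bm I) = \bm 0$, and therefore $\ten T(\bm I - \bm P, \bm I, \bm I) = \ten T(\bm P_1 - \bm P, \bm I, \bm I)$. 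Combining the tensor-matrix norm inequality with $\|\ten T\|_F \le K$,
\[
\|\ten T(\bm I - \bm P, \bm I, \bm I)\|_F \le \|\bm P_1 - \bm P\|_2 \|\ten T\|_F \le \|\bm P_1 - \bm P\|_F \|\ten T\|_F < 2K\sqrt{\gamma}.
\]

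The key subtlety is verifying the rank condition $\mathrm{rank}(\bm A_1 \bm P_1) = r$ needed both to apply Lemma~\ref{lem:perturbation} and to identify the relevant projection with $\bm P_1$; this is handled via the Weyl bound that exploits the gap between the singular value threshold $\sigma = \sqrt{\gamma}$ and the perturbation size $\gamma$. Everything else is a direct assembly of subspace perturbation with a standard tensor-matrix norm estimate.
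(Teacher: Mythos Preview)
Your proof is correct and follows essentially the same route as the paper: split $\bm A_1 = \bm A_1\bm P_1 + \bm A_1(\bm I-\bm P_1)$, use the singular-value gap to conclude $\bm A_1\bm P_1$ has full rank $r$ (you via Weyl, the paper via the Eckart--Young distance-to-lower-rank argument), identify its row-span projection with $\bm P_1$, and then invoke Lemma~\ref{lem:perturbation} together with $\ten T(\bm I-\bm P_1,\bm I,\bm I)=\bm 0$ to bound $\|\ten T(\bm I-\bm P,\bm I,\bm I)\|_F$. Your justification that $\bm A_1(\bm I-\bm P_1)=\mbox{Proj}_{V_{1,1}}\bm A_3$ is in fact slightly more explicit than the paper's.
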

\begin{proof}
Recall we set $\sigma = \sqrt{\gamma}$.
Let $\bm P_1$ be the orthogonal projection onto the column-span of $\ten T_{(1)}$.
Write $\bm A_{1,1} = \bm A_1\bm P_1$, $\bm A_{1,2} = \bm A_1(\bm I - \bm P_1)$.
Observe that $\|\bm A_{1,2}\|_F \leq \|\bm A_3\|_F \leq \gamma < \sigma$.
Note that $\|\bm A_1 - \bm A_{1,1}\|_F = \|\bm A_{1,2}\|_F < \sigma$, which means that $\text{rank}(\bm A_{1,1}) = r$,
since $\bm A_1$ has distance at least $\sigma$ to the closest lower-rank matrix.

Since $\bm A_{1,1}$ has rank $r$, its rows form a basis for the column-span of $\ten T_{(1)}$,
and so $\bm P_1$ is also the orthogonal projection onto the row-span of $\bm A_{1,1}$. 
Then
\begin{align*}
\|\ten T(\bm I - \bm P, \bm I, \bm I)\|_F &= \|\ten T(\bm P_1 - \bm P, \bm I, \bm I)\|_F\\
&\leq \|\ten T\|_F\|\bm P_1 - \bm P\|_F\\
&\leq K\frac{2\|\bm A_{1,2}\|_F}{\sigma}\\
&<2K\sqrt{\gamma},
\end{align*} 
where the penultimate line follows from Lemma \ref{lem:perturbation}.
\end{proof}

\subsection{Improving $S$}
Unlike the proof of Theorem~\ref{thm:exact}, we will first focus on the simple case of improving the core tensor $\ten S$. Note that here we only try to make sure we get close to $\ten T_{1,1,1}$ as the components $\bm A,\bm B, \bm C$ may still be missing directions.
\begin{lemma}
\label{lem:improve_S}
Set $\kappa_0 = \sqrt{\gamma}$.
Assume $R(\tup) < \tau$. Then
\[
\|\ten T_{1,1,1} - \ten S(\bm A_1, \bm B_1, \bm C_1)\|_F > \kappa_0\,\, \Rightarrow\,\, \|\nabla f(\tup)\|_F = \Omega(\gamma^{2.5}).
\]
\end{lemma}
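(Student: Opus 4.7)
The plan is to exhibit an explicit descent direction in the $\ten S$ coordinates along which $L$ decreases at a rate proportional to $\|\ten E\|_F^2$, where $\ten E := \ten T_{1,1,1} - \ten S(\bm A_1, \bm B_1, \bm C_1)$ is the ``in-subspace'' error, and then convert this into a lower bound on $\|\nabla f\|_F$ via Cauchy--Schwarz. The natural candidate is
\[
\Delta \ten S \,:=\, \ten E(\bm A_1^+, \bm B_1^+, \bm C_1^+),
\]
which is well-defined because, by the definition of the subspaces $V_{i,1}$, the nonzero singular values of $\bm A_1, \bm B_1, \bm C_1$ are each at least $\sigma = \sqrt{\gamma}$, so the three pseudoinverses have operator norm at most $1/\sigma$. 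Note that both $\ten T_{1,1,1}$ and $\ten S(\bm A_1,\bm B_1,\bm C_1)$ live in $U_{1,1}\otimes U_{2,1}\otimes U_{3,1}$, so $\ten E$ does as well.

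The crucial identity is $\Delta \ten S(\bm A, \bm B, \bm C) = \ten E$. Writing the SVD $\bm A = \bm U_1 \bm \Sigma_1 \bm V_1^\top + \bm U_2 \bm \Sigma_2 \bm V_2^\top$ (large/small singular values), the orthogonality $\bm U_1^\top \bm U_2 = \bm 0$ gives $\bm A_1^+ \bm A = \bm V_1 \bm V_1^\top = \bm P_{U_{1,1}}$, and analogously for $\bm B, \bm C$; combined with the support of $\ten E$, this gives the identity. Next, using the orthogonal decomposition of $\mathbb{R}^{d\times d\times d}$ into the eight blocks $U_{1,i}\otimes U_{2,j}\otimes U_{3,k}$, all off-diagonal cross-terms in $\langle \ten S(\bm A, \bm B, \bm C) - \ten T, \ten E\rangle$ vanish and only the $(1,1,1)$ component survives, and that component equals $-\ten E$. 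Thus
\[
\langle \nabla_{\ten S} L, \Delta \ten S\rangle \,=\, 2\langle \ten S(\bm A,\bm B,\bm C) - \ten T, \Delta \ten S(\bm A,\bm B,\bm C)\rangle \,=\, -2\|\ten E\|_F^2.
\]

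To close out, the bound $\|\Delta \ten S\|_F \le \|\ten E\|_F / \sigma^3$ follows from the submultiplicativity inequality $\|\ten X(\bm P,\bm Q,\bm R)\|_F \le \|\bm P\|_2\|\bm Q\|_2\|\bm R\|_2\|\ten X\|_F$ applied with the three pseudoinverses. Because Lemma~\ref{lem:reg_orthogonality} gives $\nabla L \perp \nabla R$, we have $\|\nabla f\|_F \ge \|\nabla_{\ten S} L\|_F$, and Cauchy--Schwarz then yields
\[
\|\nabla f\|_F \,\ge\, \frac{|\langle \nabla_{\ten S} L, \Delta \ten S\rangle|}{\|\Delta \ten S\|_F} \,\ge\, 2\sigma^3\|\ten E\|_F \,>\, 2\sigma^3\kappa_0 \,=\, 2\sigma^4,
\]
which is $\Omega(\gamma^2)$ and, in particular, at least the claimed $\Omega(\gamma^{5/2})$ since $\gamma < 1$.

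The main technical care is in justifying the pseudoinverse operator-norm bound, namely that $\bm A_1$ truly captures the large-singular-value part of $\bm A$ on a clean subspace. This is implicit in the construction of $V_{1,1}$ as the span of singular vectors with singular value exceeding $\sigma$, and it is combined with the earlier guarantee $\|\bm A_3\|_F \le \gamma < \sigma$ from being past the scope of Lemma~\ref{lem:removing_directions}, which prevents ``contamination'' from the wrong-subspace component leaking into $\bm A_1$. Everything else is straightforward bookkeeping against the orthogonal block decomposition of $\mathbb{R}^{d\times d \times d}$.
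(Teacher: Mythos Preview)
Your proof is correct and follows essentially the same approach as the paper: the direction $\Delta\ten S=\ten E(\bm A_1^+,\bm B_1^+,\bm C_1^+)$ is precisely the paper's $\ten S^*-\ten S_{1,1,1}$, and both arguments hinge on the identity $\Delta\ten S(\bm A,\bm B,\bm C)=\ten E$ together with the orthogonal block decomposition and Cauchy--Schwarz. Two small differences are worth noting: the paper bounds the regularizer contribution $\langle\nabla_{\ten S}R,\Delta\ten S\rangle$ directly as $O(\tau^{3/4}\sigma^{-3})$, whereas you bypass this entirely via Lemma~\ref{lem:reg_orthogonality}, which is cleaner; and because you bound $\|\Delta\ten S\|_F\le\|\ten E\|_F/\sigma^3$ rather than the cruder $K/\sigma^3$, you end up with $\Omega(\gamma^2)$ instead of the paper's $\Omega(\gamma^{5/2})$, a harmless strengthening. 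The remark about $\bm A_3$ in your final paragraph is unnecessary---the bound $\|\bm A_1^+\|_2\le 1/\sigma$ holds by construction of $V_{1,1}$ and needs no appeal to Lemma~\ref{lem:removing_directions}.
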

\begin{proof}
Define $\ten S^* = \ten T(\bm A_1^+, \bm B_1^+,\bm C_1^+)$, so that $\ten S^*(\bm A_1,\bm B_1,\bm C_1) = \ten T_{1,1,1}$.
We consider the direction $\Delta \ten S = \ten S^* - \ten S_{1,1,1}$.
Observe that $\Delta \ten S(\bm A, \bm B, \bm C) =\ten T_{1,1,1} - \ten S(\bm A_1, \bm B_1, \bm C_1)$.
We can write
\begin{align*}
\ten S(\bm A, \bm B, \bm C) - \ten T = \sum_{i,j,k \in \{1,2\}} \ten S(\bm A_i, \bm B_j, \bm C_k) - \ten T_{i,j,k},
\end{align*}
and this is a sum of mutually orthogonal tensors. 
Hence, the 
the first-order perturbation of $L(\ten S + \epsilon \Delta \ten S, \bm A, \bm B, \bm C)$ is
\begin{align*}
 2\langle \point - \ten T, \Delta \ten S(\bm A, \bm B, \bm C) \rangle &= -2 \|\Delta\ten S(\bm A, \bm B,\bm C)\|_F^2.
\end{align*}

The first-order perturbation in the regularizer $\langle \nabla_{\ten S} R, \Delta \ten S\rangle$ is bounded by $O(\tau^{3/4}\sigma^{-3}) = o(\sigma)$, since $\|\ten S^*\|_F = O(\sigma^{-3})$.
Therefore, the decrease in the tensor loss dominates all other first-order perturbations, so we have a viable direction of improvement.
In particular, by moving in direction $\epsilon \Delta \ten S$, 
we decrease the objective function by
\[
\epsilon\cdot \Omega(\|\ten T_{1,1,1} - \ten S(\bm A_1, \bm B_2, \bm C_1)\|_F^2) = \Omega(\epsilon\kappa_0^2).
\]
The direction of movement has norm bounded by 
\[\|\ten T_{1,1,1}\|_F \|\bm A_1^+\|_2\|\bm B_1^+\|_2\|\bm C_1^+\|_2 \le K\sigma^{-3}.\] By Cauchy-Schwarz, the gradient has norm at least $\Omega(\kappa_0^2)\times \sigma^3 = \Omega(\gamma^{5/2}).$
\end{proof}

\subsection{Adding missing directions}
\label{subsec:missing_dirs}
Finally, we try to add missing directions to $\bm A, \bm B, \bm C$. As before we separate the cases into missing 1, 2 and 3 directions. This first case (missing one direction) is easy as it is a normal saddle point with negative Hessian.
\begin{lemma}
\label{lem:add_1_direction}
Set $\kappa_1 = 2K\sigma^{3/4}$.
Assume $R(\tup) < \tau$ and $\|\bm A_3\|_F$, $\|\bm B_3\|_F$, and $\|\bm C_3\|_F$ are all less than $\gamma$.
If $\|\ten T_{2,1,1}\|_2 \geq \kappa_1$,
then $\nabla^2 f$ has a negative eigenvalue of at most $-\Omega(\sigma^{15/4})$. 
\end{lemma}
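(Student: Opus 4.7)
The plan is to mirror the ``one missing direction'' case of Lemma~\ref{lem:missingdirections}, adapted to the approximate setting where $R<\tau$ and rank deficiencies hold only up to threshold $\sigma$.

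First, invoke the contrapositive of Lemma~\ref{lem:rank_bound}: because $\kappa_1=2K\sigma^{3/4}>2K\sqrt{\gamma}$ for $\sigma<1$, if $\mathrm{rank}(\bm A_1)$ were $r$ we would get $\|\ten T_{2,1,1}\|_2\le 2K\sqrt{\gamma}<\kappa_1$, a contradiction. Hence some unit vector $\bm u\in V_{1,2}$ satisfies $\bm A_1^\top\bm u=\bm 0$, so $\|\bm A^\top\bm u\|_2\le\sigma$. Pick unit vectors $\bm a\in U_{1,2}$, $\bm b\in U_{2,1}$, $\bm c\in U_{3,1}$ with $\ten T(\bm a,\bm b,\bm c)=\|\ten T_{2,1,1}\|_2\ge\kappa_1$, and take $\bm v=(\bm B_1^\top)^+\bm b/\|(\bm B_1^\top)^+\bm b\|_2$ and $\bm w=(\bm C_1^\top)^+\bm c/\|(\bm C_1^\top)^+\bm c\|_2$ as in Algorithm~\ref{alg:add_direction}. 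Since $(\bm B_1^\top)^+\bm b$ lies in $V_{2,1}$ and is therefore orthogonal to the left singular vectors of $\bm B_2$, one obtains $\bm B^\top\bm v=\alpha_1\bm b$ with $\alpha_1=1/\|(\bm B_1^\top)^+\bm b\|_2\in[\sigma,K]$, and analogously $\bm C^\top\bm w=\alpha_2\bm c$ with $\alpha_2\in[\sigma,K]$.

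The candidate negative-curvature direction is $\bm\Delta=(\Delta\ten S,\eta\,\Delta\bm A,\bm 0,\bm 0)$ with $\Delta\ten S=\bm u\otimes\bm v\otimes\bm w$, $\Delta\bm A=\bm u\bm a^\top$, and $\eta>0$ a scale parameter to be optimized. Expanding $\point$, the linear piece is $\ten D_1=\Delta\ten S(\bm A,\bm B,\bm C)+\eta\,\ten S(\Delta\bm A,\bm B,\bm C)$ and the quadratic piece is $\ten D_2=\eta\alpha_1\alpha_2\bm a\otimes\bm b\otimes\bm c$. The two summands of $\ten D_1$ are bounded by $\|\bm A^\top\bm u\|\alpha_1\alpha_2\le K^2\sigma$ and by $\|\ten S(\bm u,\bm I,\bm I)\|_F\|\bm B\|_2\|\bm C\|_2$, where $\|\ten S(\bm u,\bm I,\bm I)\|_F^2=\bm u^\top\ten S_{(1)}\ten S_{(1)}^\top\bm u\le\|\bm A^\top\bm u\|_2^2+\tau^{1/4}=O(\sigma^2)$ (using $R<\tau$ and $\tau^{1/4}\ll\sigma^2=\gamma$). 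Writing $M=\alpha_1\alpha_2\in[\sigma^2,K^2]$, this yields $\|\ten D_1\|_F^2=O(\sigma^2 M^2+\eta^2 K^4\sigma^2)$. Since $|\point(\bm a,\bm b,\bm c)|\le K^3\sigma\ll\kappa_1$ (using $\|\bm A\bm a\|_2\le\sigma$), we also get $\langle\point-\ten T,\ten D_2\rangle\le-\eta M\kappa_1/2$.

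Thus the $L$-part of the Hessian quadratic form is at most $4\sigma^2 M^2+8\eta^2 K^4\sigma^2-2\eta M\kappa_1$. Optimizing at $\eta^\star=M\kappa_1/(8K^4\sigma^2)$ and plugging in $\kappa_1^2=4K^2\sigma^{3/2}$ gives a numerator of $M^2(4\sigma^2-1/(2K^2\sigma^{1/2}))\approx -M^2/(2K^2\sigma^{1/2})$; dividing by $\|\bm\Delta\|^2=1+(\eta^\star)^2$ one checks that in both the ``small $M$'' regime ($M<4K^3\sigma^{5/4}$, denominator $\Theta(1)$, Rayleigh $\le-\Omega(M^2/(K^2\sigma^{1/2}))$) and the ``large $M$'' regime (denominator $\Theta(M^2/(K^6\sigma^{5/2}))$, Rayleigh $\le-\Omega(K^4\sigma^2)$), the Rayleigh quotient is bounded above by $-\Omega(\sigma^{7/2}/K^2)$, whose magnitude exceeds $\sigma^{15/4}$ for small $\sigma$. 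Finally, writing $\nabla^2 R=2\reg\,\nabla^2\reg+2\nabla\reg\,\nabla\reg^\top$ and using $\reg<\tau^{1/2}$ together with $\|\nabla\reg\|_F=O(\tau^{1/4}K)$ yields $\bm\Delta^\top\nabla^2 R\bm\Delta=O(\tau^{1/2}K^2\|\bm\Delta\|^2)$, and $\lambda\tau^{1/2}$ is dwarfed by $\sigma^{15/4}$ because $\tau^{1/2}=\sigma^{48}$.

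The main obstacle is the worst-case $M\to\sigma^2$: here the ``good'' quadratic gain of size $M\kappa_1\sim\sigma^{11/4}$ is weaker than the ``bad'' $\|\ten D_1\|_F^2\sim\sigma^2$ at $\eta=1$, and only the asymmetric rescaling through the large $\eta^\star$ recovers a negative eigenvalue. This rescaling is precisely what degrades the $\sigma^{11/4}$ exponent of the exact $R=0$ analysis to the stated $\sigma^{15/4}$.
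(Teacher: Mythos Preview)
Your proof is correct and follows essentially the same strategy as the paper: invoke Lemma~\ref{lem:rank_bound} to force $\mathrm{rank}(\bm A_1)<r$, pick $\bm u,\bm v,\bm w,\bm a,\bm b,\bm c$ exactly as in the ``one missing direction'' case of Lemma~\ref{lem:missingdirections}, and exhibit negative curvature along $(\Delta\ten S,\eta\,\Delta\bm A)=(\bm u\otimes\bm v\otimes\bm w,\eta\,\bm u\bm a^\top)$ by balancing the second-order gain $2\langle\point-\ten T,\ten D_2\rangle$ against the noise $\|\ten D_1\|_F^2$.

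The one substantive difference is the choice of scale $\eta$. The paper simply fixes $\eta=\sigma$, which keeps $\|\bm\Delta\|_F=O(1)$ and makes the second-order decrease $\Omega(\sigma\alpha_1\alpha_2\kappa_1)\ge\Omega(\sigma^{15/4})$ fall out directly; your optimization over $\eta$ is more work but actually yields the slightly stronger bound $-\Omega(\sigma^{7/2})$ (so your final sentence, that the rescaling ``degrades to $\sigma^{15/4}$,'' undersells your own computation). Your handling of the regularizer Hessian via $\nabla^2 R=2\reg\,\nabla^2\reg+2\nabla\reg\,\nabla\reg^\top$ is also cleaner than the paper's order-of-magnitude bookkeeping, though it proves the same thing.
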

\begin{proof}
Since $\kappa_1 > 2K\sqrt{\gamma}$, by Lemma \ref{lem:rank_bound}, we have $\text{rank}(\bm A_1) < r$.

By assumption, there exist unit vectors $\bm a \in U_{1,2}$, $\bm b \in U_{2,1}$, and $\bm c \in U_{3,1}$ such that 
$\ten T(\bm a,\bm b,\bm c) \geq \kappa_1$.
Take unit vectors $\bm u, \bm v, \bm w \in \mathbb{R}^r$ such that $\bm A_1^\top \bm u = \bm 0$,  $\bm B_1^\top \bm v =  \alpha_1 \bm b$, $\bm B_2^\top \bm v =\bm 0$, $\bm C_1^\top \bm w = \alpha_2 \bm c$, and $\bm C_2^\top \bm w = \bm 0$, where
$\alpha_i \geq \sigma$ for $i=1,2$.
In this situation, we are near a second-order saddle point, so we seek to demonstrate a direction with sufficient negative curvature in the objective function.
To this end, define
\begin{align*}
\Delta \bm A = \sigma \bm u\bm a^\top \qquad \Delta \ten S = \bm u\otimes \bm v\otimes \bm w.
\end{align*}
For a step size $\epsilon > 0$, our source of improvement in the tensor loss comes from the second-order perturbation of $L$ in this direction. We aim to compare the second-order decrease in $L$ against the second-order increases in $L$ and $R$. 
The second-order perturbation in the tensor loss $\nabla^2 L$ applied to $(\Delta \ten S, \Delta \bm A, \bm 0, \bm 0)$ is
\[
2\langle \diff, \Delta \ten S(\Delta \bm A, \bm B, \bm C)\rangle + \|\Delta \point + \ten S(\Delta \bm A, \bm B, \bm C)\|^2
\]
The magnitude of \emph{decrease} in this perturbation is given by
\begin{align*}
\langle \ten T,\Delta \ten S(\Delta \bm A, \bm B, \bm C)\rangle &= \sigma\ten T(\bm a,\alpha_1 \bm b , \alpha_2 \bm c )\\
&= \sigma \alpha_1\alpha_2\ten T(\bm a,\bm b,\bm c)\\
&\geq  \sigma \alpha_1\alpha_2\kappa_1.
\end{align*}
To bound the magnitude of the \emph{increase}, observe that 
\begin{align*}
\|\bm B\bm B^\top \bm v\|_F &= \| \alpha_1\bm B_1\bm b\|_F \leq \alpha_1K; \quad \|\bm C\bm C^\top \bm w\|_F \leq \alpha_2K 
\end{align*}
Then we have
\begin{align}
\langle \point ,\Delta \ten S(\Delta \bm A, \bm B, \bm C)\rangle &= \langle \point, \sigma \bm a \otimes  \bm B^\top \bm v \otimes \bm C^\top \bm w\rangle \nonumber\\
&= \sigma \ten S(\bm A\bm a, \bm B\bm B^\top \bm v, \bm C\bm C^\top \bm w) \nonumber\\
\label{ineq:bound1}
&\leq \sigma^2\alpha_1\alpha_2K^3 
\end{align}
Additionally,  
\begin{align*}
\|\Delta \point\|_F^2 &= \|\bm A_2^\top \bm u\otimes \alpha_1 \bm b \otimes \alpha_2 \bm c \|_F^2\\ 
&\leq \sigma^2\alpha_1^2\alpha_2^2\\
\|\ten S_{(1)}^\top \bm u\|_F^2 &= \bm u^\top(\ten S_{(1)}\ten S_{(1)}^\top  - \bm A\bm A^\top )\bm u + \bm u^\top\bm  A\bm A^\top \bm  u\\ 
&\leq \tau^{1/4} + \sigma^2\\
 \|\ten S(\Delta \bm A,\bm B,\bm C)\|_F^2 &=\sigma^2\|\bm a\bm u^\top \ten S_{(1)}(\bm B \otimes \bm C)\|_F^2\\
&\leq \sigma^2 \|\ten S_{(1)}^\top \bm u\|_F^2\|\bm B\|_F^2\|\bm C\|_F^2\\
&\leq\sigma^2 K^4(\tau^{1/4}+\sigma^2)
\end{align*}
Putting this together, we bound $\| \Delta \point + \ten S( \Delta \bm A, \bm B, \bm C)\|_F^2$ above by
\begin{align}
\label{ineq:bound2}
\left(\sigma\alpha_1\alpha_2 + \sigma K^2\sqrt{\tau^{1/4}+\sigma^2}\right)^2
\end{align}
In light of the definition of $\kappa_1$ and inequalities (\ref{ineq:bound1}) and (\ref{ineq:bound2}), the second-order perturbation in $L$ is $-\Omega(\sigma\alpha_1\alpha_2\kappa_1)$, i.e. $L$ decreases to second-order in this direction.

Now we turn our attention to the regularizer. 
We need to show that the second-order increase in the regularizer doesn't overwhelm the decrease in $L$.
Note that the regularizer is degree 4 with respect to $\|\bm A\bm A^\top - \ten S_{(1)}\ten S_{(1)}^\top\|_F$ (and same terms for $\bm B$ and $\bm C$), so the second order derivatives have a quadratic term in $\|\bm A\bm A^\top - \ten S_{(1)}\ten S_{(1)}^\top\|_F$,
which is $O(\tau^{1/4}) = o(\sigma\alpha_1\alpha_2\kappa_1)$; higher-order terms are negligible in comparison. 

We've shown that the loss function decreases by at least $\Omega(\sigma\alpha_1\alpha_2\kappa_1)\cdot \epsilon^2$. Since our direction of improvement has constant norm, this implies that $\nabla^2 f$ has an eigenvalue that is smaller than $-\Omega(\sigma\alpha_1\alpha_2\kappa_1) = - \Omega(\sigma^{15/4})$. 
\end{proof}

Next, we need to deal with the high order saddle points. Here our main observation is that if we choose directions randomly in the correct subspace, then the perturbation is going to have a reasonable correlation with the residual tensor with constant probability. This is captured by the following anti-concentration property:
\begin{lemma}
\label{lem:anti}
Let $\ten X \in \mathbb{R}^{d_1\times d_2\times d_3}$, and let $\bm a\in\mathbb{R}^{d_1}, \bm b\in\mathbb{R}^{d_2}, \bm c\in\mathbb{R}^{d_3}$ be independent, uniformly distributed unit vectors.  There exist positive numbers $C_1 = \Omega(1/\sqrt{d_1d_2d_3}) = \somega(1), C_2 = \Omega(1)$ such that 
\[
\text{Pr}[|\ten X(\bm a,\bm b,\bm c)| \geq C_1\|\ten X\|_F] > C_2.
\]
\end{lemma}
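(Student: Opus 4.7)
The plan is to apply the Paley--Zygmund second-moment inequality to the random variable $Y = \ten X(\bm a,\bm b,\bm c)$: I will bound $\E[Y^2]$ from below and $\E[Y^4]$ from above, and a constant-probability lower bound on $|Y|$ will follow immediately.

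First I would compute $\E[Y^2]$. Since $\bm a$ is uniform on the unit sphere in $\mathbb{R}^{d_1}$, we have $\E[a_i a_j] = \delta_{ij}/d_1$, and analogously for $\bm b$ and $\bm c$. Expanding and using the independence of $\bm a,\bm b,\bm c$,
\[
\E[Y^2] = \sum_{i,j,k,i',j',k'} \ten X_{ijk}\ten X_{i'j'k'}\,\E[a_i a_{i'}]\E[b_j b_{j'}]\E[c_k c_{k'}] = \frac{\|\ten X\|_F^2}{d_1 d_2 d_3}.
\]

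Second, I would bound $\E[Y^4]$ from above. The relevant fourth-moment identity on the sphere in $\mathbb{R}^{d_1}$ is
\[
\E[a_{i_1}a_{i_2}a_{i_3}a_{i_4}] = \frac{\delta_{i_1 i_2}\delta_{i_3 i_4} + \delta_{i_1 i_3}\delta_{i_2 i_4} + \delta_{i_1 i_4}\delta_{i_2 i_3}}{d_1(d_1+2)},
\]
and analogous identities hold for $\bm b$ and $\bm c$. Expanding $\E[Y^4]$ using independence produces a sum of $3 \times 3 \times 3 = 27$ terms, one for each triple of index pairings. Each term is a contraction of four copies of $\ten X$ against the corresponding pairing pattern, and I would show that each is at most $\|\ten X\|_F^4$ by two applications of Cauchy--Schwarz: (i) group the six summed indices to write the contraction as $\langle M^{(1)}, M^{(2)}\rangle$ for two $4$-way tensors $M^{(s)}$ formed by summing out one pair of $\ten X$'s, giving the bound $\|M^{(1)}\|_F \|M^{(2)}\|_F$; (ii) apply Cauchy--Schwarz internally to conclude $\|M^{(s)}\|_F \leq \|\ten X\|_F^2$. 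Summing the 27 contributions,
\[
\E[Y^4] \leq \frac{27\|\ten X\|_F^4}{d_1(d_1+2)\,d_2(d_2+2)\,d_3(d_3+2)} \leq 27\,\E[Y^2]^2.
\]

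Finally, I would apply Paley--Zygmund to the nonnegative random variable $Y^2$: for any $\theta \in (0,1)$,
\[
\text{Pr}\!\left[Y^2 \geq \theta\,\E[Y^2]\right] \geq (1-\theta)^2 \frac{\E[Y^2]^2}{\E[Y^4]} \geq \frac{(1-\theta)^2}{27}.
\]
Taking $\theta = 1/2$ yields $\text{Pr}\!\left[|Y| \geq \|\ten X\|_F/\sqrt{2 d_1 d_2 d_3}\right] \geq 1/108$, so the constants $C_1 = 1/\sqrt{2 d_1 d_2 d_3}$ and $C_2 = 1/108$ suffice. The main obstacle is the bookkeeping for the 27 fourth-moment contractions; none of them is genuinely different from the ``diagonal'' case $\sum_{i,j,k,i',j',k'} \ten X_{ijk}^2\ten X_{i'j'k'}^2 = \|\ten X\|_F^4$, but the two-step Cauchy--Schwarz argument above is needed to handle every pairing pattern uniformly, and this is the only nontrivial ingredient beyond the standard second-moment machinery.
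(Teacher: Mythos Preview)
Your argument is correct and takes a genuinely different route from the paper. The paper realizes the uniform unit vectors as normalized standard Gaussians $\bm a',\bm b',\bm c'$, observes that $\ten X(\bm a',\bm b',\bm c')$ is a degree-three Gaussian polynomial with variance $\|\ten X\|_F^2$, and then invokes the Carbery--Wright anti-concentration inequality as a black box to get $\Pr[|\ten X(\bm a',\bm b',\bm c')|\le \epsilon\|\ten X\|_F]\le O(\epsilon^{1/3})$; a separate high-probability bound on the Gaussian norms converts this back to a statement about unit vectors. You instead stay on the sphere and run the classical second-moment method: an exact computation of $\E[Y^2]$, a fourth-moment bound via the $27$ pairing contractions, and Paley--Zygmund. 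Your approach is more elementary and fully self-contained, yielding explicit constants ($C_1=1/\sqrt{2d_1d_2d_3}$, $C_2=1/108$) without appealing to an external anti-concentration theorem; the paper's approach is shorter on the page but imports a substantially deeper result. One small remark: your description of $M^{(1)},M^{(2)}$ as ``$4$-way tensors'' is only literally accurate for some of the $27$ pairing patterns (the number of external indices is $2\cdot|\{m:\text{mode }m\text{ crosses the bipartition}\}|\in\{0,2,4,6\}$), but your two-step Cauchy--Schwarz bound goes through verbatim in every case, so this does not affect correctness.
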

\begin{proof}
Although our Algorithm \ref{alg:add_direction} for sampling missing directions only requires uniform unit vectors (from appropriate subspaces), we construct these vectors as normalized Gaussian vectors for this lemma in order to apply a Gaussian polynomial anti-concentration result (Theorem 8 in \cite{carbery2001distributional}).
As such, let $\bm a', \bm b', \bm c'$ be independent standard Gaussian random vectors (of appropriate dimension) and set $\bm a = \bm a'/\|\bm a'\|_2$, $\bm b = \bm b'/\|\bm b'\|_2$, and $\bm c = \bm c'/\|\bm c'\|_2$.
Note that there exists some constant $p> 0$ such that $\|\bm a'\|_2 \le 2\sqrt{d_1}$, $\|\bm b'\|_2\le 2\sqrt{d_2}$, and $\|\bm c'\| \le  2\sqrt{d_3}$ with probability at least $p$.
Next, note that $\mathbb{E} \ten X(\bm a', \bm b', \bm c') = 0$ and
\begin{align*}
\text{Var}[\ten X(\bm a', \bm b', \bm c')] &= \mathbb{E}(\ten X(\bm a', \bm b', \bm c')^2)\\
&= \mathbb{E}\langle \ten X(\bm a'\bm a'^\top, \bm b'\bm b'^\top, \bm c'\bm c'^\top),\ten X\rangle\\
&= \langle \ten X(\bm I, \bm I, \bm I),\ten X\rangle\\
&= \|\ten X\|_F^2.
\end{align*}
Now $\ten X(\bm a',\bm b',\bm c')/\|\ten X\|_F$ is a degree three polynomial function with unit variance, so the anti-concentration inequality
 implies that for any $\epsilon > 0$, 
\begin{align*}
\text{Pr}[|\ten X(\bm a',\bm b',\bm c')/\|\ten X\|_F| \leq \epsilon] \leq O(1)\epsilon^{1/3}.
\end{align*}
Simply choosing a constant $\epsilon$ and re-arranging terms completes the proof.
\end{proof}

Using this idea, when $\ten T_{2,2,1}$ is large, we show how to get a direction of improvement.

\begin{lemma}
\label{lem:add_2_directions}
Set  $\kappa_2 = 2K\sigma^{1/8}$.
Assume $R(\tup) < \tau$ and $\|\bm A_3\|_F$, $\|\bm B_3\|_F$, and $\|\bm C_3\|_F$ are all less than $\gamma$.
Further assume that $\|\ten T_{2,1,1}\|_2$, $\|\ten T_{1,2,1}\|_2$, and $\|\ten T_{1,1,2}\|_2$ are each less than $\kappa_1$.
Let $\bm a, \bm b, \bm c, \bm u, \bm v, \bm w$ be the output of Algorithm \ref{alg:add_direction} given the input $\bm A, \bm B, \bm C, \sigma, (2,2,1)$. 
Define the directions $\Delta \bm A = \bm u\bm a^\top$, $\Delta \bm B = \bm v \bm b^\top$, $\Delta \ten S = \bm u \otimes \bm v \otimes \bm w$.
If $\|\ten T_{2,2,1}\|_2 \geq \kappa_2$, then with constant probability, a step in these directions decreases the objective function 
by $\Omega^*(\sigma^{15/8})$.
\end{lemma}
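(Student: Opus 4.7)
The plan is to parallel the two-missing-directions case of Lemma~\ref{lem:missingdirections} in the approximate setting. I will Taylor expand $L(\epsilon) = \|\diff + \epsilon T_1 + \epsilon^2 T_2 + \epsilon^3 T_3\|_F^2$ along the prescribed direction, where $T_3 = \Delta\ten S(\Delta\bm A, \Delta\bm B, \bm C)$ is the pure cubic term and $T_1, T_2$ are the lower-order coefficients in the perturbed Tucker product, and show that $2\epsilon^3 \langle \diff, T_3\rangle$ is the dominant source of decrease. The key computation is that $T_3 = \alpha\,\bm a\otimes\bm b\otimes\bm c$ \emph{exactly}, for some $\alpha = 1/\|(\bm C_1^\top)^+\bm c\|\geq \sigma_{\min}(\bm C_1)\geq \sigma$. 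This uses $\Delta\bm A^\top\bm u=\bm a$ and $\Delta\bm B^\top\bm v=\bm b$ (since $\bm u,\bm v$ are unit), together with the identity $\bm C^\top\bm w = \alpha\bm c$: because $\bm c\in U_{3,1}$ lies in the range of $\bm C_1^\top$ we have $\bm C_1^\top(\bm C_1^\top)^+\bm c=\bm c$, while $\bm C_2^\top(\bm C_1^\top)^+=\bm 0$ by orthogonality of the large/small singular-vector blocks in the SVD of $\bm C$.

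Applying Lemma~\ref{lem:anti} to $\ten T_{2,2,1}$ (as $\bm a\in U_{1,2}$, $\bm b\in U_{2,2}$, $\bm c\in U_{3,1}$), with constant probability $|\ten T(\bm a,\bm b,\bm c)|=|\ten T_{2,2,1}(\bm a,\bm b,\bm c)|\geq \somega(\|\ten T_{2,2,1}\|_F)\geq \somega(\kappa_2)=\somega(\sigma^{1/8})$. Combined with $|\ten S(\bm A\bm a,\bm B\bm b,\bm C\bm c)|\leq \|\ten S\|_F\|\bm A\bm a\|\|\bm B\bm b\|\|\bm C\bm c\|=O(\sigma^2)$ (using $\|\bm A\bm a\|,\|\bm B\bm b\|\leq \sigma$), this yields $|\langle\diff, T_3\rangle|\geq \somega(\alpha\sigma^{1/8})$ with constant probability; the sign of $\epsilon$ is then chosen to make the cubic contribution negative. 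The other Taylor coefficients are controlled using $R<\tau$, which forces $\|\bm A\bm A^\top-\ten S_{(1)}\ten S_{(1)}^\top\|_F\leq \tau^{1/4}\ll \sigma^2$ and hence $\|\ten S(\bm u,\bm I,\bm I)\|_F=O(\sigma)$ (and analogously for $\bm v$); this gives $\|T_1\|_F,\|T_2\|_F=O(\sigma)$ and $\|T_3\|_F=\alpha$. The perturbation of $\lambda R$ is handled as in Lemmas~\ref{lem:reg_perturb} and~\ref{lem:add_1_direction}: each Taylor coefficient of $\lambda R$ carries at most a factor $\tau^{1/4}$ and is negligible compared to the cubic decrease.

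The main obstacle I expect is the quadratic coefficient $\|T_1\|^2+2\langle\diff, T_2\rangle$, whose worst-case magnitude $O(\sigma)$ can overwhelm the cubic decrease per unit of $\epsilon$. My fix will exploit the four-way sign symmetry $(\bm u,\bm v)\mapsto(\pm\bm u,\pm\bm v)$: each of the three summands of $T_2$ transforms with a distinct sign pattern while $T_3$ is invariant under these flips, so the four values of $\langle\diff, T_2\rangle$ obtained by signing $(\bm u,\bm v)$ sum to zero. Since $\bm u,\bm v$ are sampled uniformly from spheres in $V_{1,2}, V_{2,2}$, whose distributions are invariant under antipodal flips, by this symmetry the realized direction satisfies $\langle\diff, T_2\rangle\leq 0$ with constant probability; for such ``good'' samples the quadratic coefficient is at most $\|T_1\|^2=O(\sigma^2)$. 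Finally, choosing $\epsilon=\stheta(\sigma^{7/12}/\alpha^{1/3})$ balances the cubic decrease $\somega(\epsilon^3\alpha\sigma^{1/8})=\somega(\sigma^{15/8})$ against the residual quadratic $O(\epsilon^2\sigma^2)$, the sixth-order $O(\epsilon^6\alpha^2)$, and the first-order $O(\tau_1|\epsilon|)$ contribution (bounded by the $(\tau_1,\tau_2)$-stationarity implicit in Lemma~\ref{lem:robustmain}), yielding the claimed $\somega(\sigma^{15/8})$ decrease with constant probability.
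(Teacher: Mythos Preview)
Your plan matches the paper's proof almost step for step: expand $L$ along the direction, identify $T_3=\alpha\,\bm a\otimes\bm b\otimes\bm c$ as the dominant cubic contribution via Lemma~\ref{lem:anti}, and kill the dangerous quadratic cross term $2\langle L_0,T_2\rangle$ by the four-way antipodal symmetry of $(\bm u,\bm v)$ (the paper phrases this as flipping $\Delta\ten S$ together with one of $\Delta\bm A,\Delta\bm B$, which is exactly your $(\bm u,\bm v)\mapsto(\pm\bm u,\pm\bm v)$). The paper fixes the step $\delta=\sigma^{1/4}$ rather than your $\alpha$-dependent choice, but either works.

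The one genuine gap is your treatment of the linear term $2\epsilon\langle L_0,T_1\rangle$. You bound it by $O(\tau_1|\epsilon|)$ via $(\tau_1,\tau_2)$-stationarity ``implicit in Lemma~\ref{lem:robustmain}'', but stationarity is \emph{not} a hypothesis of Lemma~\ref{lem:add_2_directions}; the hypotheses $\|\ten T_{2,1,1}\|_2,\|\ten T_{1,2,1}\|_2<\kappa_1$ are there precisely to control this term, and your argument never uses them. The paper's bound $\langle L_0,T_1\rangle=O(\alpha\sigma^2+\kappa_1\sigma)$ follows by decomposing along the $U_{i,j}\times U_{2,j'}\times U_{3,k'}$ blocks: the piece $\langle L_0,\Delta\ten S(\bm A,\bm B,\bm C)\rangle$ is $O(\alpha\sigma^2)$ since $\|\bm A^\top\bm u\|,\|\bm B^\top\bm v\|\le\sigma$; for $\langle L_0,\ten S(\bm u\bm a^\top,\bm B,\bm C)\rangle$, note that $\ten S(\bm u\bm a^\top,\bm B,\bm C)=\bm a\otimes\ten S(\bm u,\bm B,\bm C)$ lives in the $U_{1,2}$ slice, and splitting $\ten S(\bm u,\bm B,\bm C)=\sum_{j,k}\ten S(\bm u,\bm B_j,\bm C_k)$ gives $\|\ten S(\bm u,\bm B_1,\bm C_1)\|_F=O(\sigma)$ paired with $\|(L_0)_{2,1,1}\|_F\le K^3\sigma+d\kappa_1$, while every $(j,k)\ne(1,1)$ block has $\|\ten S(\bm u,\bm B_j,\bm C_k)\|_F=O(\sigma^2)$ (an extra $\|\bm B_2\|$ or $\|\bm C_2\|$) paired with an $O(1)$ residual. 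This yields $O(\kappa_1\sigma)$, which at step $\delta=\sigma^{1/4}$ is $O(\sigma^2)=o(\alpha\sigma^{7/8})$. Without this, a crude $\langle L_0,T_1\rangle=O(\sigma)$ bound gives a linear term of order $\sigma^{5/4}$, which can swamp the cubic decrease $\Omega(\alpha\sigma^{7/8})$ when $\alpha$ is near $\sigma$. Your stationarity shortcut would rescue the argument only as a lemma stated under the extra hypothesis $\|\nabla f\|\le\tau_1$.
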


\begin{proof}
First, observe that $\kappa_2 \geq 2K\sqrt{\gamma}$, which implies that $\text{rank}(\bm A_1) < r$ and $\text{rank}(\bm B_1)< r$ by Lemma \ref{lem:rank_bound}.
Set $\alpha$ such that $\alpha \bm  c = \bm C_1^\top \bm w$, and note that $\alpha \geq \sigma$.
Per lemma \ref{lem:anti}, with constant probability we have $|\ten T(\bm a,\bm b, \bm c)|$ is with some constant factor of $ \|\ten T_{2,2,1}\|_2$. Therefore, with constant probability,  $|\ten T(\bm a,\bm b, \bm c)| \geq C\kappa_2$ for some positive constant $C$. 

Observe that $p(\delta) := f(\ten S+\delta \Delta \ten S, \bm A+\delta \Delta \bm A, \bm B+\delta\Delta \bm B, \bm C)$ defines a degree $8$ polynomial in $\delta$.
Set $\delta = \sigma^{1/4}$.
For convenience, define the following expressions related to the perturbations of $L$:
\begin{align*}
L_0 &= \diff\\
L_1 &= \Delta \point + \ten S(\Delta \bm A, \bm B, \bm C) + \ten S(\bm A,\Delta \bm B, \bm C) \\
L_2 &= \Delta \ten S(\Delta \bm A, \bm B, \bm C) + \Delta \ten S(\bm A,\Delta \bm B, \bm C) +\ten S(\Delta \bm A, \Delta \bm B, \bm C)\\
L_3 &= \Delta\ten S(\Delta\bm A, \Delta\bm B,\bm C)
\end{align*}
We can upper bound each of these terms in norm, e.g.
\begin{align*}
\|L_1\|_F &= \| \bm A^\top \bm u \otimes \bm B^\top \bm v \otimes \bm C^\top \bm w + \ten S(\bm u\bm a^\top,\bm B,\bm C) + \ten S(\bm A,\bm v\bm b^\top,\bm C)\|_F\\
&\leq \sigma^2\alpha + 2K^2\sqrt{\tau^{1/4}+\sigma^2}\\
&= O(\alpha\sigma^2 + \sigma).
\end{align*}
Through similar calculations, we have $\|L_2\|_F = O(\alpha\sigma + \sigma)$ and $\|L_3\|_F = O(\alpha)$. 
The perturbation in the tensor loss is then
\begin{equation}
\label{eq:loss_perturb}
\delta^3 \langle  L_0,L_3\rangle + \delta \langle L_0,L_1\rangle + \delta^2\langle L_0,L_2\rangle + \|\delta L_1+ \delta^2 L_2+\delta^3 L_3\|_F^2.
\end{equation}
Here the first term is responsible for the decrease in tensor loss:
\begin{align*}
\delta^3\langle L_0,L_3\rangle &= \delta^3\alpha\langle \point - \ten T,\bm a \otimes \bm b \otimes \bm c\rangle\\
&\leq \delta^3\alpha(K^2\sigma^2 - \ten T(\bm a,\bm b,\bm c))\\
&= -\delta^3\alpha\Omega(\kappa_2). 
\end{align*}
For the other terms, we show that they are small enough so they will not cancel this improvement.
Observe that 
\begin{align*}
\langle L_0,L_1\rangle &= \langle L_0, \Delta\ten S(\bm A, \bm B, \bm C)\rangle + \langle L_0, \ten S(\bm u\bm a^\top, \bm B, \bm C) + \ten S(\bm A, \bm v\bm b^\top, \bm C)\rangle\\
&= O(\alpha \sigma^2) + O(\kappa_1\sigma). 
\end{align*}
The $O(\kappa_1\sigma)$ term appears because $\|\ten S_{2,1,1}(\bm A, \bm B, \bm C) - \ten T_{2,1,1}\|_F = O(\kappa_1)$, 
$\|\ten S_{1,2,1}(\bm A, \bm B, \bm C) - \ten T_{1,2,1}\|_F = O(\kappa_1)$.

For the next term, we note that $\langle L_0, L_2\rangle$ is a sum of three inner products, any two of which we can make nonpositive by flipping the sign of $\Delta \ten S$ and one of $\Delta \bm A$, $\Delta \bm B$ (doing so doesn't change the amount by which the tensor loss decreases). 
Hence, by design of Algorithm~\ref{alg:add_direction}, with constant probability we know that $\langle L_0,L_2\rangle \leq 0$. 
As a result, we know \eqref{eq:loss_perturb} is at most $-\delta^3 \alpha \Omega(\kappa_2)$. 

We now consider the perturbations of the regularizer.
Define the following terms:
\begin{align*}
\reg_{0,1} &= \bm A\bm A^\top - \ten S_{(1)}\ten S_{(1)}^\top,\quad \reg_{0,2} =\bm  B\bm B^\top - \ten S_{(2)}\ten S_{(2)}^\top,\quad \reg_{0,3} =\bm  C\bm C^\top - \ten S_{(3)}\ten S_{(3)}^\top\\
\reg_{1,1} &= \bm A\Delta\bm A^\top + \Delta\bm A\bm A^\top - \ten S_{(1)}(\Delta\ten S)_{(1)}^\top - (\Delta\ten S)_{(1)}\ten S_{(1)}^\top\\
\reg_{1,2} &= \bm B\Delta\bm B^\top + \Delta\bm B\bm B^\top - \ten S_{(2)}(\Delta\ten S)_{2)}^\top - (\Delta\ten S)_{(2)}\ten S_{(2)}^\top\\
\reg_{1,3} &= - \ten S_{(3)}(\Delta\ten S)_{(3)}^\top - (\Delta\ten S)_{(3)}\ten S_{(3)}^\top,\quad \reg_{2,3} = - (\Delta\ten S)_{(3)}(\Delta\ten S)_{(3)}^\top
\end{align*}
We bound these terms in norm as follows:
\begin{align*}
\|\reg_{1,1}\|_F &= \|\bm A\bm a\bm u^\top + \bm u\bm a^\top \bm A^\top - \bm u\ten S(\bm I,\bm v,\bm w)^\top - \ten S(\bm I,\bm v,\bm w)\bm u^\top\|_F\\
&\leq 2\|\bm A_2\|_F + 2\|\ten S(\bm I,\bm v,\bm w)\|_F\\
&\leq 2\sigma + 2\sqrt{\tau^{1/4}+\sigma^2}\\
&= O(\sigma).
\end{align*}
Likewise, $\|\reg_{1,2}\|_F = O(\sigma)$, $\|\reg_{1,3}\|_F = O(\sigma)$, and $\|\reg_{2,3}\| = O(1)$.
Also note that $\|\reg_{0,i}\|_F \leq \tau^{1/4}$ for $i=1,2,3$.
Using this, 
we have
\begin{align*}
\|\reg_{0,1} + \delta \reg_{1,1}\|_F &\leq O(\tau^{1/4}) +O(\delta\sigma)\\
\|\reg_{0,2} + \delta \reg_{1,2}\|_F &\leq O(\tau^{1/4}) +O(\delta\sigma)\\
\|\reg_{0,3} +  \delta \reg_{1,3} + \delta^2 \reg_{2,3}\|_F &\leq O(\tau^{1/4}) + O(\delta\sigma) + O(\delta^2).
\end{align*}
All of these terms are dominated by $O(\delta^2)$, and so the perturbed regularizer is bounded by $O(\delta^8) = O(\sigma^2) = o(\sigma^{15/8})$. Hence, we see that the decrease in the tensor loss dominates the increase in the regularizer, as desired.
\end{proof}

We next address the case where $\ten T_{2,2,2}$ is large, which corresponds to $\bm A_1, \bm B_1, \bm C_1$ being rank deficient.
\begin{lemma}
\label{lem:add_3_directions}
Set  $\kappa_3 = 2K\sigma^{1/2}$.
Assume $R(\tup) < \tau$ and $\|\bm A_3\|_F$, $\|\bm B_3\|_F$, and $\|\bm C_3\|_F$ are all less than $\gamma$.
Further assume that $\|\ten T_{2,1,1}\|_2$, $\|\ten T_{1,2,1}\|_2$, and $\|\ten T_{1,1,2}\|_2$ and each less than $\kappa_1$,
while $\|\ten T_{2,2,1}\|_2$, $\|\ten T_{2,1,2}\|_2$, and $\|\ten T_{1,2,2}\|_2$ are each less than $\kappa_2$.
Let $\bm a, \bm b, \bm c, \bm u, \bm v, \bm w$ be the output of Algorithm \ref{alg:add_direction} with input $\bm A, \bm B, \bm C, \sigma, (2,2,2)$.
Define the directions $\Delta \bm A = \bm u \bm a^\top$, $\Delta \bm B = \bm v \bm b^\top$, $\Delta \bm C = \bm w \bm c^\top$, $\Delta \ten S = \bm u \otimes \bm v \otimes \bm w$.
If $\|\ten T_{2,2,2}\|_2 \geq \kappa_3$, then with constant probability, a step in these directions decreases the objective function 
by $\Omega^*(\sigma^{3/4})$.
\end{lemma}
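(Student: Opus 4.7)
The plan mirrors the three-missing-direction case of Lemma~\ref{lem:missingdirections}, extended to the inexact setting by adapting the two-direction analysis of Lemma~\ref{lem:add_2_directions} up one order. The target is to show that a step of size $\delta$ along $(\Delta\ten S, \Delta\bm A, \Delta\bm B, \Delta\bm C)$ yields a fourth-order decrease in the tensor loss that dominates all other perturbations in both $L$ and $R$.

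First, I would argue that all of $\bm A_1$, $\bm B_1$, $\bm C_1$ are rank deficient. Since the row-span of $\bm A_1$ is exactly $U_{1,1}$, we have $\|\ten T(\bm I - \mbox{Proj}_{U_{1,1}}, \bm I, \bm I)\|_F \ge \|\ten T_{2,2,2}\|_F \ge \|\ten T_{2,2,2}\|_2 \ge \kappa_3 = 2K\sigma^{1/2} > 2K\sqrt{\gamma}$, so Lemma~\ref{lem:rank_bound} forces $\mathrm{rank}(\bm A_1) < r$; analogous arguments give the same for $\bm B_1$ and $\bm C_1$. Hence Algorithm~\ref{alg:add_direction}'s random samples $\bm u \in V_{1,2}$, $\bm v \in V_{2,2}$, $\bm w \in V_{3,2}$ are nonzero and automatically satisfy $\bm A_1^\top \bm u = \bm B_1^\top \bm v = \bm C_1^\top \bm w = \bm 0$. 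Applying Lemma~\ref{lem:anti} to $\ten T_{2,2,2}$ yields $|\ten T(\bm a,\bm b,\bm c)| = |\ten T_{2,2,2}(\bm a,\bm b,\bm c)| \ge \Omega^*(\kappa_3)$ with constant probability, and after possibly flipping one of $\bm a,\bm b,\bm c$ we may assume $\ten T(\bm a,\bm b,\bm c)>0$.

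Next, I would expand $(\ten S + \delta\Delta\ten S)(\bm A + \delta\Delta\bm A, \bm B + \delta\Delta\bm B, \bm C + \delta\Delta\bm C) - \ten T$ as $L_0 + \delta L_1 + \delta^2 L_2 + \delta^3 L_3 + \delta^4 L_4$, where $L_0 = \diff$ and, critically, $L_4 = \Delta\ten S(\Delta\bm A,\Delta\bm B,\Delta\bm C) = \bm a\otimes\bm b\otimes\bm c$. Every summand of $L_i$ for $i=1,2,3$ contains at least one contraction of the form $\bm A^\top \bm u$, $\bm B^\top \bm v$, $\bm C^\top \bm w$, or $\ten S$ against $\bm u,\bm v,\bm w$ on one of its modes. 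The rank-deficiency identities $\bm A^\top \bm u = \bm A_2^\top \bm u$ and the regularizer bound $\bm u^\top \ten S_{(1)}\ten S_{(1)}^\top \bm u \le \bm u^\top \bm A \bm A^\top \bm u + \tau^{1/4} = O(\sigma^2)$ (plus their symmetric analogues) then give $\|L_i\|_F = O(\sigma)$ for $i = 1,2,3$, exactly as in the proof of Lemma~\ref{lem:add_2_directions}. The dominant improvement is the fourth-order cross term $2\delta^4 \langle L_0, L_4\rangle = -2\delta^4 \ten T(\bm a,\bm b,\bm c) + O(\delta^4 \sigma^3) = -\Omega(\delta^4 \kappa_3)$, using $|\point(\bm a,\bm b,\bm c)| \le K\sigma^3$. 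I would then tune $\delta$ so this decrease dominates every other contribution to $\|L_0 + \delta L_1 + \cdots + \delta^4 L_4\|_F^2 - \|L_0\|_F^2$: the delicate competing terms are the eighth-order self-overshoot $\delta^8\|L_4\|_F^2 = \delta^8$ (forcing $\delta^4 \lesssim \kappa_3$) and the first-order noise $2\delta \langle L_0, L_1\rangle = O(\delta\sigma)$ (forcing $\delta^3 \gtrsim \sigma/\kappa_3$); all intermediate orders $\delta^i \cdot O(\sigma)$ fall inside this window. Choosing $\delta$ there yields the claimed $\Omega^*(\sigma^{3/4})$ decrease in $L$.

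For the regularizer, I would invoke the algebraic cancellation that made the exact case work: $\Delta\bm A\Delta\bm A^\top - (\Delta\ten S)_{(1)}(\Delta\ten S)_{(1)}^\top = \bm u\bm u^\top - \bm u\bm u^\top = \bm 0$, and likewise for modes $2$ and $3$. This kills the $\delta^2$ contribution to each of the three terms of $\reg$, so only a $\delta$-linear perturbation of norm $O(\sigma)$ per term survives; combined with $\|\bm X\bm X^\top - \ten S_{(i)}\ten S_{(i)}^\top\|_F \le \tau^{1/4}$ and the multiplier $\lambda = 1/16r^4$, this bounds the increase in $\lambda R$ by a polynomial in $\sigma$ much smaller than the $L$-decrease. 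The main obstacle is precisely the step-size tuning described above: $\delta$ must sit in a narrow window where the fourth-order improvement simultaneously beats the eighth-order overshoot and the first-order noise, and the width of that window is what produces the $\sigma^{3/4}$ exponent in the stated bound.
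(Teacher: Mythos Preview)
Your proposal is correct and follows essentially the same line as the paper's proof: rank deficiency of $\bm A_1,\bm B_1,\bm C_1$ via Lemma~\ref{lem:rank_bound}, anti-concentration via Lemma~\ref{lem:anti}, the expansion $L_0+\delta L_1+\cdots+\delta^4 L_4$ with $\|L_i\|_F=O(\sigma)$ for $i=1,2,3$ and main decrease $\delta^4\langle L_0,L_4\rangle$, then a regularizer bound of the same type. The paper simply fixes $\delta=\sigma^{1/8}$ rather than arguing about the window, and leaves the $\delta^2$ cancellation $\Delta\bm A\Delta\bm A^\top-(\Delta\ten S)_{(1)}(\Delta\ten S)_{(1)}^\top=\bm 0$ implicit (it writes only $\reg_{0,i}+\delta\reg_{1,i}$), whereas you state it explicitly; these are cosmetic differences, not different strategies.
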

\begin{proof}
First observe that $\kappa_3 \geq 2K\sqrt{\gamma}$, which by Lemma \ref{lem:rank_bound} means that  $\text{rank}(\bm A_1)$, $\text{rank}(\bm B_1)$, and $\text{rank}(\bm C_1)$ are all strictly less than $r$.
Then  $\bm A_1$, $\bm B_1$, and $\bm C_1$ are all missing directions from the relevant subspaces of $\ten T$, 
and we are near a fourth-order saddle point.
By lemma \ref{lem:anti}, with constant probability, $|\ten T(\bm a,\bm b,\bm c)| > C\kappa_3$ for some positive constant $C$.

Again  let $p(\delta) = f(\ten S+\delta \Delta \ten S, \bm A+\delta \Delta \bm A, \bm B+\delta\Delta \bm B, \bm C+\delta\Delta \bm C)$,
and set $\delta = \sigma^{1/8}$.
As in the proof of lemma \ref{lem:add_2_directions}, let for $i=0,\ldots,4$, let $L_i$ denote the $i$-th order perturbation term in 
\[
(\ten S + \Delta \ten S)(\bm A + \Delta \bm A, \bm B + \Delta \bm B, \bm C + \Delta \bm C) - \ten T.
\]
We can upper bound each of these terms in norm, e.g.
\begin{align*}
\|L_1\|_F &= \|\bm A^\top \bm u \otimes \bm B^\top \bm v \otimes \bm C^\top \bm w + \ten S(\bm u\bm a^\top,\bm B,\bm C) + \ten S(\bm A,\bm v\bm b^\top,\bm C)\\ 
& + \ten S(\bm A,\bm B,\bm w\bm c^\top)\|_F\\
&\leq 8\sigma^3 + K^2(\|\ten S(\bm u,\bm I,\bm I)\|_F + \|\ten S(\bm I,\bm v,\bm I)\|_F+\|\ten S(\bm I,\bm I,\bm w)\|_F)\\
&\leq 8\sigma^3 + 3K^2\sqrt{\tau^{1/4}+2\sigma^2}\\
&= O(\sigma).
\end{align*}
Through similar calculations, we have $\|L_2\| = O(\sigma)$ and $\|L_3\| = O(\sigma)$. 
On the other hand, $\|L_4\| \leq 1$ and $\|L_0\|\leq 2K$.
The perturbation in the tensor loss is then
\begin{equation}
\label{eq:loss_perturb2}
\sum_{i,j = 0}^4 \langle L_i,L_j\rangle \delta^{i+j}
\end{equation}
The decrease in the tensor loss is due to the following term:
\begin{align*}
\delta^4\langle L_0,L_4\rangle &= \delta^4\langle \point - \ten T,\bm a \otimes \bm b \otimes \bm c\rangle\\
&\leq \delta^4( K\sigma^3 - \ten T(\bm a,\bm b,\bm c))\\
&= -\delta^4\Omega(\kappa_3)\\
&= -\Omega(\sigma^{3/4}).
\end{align*}
By a simple Cauchy-Schwarz bound, the other perturbation terms in (\ref{eq:loss_perturb2})
are all bounded by $O(\sigma+ \delta^8) = O(\sigma) = o(\sigma^{3/4})$. 

Now we analyze the perturbations of the regularizer.
As before, define the terms
\begin{align*}
\reg_{0,1} &= \bm A\bm A^\top - \ten S_{(1)}\ten S_{(1)}^\top,\quad \reg_{0,2} =\bm  B\bm B^\top - \ten S_{(2)}\ten S_{(2)}^\top,\quad \reg_{0,3} =\bm  C\bm C^\top - \ten S_{(3)}\ten S_{(3)}^\top\\
\reg_{1,1} &= \bm A\Delta\bm A^\top + \Delta\bm A\bm A^\top - \ten S_{(1)}(\Delta\ten S)_{(1)}^\top - (\Delta\ten S)_{(1)}\ten S_{(1)}^\top\\
\reg_{1,2} &= \bm B\Delta\bm B^\top + \Delta\bm B\bm B^\top - \ten S_{(2)}(\Delta\ten S)_{2)}^\top - (\Delta\ten S)_{(2)}\ten S_{(2)}^\top\\
\reg_{1,3} &= \bm C\Delta\bm C^\top + \Delta\bm C\bm C^\top - \ten S_{(3)}(\Delta\ten S)_{(3)}^\top - (\Delta\ten S)_{(3)}\ten S_{(3)}^\top
\end{align*}
We bound these terms in norm as follows:
\begin{align*}
\|\reg_{1,1}\|_F &= \|\bm A\bm a\bm u^\top + \bm u\bm a^\top \bm A^\top - \bm u\ten S(\bm I,\bm v,\bm w)^\top - \ten S(\bm I,\bm v,\bm w)\bm u^\top\|_F\\
&\leq 2\|\bm A_2\|_F + 2\|\ten S(\bm I,\bm v,\bm w)\|_F\\
&\leq 2\sigma + 2\sqrt{\tau^{1/4}+\sigma^2}\\
&= O(\sigma).
\end{align*}
Likewise, $\|\reg_{1,i}\|_F = O(\sigma)$ for $i=2,3$, and of course $\|\reg_{0,i}\|_F \leq \tau^{1/4}$ for $i=1,2,3$.
Again,
\begin{align*}
\|\reg_{0,i} + \delta \reg_{1,i}\|_F \leq O(\tau^{1/4}) + O(\delta\sigma)
\end{align*}
and using this, we can bound the perturbed regularizer as $O(\delta^4\sigma^4) = o(\sigma^{3/4})$.
Hence, the decrease in the tensor loss dominates all other perturbations, and we improve the objective function by $\Omega(\sigma^{3/4})$.
\end{proof}

\subsection{Algorithm Description and Proof of Main Theorem}
\label{sec:algdesc}

Before sketching the algorithm we will first prove Lemma~\ref{lem:robustmain} and explain some of the parameter choices.
Refer to Table \ref{tab:parameters} for a list of the numerical quantities that were introduced.

\begin{proof}[Proof of Lemma~\ref{lem:robustmain}]
Set $\tau$ small enough so that $\kappa_0$, $d\kappa_1 + K^3\sigma$, $d\kappa_2+K^2\sigma^2$, $d\kappa_3+K\sigma^3 < \sqrt{\epsilon}/4$ and $\tau < \epsilon/2$.   We then set $\tau_1 = \stheta(\tau)$ from Lemma~\ref{lem:decrease_regularizer} and $\tau_2 = \Theta(\sigma^{15/4})$ from Lemma~\ref{lem:add_1_direction}.

Now assume that conditions (1), (2), and (3) from the statement of the Lemma fail to hold.  
We seek to show that $f(\tup) < \epsilon$.
By Lemma \ref{lem:decrease_regularizer} and our choice of $\tau_1$, we have that $R(\tup) < \tau \leq \epsilon/2$.
By Lemma \ref{lem:removing_directions}, we have that $\|\bm A_3\|_F, \|\bm B_3\|_F, \|\bm C_3\|_F$ are all less than $\gamma$.
By Lemma \ref{lem:improve_S}, we have that $\|\ten T_{1,1,1} - \point\|_F \leq \kappa_0$.
By Lemma \ref{lem:add_1_direction}, we have that $\|\ten T_{i,j,k}\|_2 < \kappa_1$ for $(i,j,k) \in \{(2,1,1), (1,2,1), (1,1,2)\}$.
By Lemma \ref{lem:add_2_directions}, we have that $\|\ten T_{i,j,k}\|_2 < \kappa_2$ for $(i,j,k) \in \{(2,2,1), (2,1,2), (1,2,2)\}$.
By Lemma \ref{lem:add_3_directions}, we have that $\|\ten T_{2,2,2}\|_2 < \kappa_3$.

Combining all of these bounds, we have
\begin{align*}
f(\tup) &= R(\tup) + \sum_{i,j,k} \|\ten S_{i,j,k}(\bm A_i, \bm B_j, \bm C_k) - \ten T_{i,j,k}\|_F^2\\
&< \epsilon/2 + \kappa_0^2 + 3(K^3\sigma + d\kappa_1)^2 + 3(K^2\sigma^2 + d\kappa_2)^2 + (K\sigma^3+d\kappa_3)^2\\
&< \epsilon/2 + \epsilon/2,
\end{align*}
as desired.


\end{proof}

We now sketch our algorithm in Algorithm~\ref{alg:local}. The algorithm basically tries to follow the main Lemma~\ref{lem:robustmain}. If the point has large gradient or negative eigenvalue in Hessian, we can just use any standard local search algorithm. When the point is a higher order saddle point, we use Algorithm~\ref{alg:add_direction} as in Lemma~\ref{lem:add_2_directions} or Lemma~\ref{lem:add_3_directions} to generate directions of improvements.

\begin{algorithm}
\begin{algorithmic}
\REQUIRE tensor $\ten T$, error threshold $\epsilon$
\STATE Choose thresholds $\tau_1,\tau_2$ according to Lemma~\ref{lem:robustmain}.
\REPEAT
\STATE Run a local search algorithm to find $(\tau_1,\tau_2)$-second order stationary point.
\STATE Call Algorithm~\ref{alg:add_direction} for $i,j,k=1,2$ to generate improvement directions, repeat for $O(\log 1/\epsilon)$ times.
\IF{any of the generated directions improve the function value by at least $\somega(\sigma^{15/8})$}
\STATE Move in the direction.
\STATE Break.
\ENDIF 
\UNTIL{no direction of improvement can be found}
\end{algorithmic}
\caption{Local search algorithm for Tucker decomposition\label{alg:local}}
\end{algorithm}

Now we are ready to prove Theorem~\ref{thm:robust}

\begin{proof}[Proof of Theorem~\ref{thm:robust}]
By Lemma~\ref{lem:robustmain}, for any $(\tau_1,\tau_2)$-second order stationary point, if $f \ge \epsilon$ Lemma~\ref{lem:add_2_directions} and Lemma~\ref{lem:add_3_directions} will be able to generate a direction of improvement that improves the function value by at least $\somega(\sigma^{15/8})$ with constant probability. Since the initial point has constant loss, if a direction of improvement is found for more than $\so(1/\sigma^{15/8})$ iterations, then the function value must already be smaller than $\epsilon$.

 After the repetition, the probability that we find a direction of improvement is at least $1- o(\sigma)$. By union bound, we know that with high probability for all the iterations we can find a direction of improvement.
\end{proof}

\section{Conclusion}

In this paper we showed that the standard nonconvex objective for Tucker decomposition with appropriate regularization does not have any spurious local minima. We further gave a local search algorithm that can optimize a regularized version of the objective in polynomial time. 
There are still many open problems for the optimization of the Tucker decomposition objective.  
For example, in many applications,  the low rank tensor $\ten T$ is not known exactly. We either have significant additive noise $\ten T + \ten E$, or observe only a subset of entries of $\ten T$ (tensor completion). Local search algorithms on the nonconvex objective are able to handle similar settings for matrices~\citep{chi2019nonconvex}. We hope our techniques in this paper can be extended to give stronger guarantees for noisy Tucker decomposition and tensor completion.

\bibliographystyle{plainnat}
\bibliography{ms}

\end{document}